\definecolor{cvprblue}{rgb}{0.21,0.49,0.74}
\DeclareMathOperator*{\argmax}{arg\,max}
\DeclareMathOperator*{\argmin}{arg\,min}
\newtheorem{theorem}{Theorem}[section]
\newtheorem{lemma}[theorem]{Lemma}
\newtheorem{remark}[theorem]{Remark}
\newtheorem{corollary}[theorem]{Corollary}
\newtheorem{definition}[theorem]{Definition}
\newenvironment{customboxeddefinition}
  {\begin{mdframed}[linewidth=0.5pt, linecolor=blue, backgroundcolor=white!20, roundcorner=5pt, nobreak=true] % Prevent page breaks
   \begin{definition}}
  {\end{definition}\end{mdframed}}
\newcommand\Tstrut{\rule{0pt}{2.6ex}}         % = `top' strut
\newcommand\Bstrut{\rule[-0.9ex]{0pt}{0pt}}   % = `bottom' strut
\newcommand{\R}{\mathbb{R}}
\newcommand{\Natural}{\mathbb{N}}
\newcommand{\p}{\mathbf{p}}
\newcommand{\I}{\mathbf{I}}
\newcommand{\M}{\mathbf{M}}
\newcommand{\N}{\mathbb{N}}
\newcommand{\Wmat}{\mathbf{W}}
\newcommand{\Xcal}{\mathcal{X}}
\newcommand{\Ycal}{\mathcal{Y}}
\newcommand{\Zcal}{\mathcal{Z}}
\newcommand{\Vtar}{V^{\text{tar}}}
\newcommand{\Dtar}{\mathcal{D}^{\text{tar}}}
\newcommand{\tar}{{\text{target}}}
\newcommand{\Vaux}{V^{\text{aux}}}
\newcommand{\Daux}{\mathcal{D}^{\text{aux}}}
\newcommand{\aux}{{\text{auxiliary}}}
\title{COBRA: COmBinatorial Retrieval Augmentation for Few-Shot Adaptation}
\author{Arnav M. Das$\thanks{These authors contributed equally to this work.} \hspace{1.5mm}$ \quad Gantavya Bhatt\footnotemark[1] \quad Lilly Kumari \quad Sahil Verma \quad Jeff Bilmes \\[.5ex] % Authors
University of Washington, Seattle \\[1ex]
 \texttt{\small \{arnavmd2, gbhatt2, bilmes\}@uw.edu} % Emails
}
\begin{document}
\maketitle
\begin{abstract}
Retrieval augmentation, the practice of retrieving additional data from large auxiliary pools, has emerged as an effective technique for enhancing model performance in the low-data regime. Prior approaches have employed only nearest-neighbor based strategies for data selection, which retrieve auxiliary samples with high similarity to instances in the target task. However, these approaches are prone to selecting highly redundant samples, since they fail to incorporate any notion of diversity. In our work, we first demonstrate that data selection strategies used in prior retrieval-augmented few-shot adaptation settings can be generalized using a class of functions known as Combinatorial Mutual Information (CMI) measures. We then propose COBRA (COmBinatorial Retrieval Augmentation), which employs an alternative CMI measure that considers both diversity and similarity to a target dataset. COBRA consistently outperforms previous retrieval approaches across image classification tasks and few-shot learning techniques when used to retrieve samples from LAION-2B. COBRA introduces negligible computational overhead to the cost of retrieval while providing significant gains in downstream model performance. 
\vspace{-.2in}
\end{abstract}    
\section{Introduction}
\label{sec:intro}

\begin{figure}
    \centering
    \includegraphics[width=0.44\textwidth]{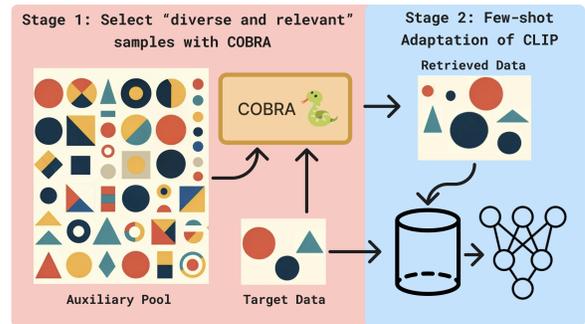}
    \caption{\small COBRA uses the target pool to select diverse and relevant samples from a large web-scale auxiliary pool. The retrieved data and the target data are then used to train a few-shot learner with a CLIP backbone.}
    \label{fig:main-figure}
    \vspace{-.23in}
\end{figure}

With the emergence of web-scale data sources, retrieval has become an immensely popular technique to improve model performance. Retrieval augmented generation (RAG) has achieved much success in natural language, where additional data is retrieved from auxiliary sources to supplement model knowledge and guide the output of large language models (LLMs) at inference time \citep{lewis2020retrieval, izacard2022atlas, shi2023replug, ram2023context}. More recently, retrieval has been leveraged to select additional weakly labeled training samples from external image-caption data sources that are relevant to a target image classification task \citep{liu2023learning, zancato2023train, wallingford2023neural}. This strategy enables practitioners to train performant models on domains where labeled datasets are scarce and small, 
% while incurring no additional labeling costs.
without incurring any extra labeling costs.

% This application of retrieval provides practitioners a means of training performant models with only small amounts of data for the target task and no additional labeling costs.
% \SV{This strategy enables practitioners to train performant models on domains where labeled datasets are scarce and small, at no additional labeling costs. }

\begin{figure*}[h!]
    \centering
    \vspace{-.1in}
    \includegraphics[width=.91\textwidth]{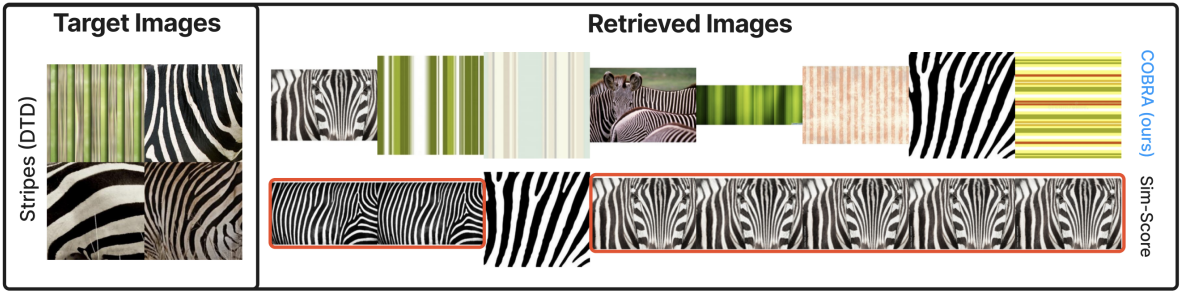}
    % \vspace{-.1in}
    \caption{\small \textit{COBRA} retrieves samples from LAION that are diverse and relevant to the target dataset. In contrast, \textit{Sim-Score} retrieves redundant samples and sometimes even exact duplicates (shown in red boxes). Additional visualizations are presented in~\Cref{appen sec: more qualitative}.} 
    \label{fig:qualitative}
    \vspace{-.2in}
\end{figure*}

    % \SV{the second example is convincing, not the first. }}
% \SV{Therefore, augmentation is usually performed by retrieving samples that are similar to the training dataset of the target task. }
% \SV{Previous work has mostly employed }

Web-scale datasets are highly heterogeneous and often contain massive amounts of data that are irrelevant to a specific downstream task \citep{gadre2023datacomp, schuhmann2022laion5b}. Therefore, it is natural for retrieval strategies to assess the relevance of each sample from an auxiliary 
retrieval
pool using some notion of similarity to the target dataset. 
To this end, past work typically considers \emph{nearest-neighbor} based retrieval strategies, which score each auxiliary sample 
from the retrieval pool
based on its similarity to a class of images in the target task, and retrieve the samples with the highest scores \citep{wallingford2023neural, liu2023learning, zancato2023train, cafo, geng2024unmetpromisesynthetictraining}. However, independently scoring the auxiliary data in a sample-wise manner neglects notions of diversity of the retrieved set and may induce redundancy. Diversity has been shown to be a key ingredient in data subset selection for many different tasks~\citep{bhatt2024experimental, bilmes2022submodularity, ash2019deep, sener2017active}, but has not been employed in the context of retrieval
augmentation. In this work, we seek to answer the following question: \emph{\textbf{ Can we improve the effectiveness of retrieval augmentation by using a diversity-aware data selection strategy?}}

% and has shown to improve model generalization~\cite{sener2017active}. 
% \arnav{what are some other uses of diversity here?}.
% However, it

% \SV{similarities} \arnav{I think we should use score, since it emphasizes that each sample is scored separately. We also use score throughout the paper so its consistent}

% We introduce a class of functions known as Combinatorial Mutual Information (CMI) functions and highlight their utility in retrieval.
% \SV{remove for the first time, just use the word novel} \arnav{I think saying novel makes it sound like we invented CMI functions which we obv didn't}

% \SV{evaluating the similarity of} \arnav{I think we should stick with scoring}

% In this work, 
We propose the use of a class of functions called Combinatorial Mutual Information (CMI) functions for retrieval. CMI functions capture the similarity between sets~\citep{bilmes2022submodularity, iyer-cmi-alt-2021} and have been employed in various contexts that require targeted data subset selection~\citep{iyer-cmi-alt-2021, kothawade2022prism, koth2021similar}. We demonstrate that CMI measures are inherently well-suited for retrieval augmentation by showing that the underlying objective used in \emph{nearest-neighbor} based retrieval strategies are instances of this function class. To address the deficiencies of \emph{nearest-neighbor} based retrieval, we propose COmBinatorial Retrieval Augmentation (COBRA) which uses an alternative CMI function that retrieves a set of similar and diverse samples. Rather than independently scoring each auxiliary sample, COBRA evaluates sets and can model interactions between selected samples. As shown in Figure~\ref{fig:qualitative}, this property allows COBRA to avoid retrieving semantically redundant images and select a set of images that well represent all samples in the target task. Crucially, the COBRA objective is submodular and therefore can be efficiently optimized with a constant factor approximation guarantee \citep{nemhauser1978analysis}. 

% \SV{what does well represent mean? even similar samples are well represented}.

% maximized with a constant factor $(1 - e^{-1})$ approximation guarantee using the greedy algorithm~\citep{nemhauser1978analysis, minoux2005accelerated}. \SV{I think this detail should come in methodology, too technical for introduction. }

In our work, we evaluate COBRA as a retrieval augmentation strategy within the few-shot setting for image classification, where a target dataset has only a few labeled images (shots) per class and the objective is to improve model performance by retrieving samples from an auxiliary pool. Our experimental pipeline (shown in~\Cref{fig:main-figure}) involves: (1) utilizing a retrieval strategy to select weakly labeled samples from the auxiliary image-caption dataset, such as LAION-2B, to augment a small image classification dataset and (2) using a few-shot adaptation strategy such as~\cite{tipadapter, clipadapter, coop, cocoop, cafo} to adapt CLIP~\citep{radford2021learning} with both the labeled and retrieved samples. Our findings highlight that COBRA consistently outperforms alternative previous retrieval strategies across various downstream/target datasets and few-shot adaptation methods, establishing the necessity of diversity in the data selection strategy. Overall, we make the following contributions: 
% \vspace{-.1in}
\begin{itemize}[itemsep=0.5ex, leftmargin=*]
  \item We propose the use of combinatorial mutual information (CMI) functions in retrieval augmentation, and demonstrate that most existing retrieval strategies are a particular instance of CMI functions.
  \item We introduce COmBinatorial Retrieval Augmentation (COBRA), an alternative CMI measure that considers similarity as well as the previously overlooked notion of diversity.
  \item We demonstrate the efficacy of COBRA in the few-shot setting and show that it consistently outperforms previous retrieval techniques across target datasets and models.
  % \item We extend COBRA beyond visual few-shot learning to demonstrate its superiority to nearest-neighbor retrieval in the setting of in-context learning with LLMs, showcasing the importance of diversity. 
\end{itemize}
% \vspace{-.1in}

\section{Background}
\label{sec: background}
% \subsection{Retrieval Augmentation}

\paragraph{Retrieval Augmentation} We first describe the problem of retrieval augmentation for discriminative classification where we are given a $\tar$ dataset ($\Dtar$) and an $\aux$ dataset ($\Daux$). $\Dtar$ consists of labeled samples and is much smaller in scale compared to $\Daux$, which is generally a pool of heterogeneous data that can come from any arbitrary source and in general has a distinct label space from $\Dtar$. For example, in vision applications where the $\tar$ dataset consists of images and corresponding labels, the $\aux$ dataset may consist of images with natural language descriptions which may or may not be semantically relevant to $\Dtar$. Formally, $\Dtar \triangleq \{(x_i, y_i) \mid x_i \in \Xcal \;, y_i \in \Ycal \}_{i=1}^m$ where we define $\Xcal$ as the domain of input (images) and $\Ycal = \{1, 2, \ldots, C\}$ as the domain of labels. Let $\Daux \triangleq \{z_i \mid z_i \in \Zcal\}_{i=1}^n$ where we define $\Zcal$ as the domain of examples in $\aux$ dataset, which are images with associated metadata. Note that $z_i \in \Daux$ are pairs but their labels need not conform to $\Ycal$. 

For notational simplicity, we jointly index $\Dtar$ and $\Daux$ using $V \triangleq \{1, 2, \ldots, m, m+1, m+2, \ldots, m+n\} = [m+n]$, and define $\Vtar \triangleq \{1, 2, \ldots, m\}$ and $\Vaux \triangleq \{m+1, m+2, \ldots, m+n\}$. For $\Daux$, given any $A\subseteq \Vaux$, we define $\Daux_A =\{z_j |\, j \in A\}$. The goal of retrieval augmentation in our setting is to find $A \subseteq \Vaux$, such that a classifier trained on $\Dtar \cup \Daux_A $ is more performant than the one trained solely on $\Dtar$. Lastly, given any matrix $\M \in \R^{n \times n}$ and two sets $A \subseteq [n]$ and $B \subseteq [n]$, $\M[A, B] \triangleq [m_{i, j}]_{\substack{i \in A \\ j \in B}}$ denotes slicing matrix $\M$'s rows and columns with $A$ and $B$.

Across different works on retrieval augmentation, including classification \citep{wallingford2023neural,zancato2023train, liu2023learning}, text generation~\citep{lewis2020retrieval, izacard2022atlas, shi2023replug, ram2023context}, and image generation~\citep{diffusion_rag} nearest-neighbor based retrieval stands out as being the most widely used strategy. Nearest-neighbor based retrieval strategies assume a similarity matrix $\Wmat \in \mathbb{R}^{(m+n) \times (m+n)}$ where each $w_{i, j}$ is some notion of similarity between examples indexed with $i$ and $j$ and aim to retrieve samples based on the optimization problem defined below.

\begin{definition}[\textit{Nearest-Neighbor Based Retrieval}] ~
\label{def: kNN}
Given a matrix $\Wmat \in \mathbb{R}^{(m+n) \times (m+n)} $ modeling the similarity between elements of $\Dtar$ and $\Daux$, $k \in \N$. Nearest-neighbor based selection aims to choose a subset satisfying the following optimization --
\begin{align}
\label{eq: knn}
    g(A; \Vtar, \Wmat) &= \sum_{j \in A} \sum_{i \in \Vtar} w_{ij} \\
    A^{\ast} &= \underset{\substack{A \subseteq \Vaux \\ |A| \leq k}}{\operatorname{argmax}} \, g(A; \Vtar, \Wmat)
\end{align}
\end{definition}

Many different notions of similarity can be used to instantiate $\Wmat$. In our experiments, we follow the work of~\cite{zancato2023train, wallingford2023neural,liu2023learning} and use CLIP encoders to featurize each sample. As these models are multimodal, we can use image-to-image similarity or text-to-image similarity to instantiate $\Wmat$. In the below, we refer to nearest-neighbor based retrieval instantiated with an image-to-image similarity matrix as \emph{Sim-Score} and nearest-neighbor based retrieval instantiated with a text-to-image similarity matrix as \emph{CLIP-score}.

Note that $g(A)$ has no incentive to select diverse samples (i.e., samples that are mutually dissimilar to each other), the reason being that each $j \in A$ has a score $\sum_{i \in \Vtar} w_{ij}$ independent of any other $j' \in A$, $j' \neq j$. Thus, $A^*$ may be highly redundant and fail to capture the full information present in the dataset. To incorporate diversity, we propose using a class of functions called \textit{Combinatorial Mutual Information (CMI) Functions} that find similar \textbf{and} diverse samples for retrieval augmentation. We first begin by providing a primer on submodular functions and CMI measures. 
We then demonstrate that $g(A; \Vaux, \Vtar)$ defined above is a limited special case of CMI (Lemma~\ref{thm: kNN_as_GCMI}), demonstrating that CMI is a natural choice for retrieval. We then give an alternate instantiation of CMI that is better suited for our problem.\looseness-1
\begin{figure}[h!]
    \centering
    % \vspace{-.3in}
    \includegraphics[width=.49\textwidth]{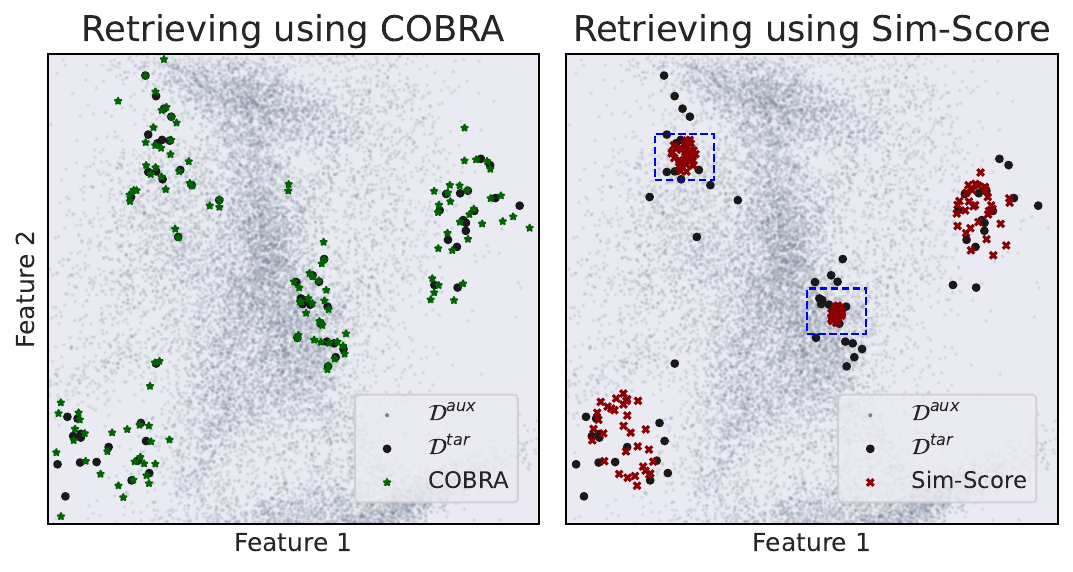}
    \vspace{-.2in}
    \caption{\textbf{2D Example} We consider a simple example where $|\Dtar| = 64$ and $|\Daux| = 25000$. From $\Daux$, we retrieve a subset of size 128 based on $\Dtar$. COBRA (left) effectively covers the target $\Dtar$, on the other hand, \emph{Sim-Score} (right) selects clumpy examples, as highlighted by the bounding boxes. Refer to \Cref{appen sec: toy example setup} for more details.}
    \label{fig:toy}
    \vspace{-.1in}
\end{figure}

% \subsection{Submodular Mutual Information Measures}
\paragraph{Submodular Functions} Submodular functions, defined over a ground set $V$, must satisfy $f(A \cup \{v\}) - f(A) \geq f(B \cup \{v\}) - f(B)$ for subsets $A \subseteq B \subseteq V$ and $v \in V \setminus B$. This diminishing returns property makes submodular functions particularly effective in modeling concepts such as coverage and diversity. Moreover, submodular functions that are nonnegative ($f(A) \geq 0, \forall A \subseteq V$) and monotone nondecreasing ($f(A) \leq f(B), \forall A \subseteq B \subseteq V$) can be efficiently maximized with a greedy algorithm (outlined in Algorithm \ref{alg: greedy} in Appendix \ref{appen sec: submodular max}) with a constant factor approximation guarantee of $1 - e^{-1}$~\citep{nemhauser1978analysis, minoux2005accelerated}. Due to their ability to model desirable properties like diversity and ease of optimization, submodular functions have become widely used in summarization~\citep{lin2011class, lin2012learning, mirzasoleiman2020coresets}, feature selection~\citep{liu2013submodular}, unlabeled data selection~\citep{wei2015submodularity, das2023accelerating, bhatt2024experimental, bhatt2024deep}, and many other applications in machine learning~\citep{bilmes2022submodularity}.

\paragraph{Combinatorial Mutual Information} In information theory, mutual information is used to quantify the information that sets of random variables contain about one another. Mathematically, given a set of random variables $\{X_1, \ldots, X_n\}$ and a joint distribution $p$, entropy is defined as $H(X_1, \ldots, X_n) = -\mathbb{E}_{X_1, \ldots, X_n \sim p} [ \log p(X_1, \ldots, X_n) ]$, and is in fact a known submodular function~\citep{fujishige2005submodular}. The mutual information between two sets of random variables indexed by $A$ and $B$ can then be expressed as $I_H (A; B) = I(X_A; X_B) = H(X_A) + H(X_B) - H(X_{A \cup B})$~\citep{Cover2006}, where $H(X_A) = H(\{X_a \mid a \in A\})$. Combinatorial mutual information (CMI) generalizes the information-theoretic notion of mutual information and can be instantiated over any submodular function $f$. Formally, CMI is defined as $I_f(A; B) = f(A) + f(B) - f(A \cup B)$ for any $A,B \subseteq V$. Intuitively, CMI measures the similarity between sets $A$ and $B$ based on some submodular information measure.

CMI has previously been applied in settings that mandate some form of targeted data subset selection~\citep{kothawade2021Similar, Karanam2022orient, iyer-cmi-alt-2021, subinfomeasures2022, bilmes2022submodularity, koth2021similar}. Within retrieval for augmentation, however, CMI has never been applied \textbf{in its full generality}.
% \sv{saying never explicitly, undermines your contribution. Say never applied. }
That is, we find that the nearest-neighbor based retrieval strategies (Def.~\ref{def: kNN}) used by~\cite{wallingford2023neural, zancato2023train} and~\cite{liu2023learning} are all just simple instances of CMI, instantiated with a graph cut function (a well known submodular function applied and studied in~\cite{jegelka2010, jegelka2011submodularity, joshi23bdata, bilmes2022submodularity}).

% \sv{if the previous approaches were not an instance of CMI, would that make COBRA bad? I don't think so, therefore, I don't think we need to stress that previous techniques were instances of CMI, we can mention it much more briefly and undertoned. }

% \paragraph{Submodular Mutual Information} While submodular functions capture the diversity of a single set, combinatorial mutual information (CMI) measures the similarity between a pair of sets $X,Y \subseteq V$ and can be instantiated with any submodular function $f$. Mathematically, CMI is defined as $I_f(X; Y) = f(X) + f(Y) - f(X \cup Y)$. 

\begin{lemma} [\textit{Graph Cut Mutual Information (GCMI)}]
\label{thm: kNN_as_GCMI}
Let $G = (V, E)$ be a graph with edge weights defined with symmetric $\Wmat \in \mathbb{R}^{(m+n) \times (m+n)} $. For any set $A \subseteq V$ of vertices, let $f(A) = \sum_{i \in A} \sum_{j \in V \setminus A} w_{ij}$ be the graph cut function. Given any two sets $A$ and $B$ such that $ A \cap B = \emptyset $,\looseness-1 
\begin{equation}
\label{eq:GCMI}
    I_{f}(A;B) = 2 \sum_{i \in A} \sum_{j \in B} w_{ij}
\end{equation}
\end{lemma}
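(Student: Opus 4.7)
The plan is a direct expansion of each term in $I_f(A;B)=f(A)+f(B)-f(A\cup B)$ using the definition of the graph cut, followed by a bookkeeping argument to see which contributions cancel. Since $A\cap B=\emptyset$, the complement $V\setminus A$ decomposes as $B\sqcup(V\setminus(A\cup B))$, and analogously $V\setminus B=A\sqcup(V\setminus(A\cup B))$, which lets us split each cut into an ``inside $A\cup B$'' part and an ``outside'' part.

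Concretely, first I would write
\begin{equation*}
f(A)=\sum_{i\in A}\sum_{j\in B}w_{ij}+\sum_{i\in A}\sum_{j\in V\setminus(A\cup B)}w_{ij},
\end{equation*}
and symmetrically
\begin{equation*}
f(B)=\sum_{i\in B}\sum_{j\in A}w_{ij}+\sum_{i\in B}\sum_{j\in V\setminus(A\cup B)}w_{ij}.
\end{equation*}
Next I would expand $f(A\cup B)=\sum_{i\in A\cup B}\sum_{j\in V\setminus(A\cup B)}w_{ij}$, and split the outer sum over $A\cup B$ into the disjoint pieces $A$ and $B$, yielding
\begin{equation*}
f(A\cup B)=\sum_{i\in A}\sum_{j\in V\setminus(A\cup B)}w_{ij}+\sum_{i\in B}\sum_{j\in V\setminus(A\cup B)}w_{ij}.
\end{equation*}
The key observation is that the two ``outside'' contributions in $f(A)+f(B)$ are exactly the two terms in $f(A\cup B)$, so they cancel when forming $I_f(A;B)$.

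Finally, using symmetry of $\Wmat$ (which is assumed since the graph is undirected), the two remaining pieces $\sum_{i\in A}\sum_{j\in B}w_{ij}$ and $\sum_{i\in B}\sum_{j\in A}w_{ij}$ coincide, giving the claimed factor of $2$. There is no real obstacle here; the only place one could slip is in the decomposition of $V\setminus(A\cup B)$, where one must be careful that $A$ and $B$ being disjoint guarantees that no $w_{ij}$ with $i\in A,j\in B$ appears in the ``outside'' sums, so the cancellation is clean and no additional terms survive.
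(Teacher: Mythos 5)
Your proposal is correct and follows essentially the same route as the paper's own proof: split each cut via $V\setminus A = B \cup (V\setminus(A\cup B))$ and $V\setminus B = A \cup (V\setminus(A\cup B))$, cancel the ``outside'' contributions against $f(A\cup B)$, and invoke symmetry of $\Wmat$ to merge the two remaining cross terms into the factor of $2$. No gaps to report.
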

\begin{proof}
    Please refer to \Cref{sec: appen background and lemma} for the proof.
\end{proof}
\begin{corollary}(\emph{Sim-Score} as GCMI)
\label{thm: knn}    
For $B = \Vtar$ and similarity matrix $\Wmat$ defined as per definition~\ref{def: kNN}, any optimization done over \Cref{eq: knn} reduces to optimizing over GCMI \Cref{eq:GCMI}.
\end{corollary}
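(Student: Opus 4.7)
The plan is to obtain the corollary as an immediate specialization of Lemma~\ref{thm: kNN_as_GCMI}, followed by the elementary observation that scaling an objective by a positive constant does not change its argmax over a fixed feasible region. So the proof will have three steps: (i) verify that the hypotheses of Lemma~\ref{thm: kNN_as_GCMI} are satisfied when we take $B = \Vtar$ and range $A$ over subsets of $\Vaux$; (ii) plug into \Cref{eq:GCMI} to express $I_f(A;\Vtar)$ in terms of $g(A;\Vtar,\Wmat)$; (iii) conclude argmax-equivalence.

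For (i), recall that the index set is constructed in the background section so that $\Vtar = \{1,\dots,m\}$ and $\Vaux = \{m+1,\dots,m+n\}$ are disjoint. Hence any feasible $A \subseteq \Vaux$ in the optimization of Definition~\ref{def: kNN} satisfies $A \cap \Vtar = \emptyset$, which is exactly the disjointness hypothesis of Lemma~\ref{thm: kNN_as_GCMI}. The symmetry requirement on $\Wmat$ is also assumed in Definition~\ref{def: kNN} (and matches the cosine-similarity instantiations used later in the paper). For (ii), Lemma~\ref{thm: kNN_as_GCMI} applied with $B = \Vtar$ yields
\begin{equation*}
    I_{f}(A;\Vtar) \;=\; 2\sum_{i \in A}\sum_{j \in \Vtar} w_{ij} \;=\; 2\sum_{j \in A}\sum_{i \in \Vtar} w_{ij} \;=\; 2\,g(A;\Vtar,\Wmat),
\end{equation*}
where the middle equality uses the symmetry $w_{ij}=w_{ji}$ together with a relabeling of the summation indices. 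For (iii), since the feasibility set $\{A \subseteq \Vaux : |A| \le k\}$ does not depend on the objective and $2>0$, any maximizer of $g(\,\cdot\,;\Vtar,\Wmat)$ is also a maximizer of $I_f(\,\cdot\,;\Vtar)$ and vice versa, giving the desired reduction.

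There is no real obstacle here: the content of the corollary is essentially bookkeeping once Lemma~\ref{thm: kNN_as_GCMI} is in hand. The only minor points worth being explicit about are the disjointness of $\Vtar$ and $\Vaux$ (needed to invoke the lemma) and the symmetry of $\Wmat$ (needed to identify the double sum in \Cref{eq:GCMI} with the one in \Cref{eq: knn}). Both are already built into the paper's setup, so the proof is a one-line application of the lemma plus the constant-factor remark.
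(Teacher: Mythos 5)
Your proposal is correct and matches the paper's intended argument: the corollary is treated as an immediate specialization of Lemma~\ref{thm: kNN_as_GCMI} with $B=\Vtar$, using the disjointness of $\Vtar$ and $\Vaux$ and the fact that the factor of $2$ (and the index relabeling under symmetry of $\Wmat$) leaves the argmax unchanged. Your only addition is making explicit that symmetry of $\Wmat$ must be read into Definition~\ref{def: kNN} from the lemma's hypotheses, which is a fair and harmless clarification.
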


Similarly, we can reduce the commonly used CLIP-score retrieval strategy to an instantiation of GCMI.  

\begin{corollary}(\emph{CLIP-Score} as GCMI)
\label{thm: knn_clip}    
Let the similarity matrix $\Wmat$ be such that we use \textbf{text-embeddings} of $\Dtar$ (obtained using templates \cite{radford2021learning}) and image representations of $\Daux$ for generating the similarity between $\Vtar$ and $\Vaux$ (i.e., $\Wmat[\Vtar, \Vaux]$). Then, maximizing \emph{CLIP-score} reduces to maximizing over GCMI \Cref{eq:GCMI} with $B=\Vtar$.
\end{corollary}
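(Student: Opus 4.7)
The plan is to observe that this corollary is a near-mirror of the preceding one (\emph{Sim-Score} as GCMI), and that the argument follows the same template: identify the nearest-neighbor objective with GCMI up to a positive multiplicative constant, so that the two argmaxes coincide. The only substantive difference is how the relevant entries of $\Wmat$ are populated: the block $\Wmat[\Vtar, \Vaux]$ now contains inner products between CLIP \emph{text} embeddings of target samples (generated from prompt templates) and CLIP \emph{image} embeddings of auxiliary samples, instead of image-to-image inner products.

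First, I would check the hypotheses of \Cref{thm: kNN_as_GCMI}. Since $\Vtar = [m]$ and $\Vaux = [m+n] \setminus [m]$ are disjoint by construction, the condition $A \cap B = \emptyset$ is automatically met whenever $A \subseteq \Vaux$ and $B = \Vtar$. Next I would explicitly define $\Wmat$ on the full ground set $V = \Vtar \cup \Vaux$: on the cross block $\Wmat[\Vtar, \Vaux]$, set $w_{ij}$ to be the CLIP text-to-image similarity; symmetrize by setting $\Wmat[\Vaux, \Vtar] = \Wmat[\Vtar, \Vaux]^\top$; and fix the diagonal blocks $\Wmat[\Vtar,\Vtar]$ and $\Wmat[\Vaux,\Vaux]$ arbitrarily (e.g., text-text and image-image similarities), since they do not enter the cross-sum that appears below. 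Applying \Cref{thm: kNN_as_GCMI} with $A \subseteq \Vaux$ and $B = \Vtar$ yields
\begin{equation*}
I_f(A; \Vtar) \;=\; 2 \sum_{i \in A} \sum_{j \in \Vtar} w_{ij} \;=\; 2 \sum_{j \in A} \sum_{i \in \Vtar} w_{ij} \;=\; 2\, g(A; \Vtar, \Wmat),
\end{equation*}
where the middle equality uses the symmetrization of $\Wmat$. Since $\argmax$ is invariant under multiplication by a positive constant and the cardinality constraint $|A| \leq k$ is untouched, the two maximizers coincide.

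The main subtlety to handle carefully is the symmetry requirement that \Cref{thm: kNN_as_GCMI} places on $\Wmat$. One might worry that ``text-to-image'' similarity is intrinsically directional and therefore incompatible with a symmetric graph-cut setup. The resolution is that the CLIP score $w_{ij} = \langle t_i, m_j \rangle$ is a single scalar inner product in a shared embedding space, so reading the edge as $i \to j$ versus $j \to i$ returns the same value; hence the symmetrization $w_{ji} := w_{ij}$ on the cross block is consistent and loses no information. Once this is in place, the rest of the argument is essentially a restatement of \Cref{thm: knn}, and the corollary follows.
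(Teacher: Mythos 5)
Your proposal is correct and matches the paper's (implicit) argument: the corollary is stated as an immediate consequence of Lemma~\ref{thm: kNN_as_GCMI}, using exactly the identification $I_f(A;\Vtar) = 2\,g(A;\Vtar,\Wmat)$ for disjoint $A \subseteq \Vaux$ and $B = \Vtar$, with the argmax unaffected by the factor of $2$. Your extra care about symmetrizing the cross block and noting that the diagonal blocks are irrelevant is consistent with the paper's own remark that only $\Wmat[\Vtar,\Vaux]$ matters.
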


\begin{remark}
    All the methods above only care about $\Wmat[\Vtar, \Vaux]$, a submatrix of $\Wmat$ sliced using $\Vtar$ and $\Vaux$, making the remaining entries of the matrix non-important.
\end{remark}

The above lemma and corollaries motivates studying CMI in its full generality as a natural framework for retrieval.

\section{COmBinatorial Retrieval Augmentation}
\label{sec: methods}
% \subsection{}
% COBRA is a retrieval procedure comprised of jointly optimizing for Submodular Mutual Information using Facility Location under soft class balancing constraints with an optional emphasis on individual datapoint's quality scores. We describe each of them in the following - 

COBRA is a retrieval procedure that uses a submodular objective which is a combination of: (1) a Facility Location function based combinatorial Mutual Information function (FLMI) (2) a submodular function that encourages the retrieved set to be class balanced and (3) an optional function to score the quality of each sample. We describe each of them in detail below.

% \sv{where quality refers to ...}

% Since FLMI is submodular, we can still use a greedy algorithm to maximize it with a constant factor approximation guarantee~\cite{kothawade2022prism,iyer-cmi-alt-2021}. Intuitively, during the greedy stage, samples $i \in V$ which are not relevant with $\Dtar$ are likely to have $ \max_{j \in \Vtar} w_{i, j} \approx 0$ therefore likely to vanish $\min{(\max_{j \in A} w_{i, j}, \max_{j \in \Vtar} w_{i, j})}$. Hence, those samples will have no incentive to maximize $I_{FL}(A; V, \Wmat)$. On the other hand, samples that are relevant to $\Dtar$ are likely to be such that $\min{(\max_{j \in A} w_{i, j}, \max_{j \in \Vtar} w_{i, j})} = \max_{j \in A} w_{i, j}$, which is maximized by selecting diverse samples. Therefore, it is intuitive that maximizing FLMI selects samples from $\Daux$ that are semantically similar to the target $\Dtar$ while being diverse within $\Daux$. This property is useful since it ameliorates the redundancy issue with sets that are selected by GCMI functions (\Cref{eq:GCMI}), and is also demonstrated in \Cref{fig:toy}.

\paragraph{FLMI} The facility location (FL) function is a well-known submodular function and is effective at modeling diversity. In~\Cref{eq: fl}, every \emph{client} $i \in V$ must be represented by a \emph{facility} in $A \subseteq V$, which is chosen to be the element $j \in A$ closest to $i$ based on the similarity matrix $\Wmat$. In contrast, FLMI (Def.~\ref{def: FLMI}) is the CMI function defined on FL and captures both diversity and relevance. Intuitively, samples $i \in V$ which are not relevant to $\Dtar$ ($ \max_{j \in \Vtar} w_{i, j} \approx 0$) will not contribute to increasing $I_{FL}(A; V, \Wmat)$, since $\min{(\max_{j \in A} w_{i, j}, \max_{j \in \Vtar} w_{i, j})} = \max_{j \in \Vtar} w_{i, j}$. On the other hand, samples that are relevant to $\Dtar$ are likely to be such that $\min{(\max_{j \in A} w_{i, j}, \max_{j \in \Vtar} w_{i, j})} = \max_{j \in A} w_{i, j}$, which is maximized by selecting diverse samples (similar to~\Cref{eq: fl}). Thus, maximizing FLMI selects samples from $\Daux$ that are semantically similar to the target $\Dtar$ while being diverse within $\Daux$. This property is useful since it ameliorates the redundancy issue with sets that are selected by GCMI functions (\Cref{eq:GCMI}), and is also demonstrated in \Cref{fig:toy}. Since FLMI is submodular, we can still use a greedy algorithm to maximize it with a constant factor approximation guarantee~\citep {kothawade2022prism,iyer-cmi-alt-2021}.

% nhibit with the $\min$ operator in $\min{(\max_{j \in A} w_{i, j}, \max_{j \in \Vtar} w_{i, j})}$. Hence,
\begin{definition}[\textit{FLMI}]
\label{def: FLMI}
Given a matrix $\Wmat = [w]_{i, j} \in \R_{+}\cup \{0\} $ modeling the similarity between elements of $\Dtar$ and $\Daux$. For any arbitrary subset $A \subseteq V$, a facility location function $f$ is defined as\looseness-1
\begin{equation}
\label{eq: fl}
f(A;V, \Wmat) = \sum_{i \in V} \max_{\substack{j \in A}} w_{i, j}
\end{equation}
Moreover, $f$ is submodular in $A$. For any $A \subseteq \Vaux$, FLMI is defined as the CMI of the Facility Location function --  
\begin{equation}
\label{eq: flmi}
    I_{FL}(A; V, \Wmat) \triangleq \sum_{i \in V} \min{(\max_{j \in A} w_{i, j}, \max_{j \in \Vtar} w_{i, j})}, 
\end{equation}
where $V = \Vtar \cup \Vaux$.  Moreover, $I_{FL}(A; \Vtar)$ is also submodular in $A$.

\end{definition}

% We provide further intuition to \cite{johnson2019billion} property in the appendix and showcase it using a 2-D dataset \arnav{what is this last line?}. 

% Similar to existing works \citep{wallingford2023neural, zancato2023train, } 

% \footnote{Pseudolabels can be obtained from captions in the auxilliary pool.}
% as this translates to maximizing a submodular function under a partition matroid constraint
\paragraph{Soft Class Balancing Constraint} Since we primarily consider classification tasks, it is also desirable for the set of retrieved samples from $\Daux$ to be reasonably class-balanced. While submodular functions may be maximized under hard class balance constraints, forcing the FLMI function to select a perfectly class-balanced set of samples can decrease performance since different classes in $\Daux$ will contain varying amounts of information and different levels of noise.  To this end, we encourage the retrieved sets to be class-balanced \emph{softly} using another submodular function, as shown in Lemma \ref{lemma: class_balancing}.

% the set of retrieved samples need not be perfectly class balanced in our case. When using web-scale datasets such as LAION-2B, 

% When the notion of a class is noisy (this can be due to wrong pseudo labels), this constraint may lead to irrelevant samples to promote diversity. To this end, we use an informal result from \cite{bilmes2017deep} to enforce this constraint \emph{softly}, as shown in lemma \ref{lemma: class_balancing}.

\begin{lemma}[\textit{Soft Class Balancing}]
\label{lemma: class_balancing}
Let $h : V \to [C]$ map any image (indexed using V) to the corresponding (pseudo)label among C classes. For any subset $A \subseteq V$, define the count for \textit{u}-th class in set A as $m_u(A) \triangleq \sum_{a \in A} \I[h(a) = u]$ and normalized count as $\hat{p}_u(A) \triangleq m_u(A)/|A|$. Further denote $\hat{\p}(A) = (\hat{p}_1(A), \hat{p}_2(A), \ldots, \hat{p}_C(A))$ the empirical probability based on normalized counts. For any given probability distribution $\p$ defined over C classes, and $k \in \Natural$, we have the following -
\vspace{-.25in}
\begin{equation}
    \argmin_{\substack{A \subseteq V \\ |A| = k}}\mathbb{D}_{\text{KL}} \left( \p \mid \mid \hat{\p}(A)\right) = \argmax_{\substack{A \subseteq V \\ |A| = k}} \sum_{u = 1}^C p_u \log{(m_u(A))}
\end{equation}
In fact for $\p = \left(\frac{1}{C}, \ldots, \frac{1}{C}\right)$, that is, uniform distribution over each class label, maximizing $\sum_{u = 1}^C \frac{\log{(m_u(A))}}{C}$ is equivalent to finding an $A \subseteq V$, $|A|=k$ that is class balanced. 
\end{lemma}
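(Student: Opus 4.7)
The plan is to prove this by expanding the KL divergence using its definition and stripping off terms that do not depend on $A$. Specifically, since $|A| = k$ is held fixed and $p$ is a given distribution, several pieces of the expansion are constants with respect to the optimization variable $A$, and those can be dropped.

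First I would write
\begin{equation*}
\mathbb{D}_{\text{KL}}(\p \,\|\, \hat{\p}(A)) = \sum_{u=1}^C p_u \log p_u - \sum_{u=1}^C p_u \log \hat{p}_u(A),
\end{equation*}
where the first sum, the negative entropy of $\p$, does not depend on $A$. Next I would substitute $\hat{p}_u(A) = m_u(A)/k$ (using $|A| = k$) into the second sum to obtain
\begin{equation*}
\sum_{u=1}^C p_u \log \hat{p}_u(A) = \sum_{u=1}^C p_u \log m_u(A) - \log k,
\end{equation*}
where again $\log k$ does not depend on $A$ (and $\sum_u p_u = 1$ has been used). Combining these, minimizing $\mathbb{D}_{\text{KL}}(\p \,\|\, \hat{\p}(A))$ over $A$ with $|A| = k$ is equivalent to maximizing $\sum_{u=1}^C p_u \log m_u(A)$, which is exactly the claimed identity of argmax/argmin sets. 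A small subtlety to flag is the treatment of classes with $m_u(A) = 0$: such a choice of $A$ makes the KL divergence infinite whenever $p_u > 0$, so by convention $\log 0 = -\infty$ enforces that any minimizer assigns at least one sample to every class in the support of $\p$, consistent with the right-hand maximization.

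For the second claim, with $\p$ uniform we have $p_u = 1/C$ for every $u$, so the objective reduces to $\tfrac{1}{C}\sum_{u=1}^C \log m_u(A)$. I would then invoke the strict concavity of $\log$: subject to the integer constraint $\sum_u m_u(A) = k$ with $m_u(A) \geq 0$, Jensen's inequality (or equivalently the AM-GM inequality applied to $m_1(A), \dots, m_C(A)$) gives
\begin{equation*}
\frac{1}{C}\sum_{u=1}^C \log m_u(A) \leq \log\!\left(\frac{1}{C}\sum_{u=1}^C m_u(A)\right) = \log(k/C),
\end{equation*}
with equality when all $m_u(A)$ are equal. Hence any maximizer of $\sum_u \log m_u(A)$ must make the class counts as equal as possible (exactly $k/C$ when $C \mid k$, otherwise differing by at most one by a standard rounding argument for integer allocations), which is the definition of class-balanced.

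The main obstacle is conceptual rather than technical: there are essentially no nontrivial calculations, and the only care needed is in handling the $\log 0$ boundary case and in phrasing the integer-allocation rounding for the uniform case. The rest is a direct rewriting that isolates the $A$-dependent term of the KL divergence.
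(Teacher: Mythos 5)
Your proof is correct and follows essentially the same route as the paper's: expand the KL divergence, substitute $\hat{p}_u(A) = m_u(A)/|A|$, and discard the $A$-independent terms $-H(\p)$ and $\log|A|$ to reduce the argmin to the stated argmax. Your additional remarks on the $\log 0$ boundary case and the Jensen/AM--GM argument for the uniform-$\p$ claim go slightly beyond what the paper writes down (it asserts the second claim without proof), but they are consistent with and complementary to the paper's argument.
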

\begin{proof}
    Please refer to \Cref{sec: appen background and lemma} for the proof.
\end{proof}

% What are the implications when auxiliary data include relevant but noisy images for specific target data? For example, while target training or test images may focus on flower close-ups, the auxiliary data could include less relevant images like flowers in pots, potentially undermining the generalization performance of target test data. To counteract this, a simple quality function ($q$) such as CLIP-score or Sim-Score could be employed alongside the existing optimization goals to further enhance quality and relevance of the resulting summary. This leads us toward the final objective --
\paragraph{Quality Score (Optional)} Auxiliary datasets such as LAION-2B contain many low-quality and noisy images, that could hamper performance if retrieved. To avoid this, a simple quality function ($q$) such as CLIP-score or Sim-Score could be employed alongside the existing optimization goals to further enhance the quality and relevance of the resulting summary. This leads us toward the final objective:

\begin{customboxeddefinition}[\textbf{COBRA Objective}] Using the same notations from definition \ref{def: FLMI} and lemma \ref{lemma: class_balancing}, and given any $\lambda \geq 0$ (soft-balancing weightage), $\mu \geq 0$ (an optional weightage to control the quality of selected points), $q(A) = \sum_{a \in A} q(a)$ (an optional quality function for selected points) COBRA optimizes for retrieving semantically similar samples to $\Dtar$ from $\Daux$ while maintaining a soft class balance, by solving the following optimization problem: 
\vspace{-.1in}
\begin{equation}
\begin{split}
             \argmax_{\substack{A \subseteq V \\ |A| = k}} \, & \mu \cdot q(A)  + (1-\mu) \cdot \\
           & \left\{\sum_{i \in V} \min{\left(\max_{j \in A} w_{i, j}, \max_{j \in \Vtar} w_{i, j}\right)} \right. \\
           & \left. + \lambda \sum_{u = 1}^C \frac{\log{(1 + m_u(A))}}{C} \right\}
\end{split}
\end{equation}
\end{customboxeddefinition}
\section{Experiments}
\label{sec: experiments}

% \subsection{Experimental Setup}
In all our experiments, we employ the following procedure:
\begin{enumerate}[leftmargin=*,labelindent=0em,partopsep=-2pt,topsep=1pt,itemsep=2pt]
    \item \textbf{Collect Target Dataset}: We collect a small target dataset by sampling a standard image classification dataset uniformly at random, retaining 1-16 images per class.
    \item \textbf{Retrieval from LAION-2B}: We collect a set of auxiliary samples by doing the following.
        \begin{itemize}
            \item \textbf{Initial text-based filtering}: We follow the prefiltering step proposed by~\cite{wallingford2023neural} and use string matching to discard images with captions that do not contain the name of any class name in the target dataset. This stage circumvents the need to compute features for the full auxiliary pool while filtering out images that are unlikely to contain any semantically relevant information. We then use the class name contained in the caption as the label for the image.
            \item \textbf{Retrieve}: We use a retrieval strategy to select a fixed budget of samples from the filtered pool that are relevant to the target dataset. We retrieve approximately 16 images per class.
        \end{itemize}
    \item \textbf{Apply Few-Shot Adaptation Techniques}: We start with a CLIP model with a ViT-B/16 backbone (unless otherwise indicated), and use a few-shot adaptation strategy such as~\citep{tipadapter,clipadapter} to train on the target and retrieved samples. 
\end{enumerate}

\subsubsection*{Baseline Methods}
We briefly describe the alternative retrieval strategies that are compared against COBRA. All strategies that require similarity computations use representations extracted with a CLIP ViT-B/16 unless otherwise specified. 
\vspace{-.1in}
\paragraph{Sim-Score:} For a given auxiliary sample, we compute the cosine similarity to each target sample in the same class. The sum of the similarities is used as a score, and the samples with the highest scores are retrieved. This approach is the most ubiquitous retrieval strategy and has been employed in the context of VLM's in~\cite{wallingford2023neural, zancato2023train, geng2024unmetpromisesynthetictraining}. \vspace{-.15in}
\paragraph{CLIP-score:} Using a prompt template from~\citep{radford2021learning}, we craft a text prompt for each class in the target dataset. We then measure the cosine similarity between the CLIP representation of the class prompt and the representation for each auxiliary image. Auxiliary samples with the highest scores per class are retrieved~\cite {wallingford2023neural, geng2024unmetpromisesynthetictraining}.
\vspace{-.15in}
\paragraph{Random:} We select auxiliary samples uniformly at random from each class. This is a naive strategy that is prone to choosing noisy and irrelevant samples.
\vspace{-.15in}
\paragraph{SDXL-Aug} Leveraging text-to-image models to generate additional training data has become a technique of recent interest in vision~\citep{bansal2023leaving, cafo, geng2024unmetpromisesynthetictraining}. We therefore craft a prompt for each class in the target dataset, feed it to Stable Diffusion XL~\citep{podell2023sdxl}, and create a large pool of 105 generated images per class. We then use the CLIP-score to select the highest quality samples (as done in~\cite{cafo}) and add them to the target dataset for training. This was used as a retrieval strategy in~\cite{cafo}, albeit with a far weaker diffusion model.
\vspace{-.15in}
\paragraph{No Retrieve:} We also consider training only on the target task, without retrieving any additional samples from the auxiliary pool. This serves as a lower bound.

\subsubsection*{Other Implementation Details}
COBRA requires computing a pairwise similarity matrix ($\Wmat$) which does not need any constraint other than $w_{i, j} \geq 0$. Therefore, we opt to use sparse matrices instead of computing the full matrix and instantiate them by computing the $k$ nearest neighbors for each image using FAISS~\citep{johnson2019billion}. It also helps to avoid saturation problems in the facility location function, as highlighted in~\cite{bhatt2024experimental}, and also significantly accelerates this process. This process takes approximately 10 minutes on a single 80G NVIDIA A100 GPU even at our experimental scale, and is only a one-time cost. Greedy selection typically takes under a minute using~\cite{bilmes2025-submarine}. Few-shot training of CLIP models is also done on a single 80G NVIDIA A100 GPU and takes approximately 1 hour for a full training. We defer the discussion of hyperparameters to Appendix \ref{appen sec: hyperparameters} and \ref{appen sec: sensitivity}.

\begin{figure}[htp]
    \centering
    \vspace{-.1in}
    \includegraphics[scale=0.4]{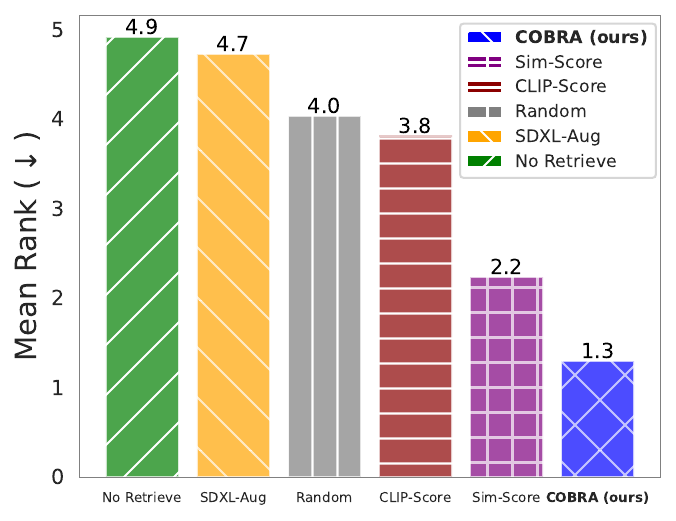}
    \caption{\textbf{Aggregated Ranking ($\downarrow$)} Average ranking of each retrieval strategy (lower is better) across  different levels of data scarcity, six datasets, and three random seeds. In over 90 experimental settings, COBRA generally outperforms any baseline we test.\looseness-1}
    \label{fig:ranking}  
    \vspace{-.1in}
\end{figure}

\begin{figure*}[htp!]
    \centering
    \begin{subfigure}{.3\textwidth}
        \includegraphics[width=\textwidth]{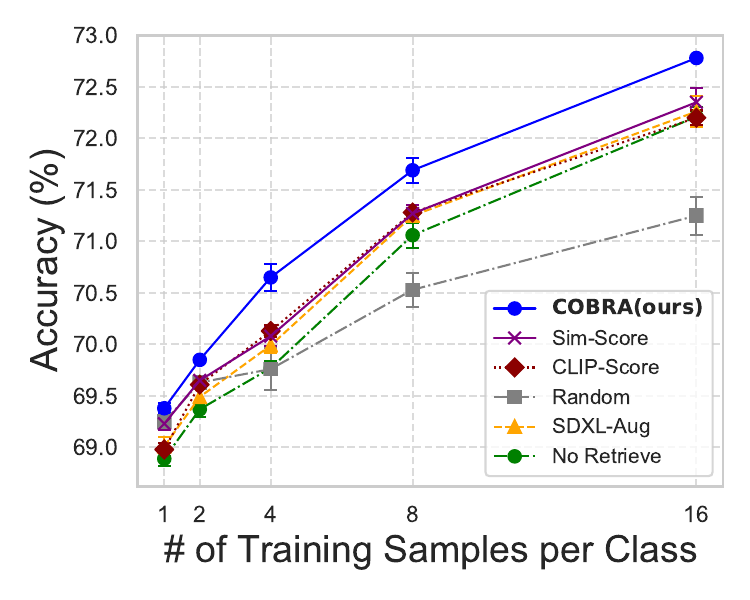}
        \caption{Accuracy on Imagenet}
    \end{subfigure}\quad
    \begin{subfigure}{.3\textwidth}
        \includegraphics[width=\textwidth]{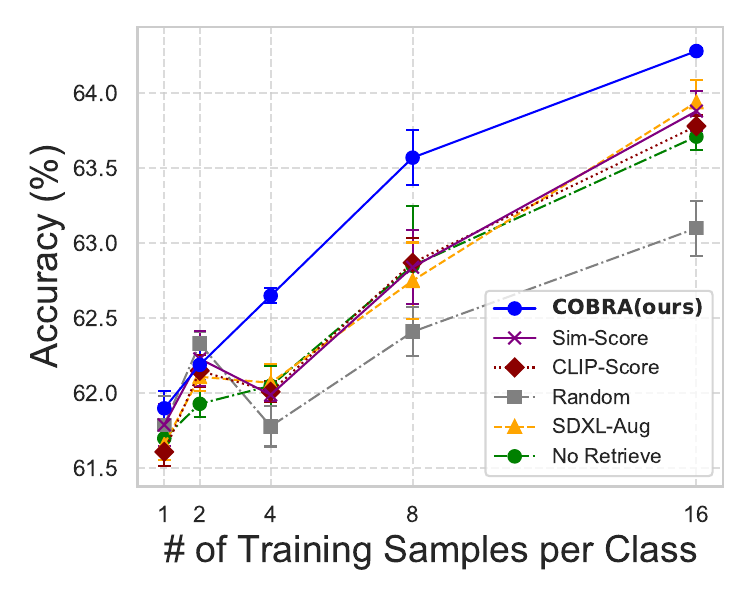}
        \caption{Accuracy on Imagenet-V2}
    \end{subfigure}\quad
    \begin{subfigure}{.3\textwidth}
        \includegraphics[width=\textwidth]{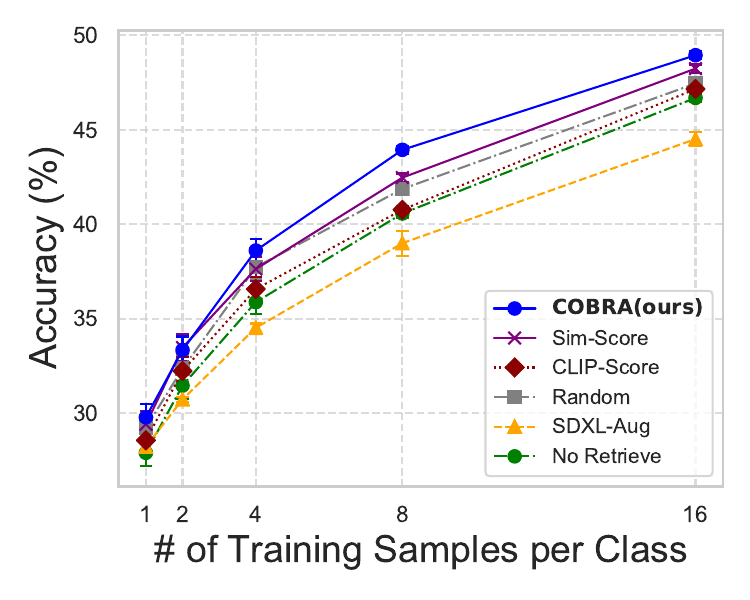}
        \caption{Accuracy on FGVC-Aircraft}
    \end{subfigure}
    
    % \vspace{0.1cm}
    
    \begin{subfigure}{.3\textwidth}
        \includegraphics[width=\textwidth]{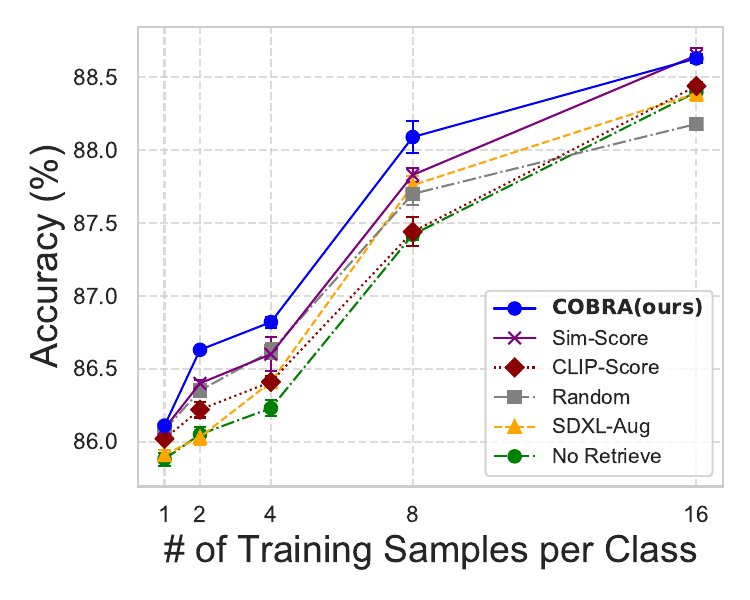}
        \caption{Accuracy on Food-101}
        \label{fig:food}
    \end{subfigure} \quad
    \begin{subfigure}{.3\textwidth}
        \includegraphics[width=\textwidth]{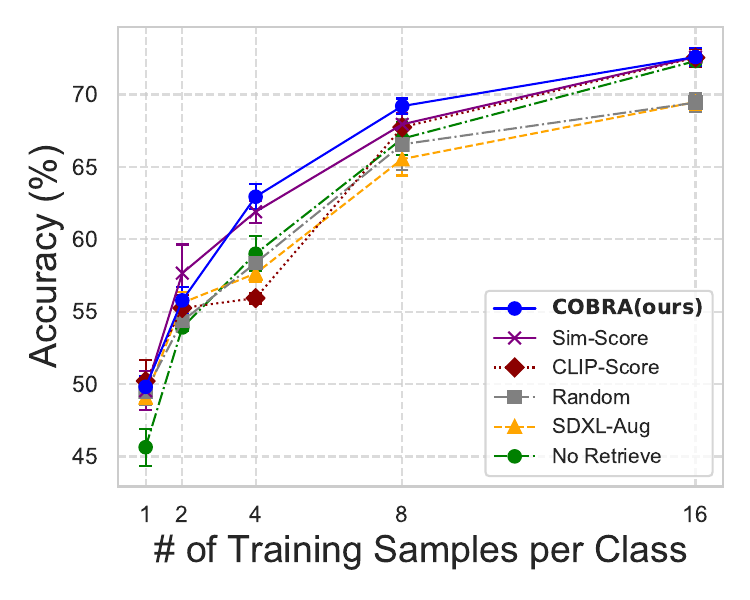}
        \caption{Accuracy on DTD}
    \end{subfigure} \quad
    \begin{subfigure}{.3\textwidth}
        \includegraphics[width=\textwidth]{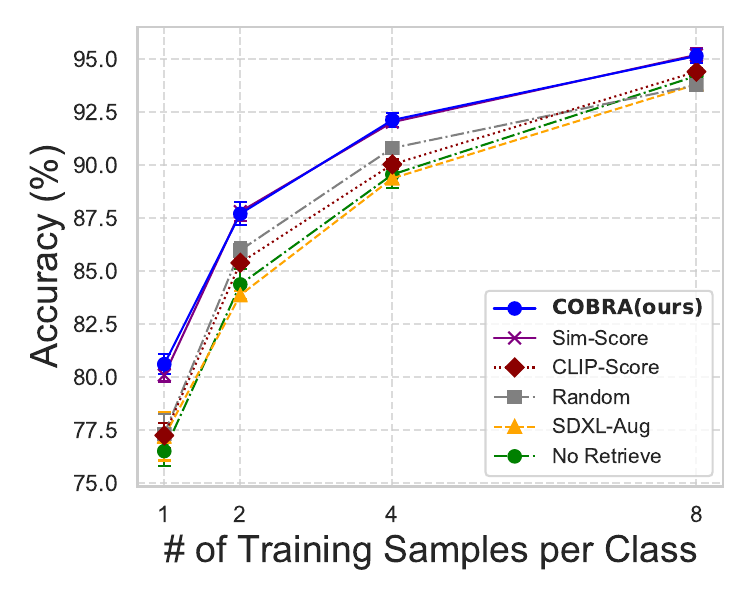}
        \caption{Accuracy on Flowers-102}
    \end{subfigure}
    
    \caption{\small \textbf{Results Across Target Datasets} We compare various retrieval strategies over LAION-2B as $\Daux$ using Tip-Adapter-F~\citep{tipadapter} to adapt CLIP to a small target dataset. We find that COBRA generally outperforms all other retrieval strategies, which is also reflected in mean-rank analysis in \Cref{fig:ranking}. Confidence intervals are based on standard errors computed over three trials.} 
    \label{fig:main_results}
    \vspace{-.15in}
\end{figure*}

\subsection{Main Results}
\paragraph{Performance across datasets} In~\Cref{fig:main_results}, we evaluate all retrieval strategies and consider several target tasks consisting of subsets from Imagenet-1k~\citep{deng2009imagenet}, Imagenet-V2~\citep{imagenetv2}, Flowers102~\citep{flowers102}, FGVC-Aircraft~\citep{fgvc_aircraft}, DTD~\citep{dtddataset}, and Food101~\citep{food101}. Note that, Imagenet-V2 is only a test set, so we report the performance of models trained on Imagenet-1k. Generally, we find that COBRA outperforms alternative retrieval strategies across tasks. On datasets with high intraclass diversity such as Imagenet, we find that retrieving auxiliary samples with COBRA provides the highest relative benefits. On Flowers102, we observe that there is no statistically significant difference between COBRA and \emph{Sim-Score}. We speculate that this is because both the test and train sets for Flowers102 are \textit{very homogeneous}, diminishing the utility of diversity for retrieval in this setting (see~\Cref{appen sec: qualitative flowers}). However, COBRA and \emph{Sim-Score} both outperform all other baseline strategies we consider. This is further reflected by our \textbf{\textit{average ranking analysis}} from Fig.\ref{fig:ranking}, in which we find the rank of each retrieval method for every shot setting and average it across every dataset. Since this is a ranking metric, a \emph{smaller} number is desirable, again showing COBRA is a strong candidate for this task. See~\Cref{app:nlp} for additional results on language tasks. 

% Upon analysis, we observe that baselines such as \emph{Sim-Score} often retrieve highly redundant examples, as highlighted in \Cref{fig:qualitative}, where for two datasets such as Imagenet and DTD, sim score underperforms by retrieving similar-looking images (hoopskirt and zebra for stripe pattern, respectively), while COBRA fetches examples that are diverse looking hoopskirts and stripe patterns, not restricted to zebra. Our qualitative findings are further supported by the toy experiment we considered in \Cref{fig:toy} 

\paragraph{Performance across few-shot algorithms}  Since the choice of retrieval algorithm is orthogonal to the choice of few-shot adaptation strategy, we evaluate the efficacy of COBRA in selecting additional training samples for a simple linear model~\citep{radford2021learning}, Clip-Adapter~\citep{clipadapter}, Tip-Adapter-F~\citep{tipadapter}, and CaFo~\citep{cafo} in Fig.~\ref{fig:fewshot_results}. These techniques all add a few learnable parameters on top of a pretrained CLIP image encoder and update them while keeping the CLIP encoders frozen. The results demonstrate a consistent trend in the relative performance of retrieval strategies across the few-shot adaptation techniques, with COBRA as the consistent winner. 

\paragraph{Impact of synthetic training samples} Interestingly, our results in Fig.~\ref{fig:main_results} and Fig.~\ref{fig:fewshot_results} also demonstrate that retrieving from an auxiliary pool using a proper retrieval strategy is far more effective for training than using synthetically generated images. Despite using the strongest available generative model, we find that SDXL-Aug never outperforms the best retrieval strategy for a given task (COBRA). We speculate that this is due to (1) outputs produced by even the most powerful generative models tend to contain highly unrealistic artifacts \citep{cao2024synartifact} and (2) diffusion models tend to generate sets of images with very low diversity \citep{zameshina2023diverse}. A more comprehensive study evaluating the effectiveness of using generated images as training data is left for future work. See~\Cref{appen sec: more qualitative} for qualitative examples.

\begin{figure*}[h!]
    \centering
    \begin{subfigure}{.23\textwidth}
        \includegraphics[width=\textwidth]{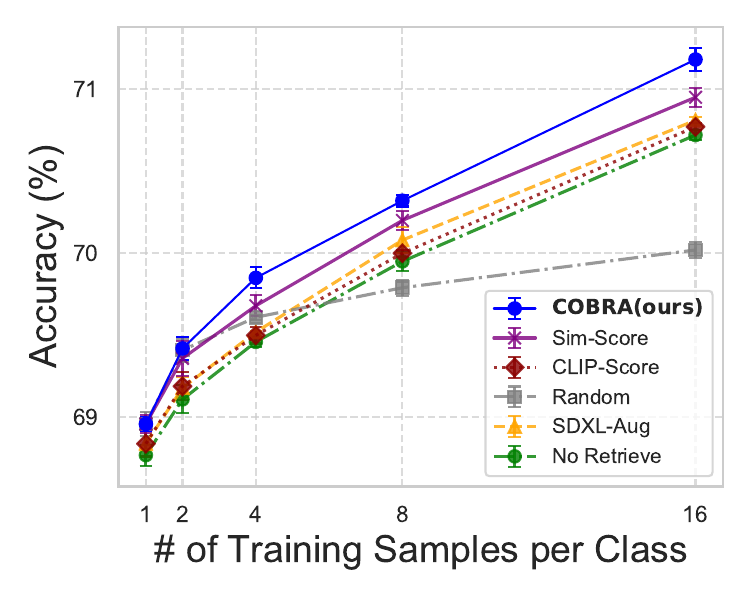}
        \caption{Acc. with Linear Probe}
    \end{subfigure}
    \begin{subfigure}{.23\textwidth}
        \includegraphics[width=\textwidth]{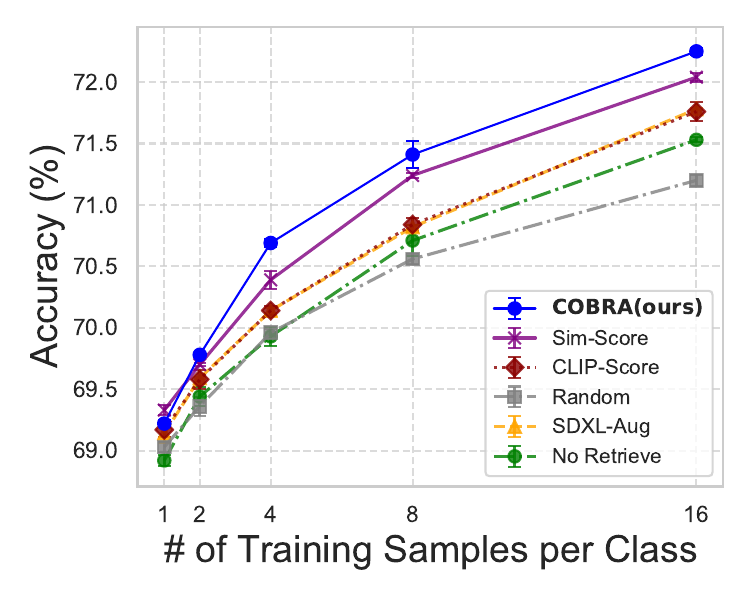}
        \caption{Acc. with Clip-Adapter}
    \end{subfigure}\quad
    \begin{subfigure}{.23\textwidth}
        \includegraphics[width=\textwidth]{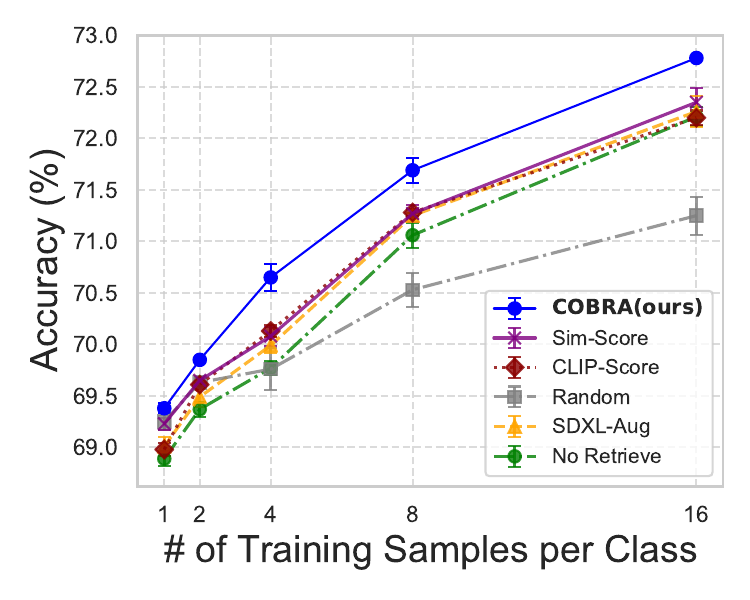}
        \caption{Acc. with Tip-Adapter-F}
    \end{subfigure}\quad
    \begin{subfigure}{.23\textwidth}
        \includegraphics[width=\textwidth]{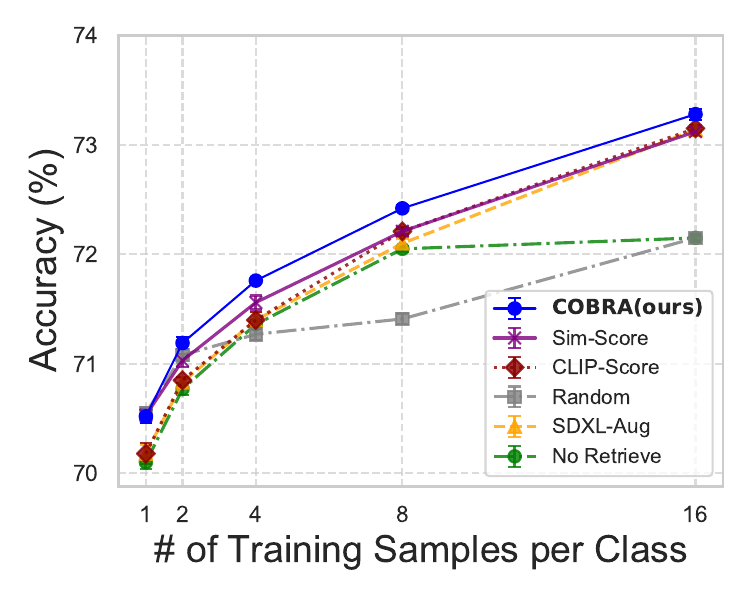}
        \caption{Acc. with CaFo}
    \end{subfigure}\quad

    \caption{\small \textbf{Imagenet Results with Different Few-Shot Techniques} After retrieving samples from LAION-2B for Imagenet, we train either a Linear Probe, Clip-Adapter~\citep{clipadapter}, Tip-Adapter-F~\citep{tipadapter}, and CaFo~\citep{cafo} on top of a pretrained CLIP model. We find that COBRA-based retrieval consistently outperforms alternative strategies.} 
    \label{fig:fewshot_results}
    \vspace{-.15in}
\end{figure*}

\subsection{Additional Baselines}
\label{sec:additional_baselines}
In this section, we test several additional baselines that are not commonly used for retrieval in the context of VLMs but can be adapted for this purpose. We use these baselines to retrieve additional data in the 16-shot Imagenet-1k setting to train Tip-Adapter-F with a ViT-B/16 backbone. Results are shown in~\Cref{tab:ablation_balancing}. 

\paragraph{Soft Class Balancing} We also consider the effect of imposing a soft class balance constraint as opposed to strictly enforcing it as done in past work~\citep{wallingford2023neural, zancato2023train, liu2023learning}. As mentioned in \Cref{sec: methods}, the COBRA objective uses soft class balancing by adding the submodular term, $\sum_{u=1}^C\frac{\log{(1 + m_u(A))}}{C}$. For a fair comparison, we also add this term to Sim-Score and CLIP-Score in \Cref{tab:ablation_balancing}. Note that this creates a new submodular objective that we maximize using the greedy algorithm. In this study, we observe that even the baseline algorithms benefit from the inclusion of a soft class balancing term, demonstrating that it is not necessary to impose a hard class balance constraint for retrieval. However, COBRA retains its lead demonstrating the importance of the FLMI objective.

% To this end, we also provide the nearest neighbor and \emph{CLIP-score} with this advantage by replacing it with hard balancing and then performing the retrieval. With soft-class balancing, the objective for both baselines becomes submodular, which we maximize using a greedy algorithm.  From \Cref{tab:ablation_balancing} we observe that even when non-diversity-based baselines are provided with soft class balancing constraints, they cannot beat COBRA, due to prevailing redundancy issues in their selection. 

\paragraph{Other CMI Functions} There are other instances of CMI functions, other than FLMI, that can model diversity and relevance. Here we study the CMI instantiated based on the log determinant (i.e., a determinantal point process, or DPP) function, which has also been used in previous works to induce diversity \citep{kulesza2012determinantal, chen2018fast, pmlr-v37-sharma15, ash2019deep, saran2023streaming, wang2024diversity}. Given a positive definite matrix $\Wmat$ and two sets A and B such that $A \cap B = \emptyset$,  submodular function $f(A) = \log{\det{\Wmat[A, A]}}$, $\log \det$-MI is defined as \cite{kothawade2022prism}.
\begin{equation}
\label{eq: logdet}
\begin{split}
    I_{f}(A;B) &= \log \left( \det \left(\Wmat[A, A] \right) \right) \\
    &- \log \left( \det \left(\Wmat[A, A] \right)  \right)\\ 
    &- \Wmat[A, B] \Wmat[B, B]^{-1} \Wmat[A, B]^T
\end{split}
\end{equation}

A positive definite constraint ($\Wmat \succ 0$) and its expensive computation severely prohibit its utility in large scale settings. To this end, using a special kernel (sum of outer products) and \emph{Woodbury matrix identity} we scale $\log \det$-MI to the largest sizes ever used, to the best of our knowledge (using~\cite{bilmes2025-submarine}). From \Cref{tab:ablation_balancing}, we observe that  $\log \det$-MI severely underperforms COBRA (and other baselines), which can be attributed to the relatively simple kernel, and the positive definite restriction on similarity matrix required by  $\log \det$ function, unlike COBRA. 

\paragraph{Maximum Marginal Relevance (MMR)} MMR is a classical method~\citep{carbonell1998} that can be used to select relevant and diverse samples, but has not been used in the context of VLMs. MMR introduces a trade-off parameter $\lambda_{MMR}$, which balances relevance and diversity (increasing $\lambda_{MMR}$ increases relevance). We report MMR's performance in \Cref{tab:ablation_balancing} for different $\lambda_{MMR}$ values and show that COBRA outperforms all of them. We discuss more details of MMR in \Cref{app:baseline}. 
\vspace{-.2in}

\paragraph{GPT-3 Class Descriptions} A method to enhance the diversity of the retrieved samples is to use an LLM to generate varied descriptions for a given class~\cite{cafo}. These diverse descriptions can then be used as prompts for evaluating with \emph{CLIP-Score}~\cite{udandarao2023susx} or to generate images, that is, \emph{SDXL-Aug}. We compare these two methods in \Cref{tab:ablation_balancing} and show that COBRA outperforms both baselines with prompt-based diversity. We discuss this in more detail in \Cref{app:baseline}. 

% \paragraph{Retrieving More Samples} We now study the impact of retrieving up to 8x more samples for the baselines than compared to COBRA and report it in \cref{fig:sample_eff} in \Cref{app:baseline}. We observe that COBRA still outperforms the baselines even with 8x fewer samples, due to redundancies still appearing in retrieved samples for the baselines. 

\begin{table}[htp]
    \centering
    \small
        % \label{tab:ablation_balancing}
        \begin{tabular}{cc}
            \toprule[1.5pt]
            \textbf{Algorithm} & \textbf{Imagenet Acc. (\%)} \\
            \midrule
            Sim-Score & 72.35 \small $\pm$ 0.01  \\
            Sim-Score w/ Soft Bal.  & \underline{72.46 \small $\pm$ 0.03} \\
            CLIP-Score & 72.20 \small $\pm$ 0.09  \\
            CLIP-Score w/ Soft Bal. & 72.22 \small $\pm$ 0.04  \Bstrut \\
            \hline 
            $\log \det$-MI & 71.28 \small $\pm$ 0.07  \Tstrut \Bstrut \\
            \hline
            MMR ($\lambda_{MMR} = .25$) & 71.14 \small $\pm$ 0.04  \Tstrut \Bstrut \\
            MMR ($\lambda_{MMR} = .50$) & 72.41 \small $\pm$ 0.11  \Tstrut \Bstrut \\
            MMR ($\lambda_{MMR} = .75$) & 72.25 \small $\pm$ 0.03 \Tstrut \Bstrut \\
            \hline 
            CLIP-Score w/ GPT-3 & 72.27 \small $\pm$ 0.02  \Tstrut \Bstrut \\
            SDXL-Aug w/ GPT-3 & 72.33 \small $\pm$ 0.01  \Tstrut \Bstrut \\
            \hline
            COBRA (Ours) & \textbf{72.78 \small $\pm$ 0.13}  \Tstrut \\
            \bottomrule
        \end{tabular}
        % \vspace{-.05in}
        \caption{\textbf{Additional Baselines:} Soft class-balancing boosts performance when used with other retrieval algorithms, though COBRA is still superior. We also observe that COBRA outperforms other retrieval methods that consider diversity. The standard error is computed over three trials.}
        \label{tab:ablation_balancing}
\end{table}

\subsection{Ablations}
\begin{table}[htp]
    % \label{tab:ablation_balancing}
    \small
    \centering

    % \vspace{-.1in}
    \begin{tabular}{ccc|c}
        \toprule[1.5pt]
        \textbf{FLMI} & \textbf{Class-Balancing} & \textbf{Quality Score} & \textbf{Acc. (\%)} \\
        \midrule
        \checkmark & $\times$ & $\times$ & 72.56 $\pm$ 0.03 \\
        \checkmark & $\times$ & \checkmark & \underline{72.60 $\pm$ 0.02} \\
        $\times$ & \checkmark & \checkmark & 72.46 $\pm$ 0.03 \\
        \checkmark & \checkmark & $\times$ & \textbf{72.78 $\pm$ 0.13} \\
        \checkmark & \checkmark & \checkmark & 72.52 $\pm$ 0.11 \\
        \bottomrule
    \end{tabular}
    % \vspace{-.05in}
    \caption{\textbf{Component Analysis:} COBRA has three terms. In this set of experiments, we examine the importance of each component in the ablation study. See \Cref{appen sec: hyperparameters} for more analysis.}
    \label{tab:component_analysis}
\end{table}

\begin{table}[htbp]
    \centering
    \resizebox{\linewidth}{!}{% Resizes the table to fit the minipage width
    \begin{tabular}{cccc}
        \toprule[1.5pt]
        \textbf{Algorithm} & \multicolumn{3}{c}{\textbf{Imagenet Acc. (\%)}} \\
        \midrule
         & \textbf{ViT-B/32} & \textbf{ViT-B/16} & \textbf{ViT-L/14} \Bstrut \\
        \hline 
        \emph{No-retrieve} & 67.26 \small $\pm$ 0.05 & 72.21 \small $\pm$ 0.01 & 77.98 \small $\pm$ 0.02 \Tstrut \\
        \emph{Sim-Score} & \underline{67.50 \small $\pm$ 0.04} & \underline{72.35 \small $\pm$ 0.01} & \underline{78.15 \small $\pm$ 0.07} \Bstrut \\
        \emph{CLIP-score} & 67.48 \small $\pm$ 0.04 & 72.20 \small $\pm$ 0.09 & 78.12 \small $\pm$ 0.04 \Bstrut \\
        \hline
        COBRA (Ours) & \textbf{67.76 \small $\pm$ 0.06} & \textbf{72.78 \small $\pm$ 0.13} & \textbf{78.48 \small $\pm$ 0.01} \Tstrut \\
        \bottomrule
    \end{tabular}}
    % \vspace{-0.05in}
    \caption{\textbf{Choice of Backbone:} COBRA provides improvement over baseline algorithms even when the CLIP backbone is changed for the retriever and the learner.}
    \label{tab:ablation_backbone}
\end{table}

\paragraph{Component Analysis}
We include a set of experiments on assessing the importance on each component, by running experiments on the 16-shot Imagenet case in \Cref{tab:component_analysis}. In this set of experiments, we find that COBRA outperforms all baselines in \Cref{tab:ablation_balancing} even without all of the components. We also provide more hyperparameter sensitivity analyses in \Cref{appen sec: hyperparameters}.
\vspace{-.15in}

\paragraph{Backbone Ablation}
 We only considered ViT-B/16 in our main experiments, but we include additional results with ViT-B/32 and ViT-L/14 backbones.  Our results in \Cref{tab:ablation_backbone} showcase that COBRA is robust to the choice of backbone.

\section{Related Work}
\label{sec:related_works}
\paragraph{Retrieval-Augmented Models} Retrieval augmentation has demonstrated impressive performance across a wide range of natural language tasks, leveraging external non-parametric knowledge to guide the output of large language models~\citep{lewis2020retrieval, izacard2022atlas, shi2023replug, ram2023context}. Notable applications include question answering~\citep{izacard2021leveraging, kandpal2023large, wang2023augmenting}, dialog modeling~\citep{shuster2021retrieval, peng2023check, cheng2024lift}, and code generation~\citep{zhou2022docprompting, zhang2023repocoder}. In the few-shot adaptation setting studied in COBRA, we leverage the retrieved data to acquire additional training data. Furthermore, these prior works focus on similarity-based retrieval approaches, thus failing to address the potential issue of redundancy among the retrieved entities. More recently, retrieval augmentation has been successfully applied to vision tasks~\citep{zancato2023train, wallingford2023neural, liu2023learning, geng2024unmetpromisesynthetictraining}. Retrieval approaches for VLMs frequently assume that the auxiliary dataset consists of captioned images, and includes some form of text-filtering where images with irrelevant captions are discarded~\citep{wallingford2023neural, udandarao2023susx, geng2024unmetpromisesynthetictraining}. Other work also leverages similarity-based approaches, where auxiliary samples with high similarity to samples in the training set are retrieved~\citep{wallingford2023neural, liu2023learning, zancato2023train, geng2024unmetpromisesynthetictraining}. Unlike past work in retrieval for VLM's, we are the first, to the best of our knowledge, to consider retrieval strategies beyond text-filtering and similarity-based approaches.  
\vspace{-.15in}
\paragraph{Diversity in Information Retrieval} While retrieval strategies in the context of VLMs overlook the concept of diversity, the broader field of information retrieval has explored methods for balancing diversity with relevance. Classical work in information retrieval demonstrates that diversity can improve user experience, and propose techniques that rank documents such that the top elements are not redundant~\citep{carbonell1998, Deselaers2009joint, tollari2009comparative}. Some of these techniques can be adapted to retrieval-augmented few-shot adaptation as well as we show in \Cref{app:baseline}.  Recent information retrieval work also strives to incorporate diversity by learning separate encoders that are more amenable to diverse retrieval~\citep{colt2023, vmig2023}, though learning an entirely different encoder introduces significant computational overhead. To the best of our knowledge, no approach has explicitly used CMI functions for the task of retrieval. 
\vspace{-.1in}

\paragraph{Other Related Work} We include more discussion on related works in~\Cref{app:more_related_work}.
\section{Conclusion}
\label{sec:conclusion}
In this work, we present COBRA, a novel application of Combinatorial Mutual Information (CMI) measures for retrieval augmented few-shot adaptation. We not only showed that the most popular existing methods such as similarity-based retrieval algorithms are part of the CMI framework but also demonstrated that alternative instances of this family can retrieve diverse and relevant samples. In our experiments, we demonstrate that COBRA outperforms existing retrieval strategies across several visual tasks. By demonstrating the importance of diversity in retrieval, we hope that future work in this field will move beyond simple nearest-neighbor based retrieval strategies. Furthermore, we plan to explore diversity-aware retrieval schemes for generative tasks (in standard RAG workflows) in future work. 
\section{Acknowledgements}
This work is supported by NSF Grant Nos.
IIS-2106937 and IIS-2148367. The authors also thank Matt Wallingford, Vivek Ramanujan, and Scott Geng for their help with dataset preparation and discussions.
{
    \small
    \bibliographystyle{ieeenat_fullname}
    \bibliography{main}

\begin{thebibliography}{111}
\providecommand{\natexlab}[1]{#1}
\providecommand{\url}[1]{\texttt{#1}}
\expandafter\ifx\csname urlstyle\endcsname\relax
  \providecommand{\doi}[1]{doi: #1}\else
  \providecommand{\doi}{doi: \begingroup \urlstyle{rm}\Url}\fi

\bibitem[Ash et~al.(2021)Ash, Goel, Krishnamurthy, and Kakade]{ash2021gone}
Jordan Ash, Surbhi Goel, Akshay Krishnamurthy, and Sham Kakade.
\newblock Gone fishing: Neural active learning with fisher embeddings.
\newblock \emph{Advances in Neural Information Processing Systems}, 34:\penalty0 8927--8939, 2021.

\bibitem[Ash et~al.(2019)Ash, Zhang, Krishnamurthy, Langford, and Agarwal]{ash2019deep}
Jordan~T Ash, Chicheng Zhang, Akshay Krishnamurthy, John Langford, and Alekh Agarwal.
\newblock Deep batch active learning by diverse, uncertain gradient lower bounds.
\newblock \emph{arXiv preprint arXiv:1906.03671}, 2019.

\bibitem[Atlas et~al.(1989)Atlas, Cohn, and Ladner]{atlas1989training}
Les Atlas, David Cohn, and Richard Ladner.
\newblock Training connectionist networks with queries and selective sampling.
\newblock \emph{Advances in neural information processing systems}, 2, 1989.

\bibitem[Bansal and Grover(2023)]{bansal2023leaving}
Hritik Bansal and Aditya Grover.
\newblock Leaving reality to imagination: Robust classification via generated datasets.
\newblock \emph{arXiv preprint arXiv:2302.02503}, 2023.

\bibitem[Beluch et~al.(2018)Beluch, Genewein, N{\"u}rnberger, and K{\"o}hler]{beluch2018power}
William~H Beluch, Tim Genewein, Andreas N{\"u}rnberger, and Jan~M K{\"o}hler.
\newblock The power of ensembles for active learning in image classification.
\newblock In \emph{Proceedings of the IEEE conference on computer vision and pattern recognition}, pages 9368--9377, 2018.

\bibitem[Bentivogli et~al.(2009)Bentivogli, Magnini, Dagan, Dang, and Giampiccolo]{rte}
Luisa Bentivogli, Bernardo Magnini, Ido Dagan, Hoa~Trang Dang, and Danilo Giampiccolo.
\newblock The fifth {PASCAL} recognizing textual entailment challenge.
\newblock In \emph{Proceedings of the Second Text Analysis Conference, {TAC} 2009, Gaithersburg, Maryland, USA, November 16-17, 2009}. {NIST}, 2009.

\bibitem[Bhatt et~al.(2024{\natexlab{a}})Bhatt, Chen, Das, Zhang, Truong, Mussmann, Zhu, Bilmes, Du, Jamieson, Ash, and Nowak]{bhatt2024experimental}
Gantavya Bhatt, Yifang Chen, Arnav~Mohanty Das, Jifan Zhang, Sang~T. Truong, Stephen Mussmann, Yinglun Zhu, Jeff~A. Bilmes, Simon~S. Du, Kevin~G. Jamieson, Jordan~T. Ash, and Robert~D. Nowak.
\newblock An experimental design framework for label-efficient supervised finetuning of large language models.
\newblock In \emph{ACL (Findings)}, pages 6549--6560, 2024{\natexlab{a}}.

\bibitem[Bhatt et~al.(2024{\natexlab{b}})Bhatt, Das, and Bilmes]{bhatt2024deep}
Gantavya Bhatt, Arnav~Mohanty Das, and Jeff Bilmes.
\newblock Deep submodular peripteral networks.
\newblock In \emph{The Thirty-eighth Annual Conference on Neural Information Processing Systems}, 2024{\natexlab{b}}.

\bibitem[Bilmes(2022)]{bilmes2022submodularity}
Jeff Bilmes.
\newblock Submodularity in machine learning and artificial intelligence.
\newblock \emph{arXiv preprint arXiv:2202.00132}, 2022.

\bibitem[Bilmes(2025)]{bilmes2025-submarine}
Jeff Bilmes.
\newblock Submarine: {SUBM}odularity for {AR}tificial {IN}telligenc{E} and machine learning.
\newblock Online Software System, 2025.
\newblock \url{https://submarine.page}.

\bibitem[Blattmann et~al.(2022)Blattmann, Rombach, Oktay, M\"{u}ller, and Ommer]{diffusion_rag}
Andreas Blattmann, Robin Rombach, Kaan Oktay, Jonas M\"{u}ller, and Bj\"{o}rn Ommer.
\newblock Retrieval-augmented diffusion models.
\newblock In \emph{Advances in Neural Information Processing Systems}, pages 15309--15324. Curran Associates, Inc., 2022.

\bibitem[Bossard et~al.(2014)Bossard, Guillaumin, and Van~Gool]{food101}
Lukas Bossard, Matthieu Guillaumin, and Luc Van~Gool.
\newblock Food-101 -- mining discriminative components with random forests.
\newblock In \emph{Computer Vision -- ECCV 2014}, pages 446--461, Cham, 2014. Springer International Publishing.

\bibitem[Brown et~al.(2020)Brown, Mann, Ryder, Subbiah, Kaplan, Dhariwal, Neelakantan, Shyam, Sastry, Askell, Agarwal, Herbert-Voss, Krueger, Henighan, Child, Ramesh, Ziegler, Wu, Winter, Hesse, Chen, Sigler, Litwin, Gray, Chess, Clark, Berner, McCandlish, Radford, Sutskever, and Amodei]{brown2020languagemodelsfewshotlearners}
Tom~B. Brown, Benjamin Mann, Nick Ryder, Melanie Subbiah, Jared Kaplan, Prafulla Dhariwal, Arvind Neelakantan, Pranav Shyam, Girish Sastry, Amanda Askell, Sandhini Agarwal, Ariel Herbert-Voss, Gretchen Krueger, Tom Henighan, Rewon Child, Aditya Ramesh, Daniel~M. Ziegler, Jeffrey Wu, Clemens Winter, Christopher Hesse, Mark Chen, Eric Sigler, Mateusz Litwin, Scott Gray, Benjamin Chess, Jack Clark, Christopher Berner, Sam McCandlish, Alec Radford, Ilya Sutskever, and Dario Amodei.
\newblock Language models are few-shot learners, 2020.

\bibitem[Cao et~al.(2024)Cao, Yuan, Liu, Li, Sun, Liu, and Zhao]{cao2024synartifact}
Bin Cao, Jianhao Yuan, Yexin Liu, Jian Li, Shuyang Sun, Jing Liu, and Bo Zhao.
\newblock Synartifact: Classifying and alleviating artifacts in synthetic images via vision-language model.
\newblock \emph{arXiv preprint arXiv:2402.18068}, 2024.

\bibitem[Carbonell and Goldstein(1998)]{carbonell1998}
Jaime Carbonell and Jade Goldstein.
\newblock The use of mmr, diversity-based reranking for reordering documents and producing summaries.
\newblock In \emph{Proceedings of the 21st Annual International ACM SIGIR Conference on Research and Development in Information Retrieval}, page 335–336, New York, NY, USA, 1998. Association for Computing Machinery.

\bibitem[Chen et~al.(2018)Chen, Zhang, and Zhou]{chen2018fast}
Laming Chen, Guoxin Zhang, and Eric Zhou.
\newblock Fast greedy map inference for determinantal point process to improve recommendation diversity.
\newblock \emph{Advances in Neural Information Processing Systems}, 31, 2018.

\bibitem[Cheng et~al.(2024)Cheng, Luo, Chen, Liu, Zhao, and Yan]{cheng2024lift}
Xin Cheng, Di Luo, Xiuying Chen, Lemao Liu, Dongyan Zhao, and Rui Yan.
\newblock Lift yourself up: Retrieval-augmented text generation with self-memory.
\newblock \emph{Advances in Neural Information Processing Systems}, 36, 2024.

\bibitem[Cimpoi et~al.(2014)Cimpoi, Maji, Kokkinos, Mohamed, , and Vedaldi]{dtddataset}
M. Cimpoi, S. Maji, I. Kokkinos, S. Mohamed, , and A. Vedaldi.
\newblock Describing textures in the wild.
\newblock In \emph{Proceedings of the {IEEE} Conf. on Computer Vision and Pattern Recognition ({CVPR})}, 2014.

\bibitem[Citovsky et~al.(2021)Citovsky, DeSalvo, Gentile, Karydas, Rajagopalan, Rostamizadeh, and Kumar]{citovsky2021batch}
Gui Citovsky, Giulia DeSalvo, Claudio Gentile, Lazaros Karydas, Anand Rajagopalan, Afshin Rostamizadeh, and Sanjiv Kumar.
\newblock Batch active learning at scale.
\newblock \emph{Advances in Neural Information Processing Systems}, 34:\penalty0 11933--11944, 2021.

\bibitem[Coleman et~al.(2021)Coleman, Chou, Katz-Samuels, Culatana, Bailis, Berg, Nowak, Sumbaly, Zaharia, and Yalniz]{coleman2021similarity}
Cody Coleman, Edward Chou, Julian Katz-Samuels, Sean Culatana, Peter Bailis, Alexander~C. Berg, Robert Nowak, Roshan Sumbaly, Matei Zaharia, and I.~Zeki Yalniz.
\newblock Similarity search for efficient active learning and search of rare concepts, 2021.

\bibitem[Cortes and Vapnik(1995)]{cortes1995support}
Corinna Cortes and Vladimir Vapnik.
\newblock Support-vector networks.
\newblock \emph{Machine learning}, 20:\penalty0 273--297, 1995.

\bibitem[Cover and Thomas(2006)]{Cover2006}
Thomas~M. Cover and Joy~A. Thomas.
\newblock \emph{Elements of Information Theory 2nd Edition (Wiley Series in Telecommunications and Signal Processing)}.
\newblock Wiley-Interscience, 2006.

\bibitem[Das et~al.(2023)Das, Bhatt, Bhalerao, Gao, Yang, and Bilmes]{das2023accelerating}
Arnav~Mohanty Das, Gantavya Bhatt, Megh~Manoj Bhalerao, Vianne~R. Gao, Rui Yang, and Jeff Bilmes.
\newblock Accelerating batch active learning using continual learning techniques.
\newblock \emph{Transactions on Machine Learning Research}, 2023.

\bibitem[Deng et~al.(2009)Deng, Dong, Socher, Li, Li, and Fei-Fei]{deng2009imagenet}
Jia Deng, Wei Dong, Richard Socher, Li-Jia Li, Kai Li, and Li Fei-Fei.
\newblock Imagenet: A large-scale hierarchical image database.
\newblock In \emph{2009 IEEE conference on computer vision and pattern recognition}, pages 248--255. Ieee, 2009.

\bibitem[Deselaers et~al.(2009)Deselaers, Gass, Dreuw, and Ney]{Deselaers2009joint}
Thomas Deselaers, Tobias Gass, Philippe Dreuw, and Hermann Ney.
\newblock Jointly optimising relevance and diversity in image retrieval.
\newblock In \emph{Proceedings of the ACM International Conference on Image and Video Retrieval}, New York, NY, USA, 2009. Association for Computing Machinery.

\bibitem[Dolan and Brockett(2005)]{mrpc}
William~B. Dolan and Chris Brockett.
\newblock Automatically constructing a corpus of sentential paraphrases.
\newblock In \emph{Proceedings of the Third International Workshop on Paraphrasing ({IWP}2005)}, 2005.

\bibitem[Ducoffe and Precioso(2018)]{ducoffe2018adversarial}
Melanie Ducoffe and Frederic Precioso.
\newblock Adversarial active learning for deep networks: a margin based approach.
\newblock \emph{arXiv preprint arXiv:1802.09841}, 2018.

\bibitem[et~al.(2023)]{APE2023}
Xiangyang~Zhu et al.
\newblock Not all features matter: Enhancing few-shot clip with adaptive prior refinement.
\newblock In \emph{ICCV}, 2023.

\bibitem[et~al.(2024)]{AMU2024}
Yuwei~Tang et al.
\newblock Amu-tuning: Effective logit bias for clip-based few-shot learning.
\newblock In \emph{CVPR}, 2024.

\bibitem[Finn et~al.(2017)Finn, Abbeel, and Levine]{finn17maml}
Chelsea Finn, Pieter Abbeel, and Sergey Levine.
\newblock Model-agnostic meta-learning for fast adaptation of deep networks.
\newblock In \emph{Proceedings of the 34th International Conference on Machine Learning}, pages 1126--1135. PMLR, 2017.

\bibitem[Friedman and Dieng(2022)]{friedman2022vendi}
Dan Friedman and Adji Dieng.
\newblock The vendi score: A diversity evaluation metric for machine learning.
\newblock \emph{TMLR}, 2022.

\bibitem[Fujishige(2005)]{fujishige2005submodular}
Satoru Fujishige.
\newblock \emph{Submodular functions and optimization}.
\newblock Elsevier, 2005.

\bibitem[Gadre et~al.(2023)Gadre, Ilharco, Fang, Hayase, Smyrnis, Nguyen, Marten, Wortsman, Ghosh, Zhang, Orgad, Entezari, Daras, Pratt, Ramanujan, Bitton, Marathe, Mussmann, Vencu, Cherti, Krishna, Koh, Saukh, Ratner, Song, Hajishirzi, Farhadi, Beaumont, Oh, Dimakis, Jitsev, Carmon, Shankar, and Schmidt]{gadre2023datacomp}
Samir~Yitzhak Gadre, Gabriel Ilharco, Alex Fang, Jonathan Hayase, Georgios Smyrnis, Thao Nguyen, Ryan Marten, Mitchell Wortsman, Dhruba Ghosh, Jieyu Zhang, Eyal Orgad, Rahim Entezari, Giannis Daras, Sarah Pratt, Vivek Ramanujan, Yonatan Bitton, Kalyani Marathe, Stephen Mussmann, Richard Vencu, Mehdi Cherti, Ranjay Krishna, Pang~Wei Koh, Olga Saukh, Alexander Ratner, Shuran Song, Hannaneh Hajishirzi, Ali Farhadi, Romain Beaumont, Sewoong Oh, Alex Dimakis, Jenia Jitsev, Yair Carmon, Vaishaal Shankar, and Ludwig Schmidt.
\newblock Datacomp: In search of the next generation of multimodal datasets, 2023.

\bibitem[Gal et~al.(2017)Gal, Islam, and Ghahramani]{gal2017deep}
Yarin Gal, Riashat Islam, and Zoubin Ghahramani.
\newblock Deep bayesian active learning with image data.
\newblock In \emph{International Conference on Machine Learning}, pages 1183--1192. PMLR, 2017.

\bibitem[Gao et~al.(2021)Gao, Geng, Zhang, Ma, Fang, Zhang, Li, and Qiao]{clipadapter}
Peng Gao, Shijie Geng, Renrui Zhang, Teli Ma, Rongyao Fang, Yongfeng Zhang, Hongsheng Li, and Yu Qiao.
\newblock Clip-adapter: Better vision-language models with feature adapters, 2021.

\bibitem[Geifman and El-Yaniv(2017)]{geifman2017deep}
Yonatan Geifman and Ran El-Yaniv.
\newblock Deep active learning over the long tail.
\newblock \emph{arXiv preprint arXiv:1711.00941}, 2017.

\bibitem[Geng et~al.(2024)Geng, Hsieh, Ramanujan, Wallingford, Li, Koh, and Krishna]{geng2024unmetpromisesynthetictraining}
Scott Geng, Cheng-Yu Hsieh, Vivek Ramanujan, Matthew Wallingford, Chun-Liang Li, Pang~Wei Koh, and Ranjay Krishna.
\newblock The unmet promise of synthetic training images: Using retrieved real images performs better, 2024.

\bibitem[Iyer et~al.(2021)Iyer, Khargonkar, Bilmes, and Asnani]{iyer-cmi-alt-2021}
Rishabh Iyer, Ninad~A Khargonkar, Jeffrey~A. Bilmes, and Himanshu Asnani.
\newblock Submodular combinatorial information measures with applications in machine learning.
\newblock In \emph{The 32nd International Conference on Algorithmic Learning Theory}, Virtual Conference, 2021.

\bibitem[Iyer et~al.(2022)Iyer, Khargonkar, Bilmes, and Asnani]{subinfomeasures2022}
Rishabh Iyer, Ninad Khargonkar, Jeff Bilmes, and Himanshu Asnani.
\newblock Generalized submodular information measures: Theoretical properties, examples, optimization algorithms, and applications.
\newblock \emph{IEEE Transactions on Information Theory}, 68\penalty0 (2):\penalty0 752 -- 781, 2022.

\bibitem[Izacard and Grave(2021)]{izacard2021leveraging}
Gautier Izacard and {\'E}douard Grave.
\newblock Leveraging passage retrieval with generative models for open domain question answering.
\newblock In \emph{Proceedings of the 16th Conference of the European Chapter of the Association for Computational Linguistics: Main Volume}, pages 874--880, 2021.

\bibitem[Izacard et~al.(2022)Izacard, Lewis, Lomeli, Hosseini, Petroni, Schick, Dwivedi-Yu, Joulin, Riedel, and Grave]{izacard2022atlas}
Gautier Izacard, Patrick Lewis, Maria Lomeli, Lucas Hosseini, Fabio Petroni, Timo Schick, Jane Dwivedi-Yu, Armand Joulin, Sebastian Riedel, and Edouard Grave.
\newblock Atlas: Few-shot learning with retrieval augmented language models, 2022.

\bibitem[Jamal and Qi(2019)]{Jamal2019task}
Muhammad~Abdullah Jamal and Guo-Jun Qi.
\newblock Task agnostic meta-learning for few-shot learning.
\newblock In \emph{2019 IEEE/CVF Conference on Computer Vision and Pattern Recognition (CVPR)}, pages 11711--11719, 2019.

\bibitem[Jegelka and Bilmes(2010)]{jegelka2010}
S. Jegelka and J. Bilmes.
\newblock Cooperative cuts: Graph cuts with submodular edge weights.
\newblock Technical Report 189, Max Planck Institute for Biological Cybernetics, Tuebingen, Germany, 2010.

\bibitem[Jegelka and Bilmes(2011)]{jegelka2011submodularity}
Stefanie Jegelka and Jeff Bilmes.
\newblock Submodularity beyond submodular energies: Coupling edges in graph cuts.
\newblock In \emph{CVPR 2011}, pages 1897--1904, 2011.

\bibitem[Jia et~al.(2021)Jia, Yang, Xia, Chen, Parekh, Pham, Le, Sung, Li, and Duerig]{jia2021scaling}
Chao Jia, Yinfei Yang, Ye Xia, Yi-Ting Chen, Zarana Parekh, Hieu Pham, Quoc~V. Le, Yunhsuan Sung, Zhen Li, and Tom Duerig.
\newblock Scaling up visual and vision-language representation learning with noisy text supervision, 2021.

\bibitem[Johnson et~al.(2019)Johnson, Douze, and J{\'e}gou]{johnson2019billion}
Jeff Johnson, Matthijs Douze, and Herv{\'e} J{\'e}gou.
\newblock Billion-scale similarity search with {GPUs}.
\newblock \emph{IEEE Transactions on Big Data}, 7\penalty0 (3):\penalty0 535--547, 2019.

\bibitem[Joshi and Mirzasoleiman(2023)]{joshi23bdata}
Siddharth Joshi and Baharan Mirzasoleiman.
\newblock Data-efficient contrastive self-supervised learning: Most beneficial examples for supervised learning contribute the least.
\newblock In \emph{Proceedings of the 40th International Conference on Machine Learning}, pages 15356--15370. PMLR, 2023.

\bibitem[Kandpal et~al.(2023)Kandpal, Deng, Roberts, Wallace, and Raffel]{kandpal2023large}
Nikhil Kandpal, Haikang Deng, Adam Roberts, Eric Wallace, and Colin Raffel.
\newblock Large language models struggle to learn long-tail knowledge.
\newblock In \emph{International Conference on Machine Learning}, pages 15696--15707. PMLR, 2023.

\bibitem[Karanam et~al.(2022)Karanam, Killamsetty, Kokel, and Iyer]{Karanam2022orient}
Athresh Karanam, Krishnateja Killamsetty, Harsha Kokel, and Rishabh Iyer.
\newblock Orient: Submodular mutual information measures for data subset selection under distribution shift.
\newblock In \emph{Advances in Neural Information Processing Systems}, pages 31796--31808. Curran Associates, Inc., 2022.

\bibitem[Kothawade et~al.(2021{\natexlab{a}})Kothawade, Beck, Killamsetty, and Iyer]{koth2021similar}
Suraj Kothawade, Nathan Beck, Krishnateja Killamsetty, and Rishabh Iyer.
\newblock Similar: Submodular information measures based active learning in realistic scenarios.
\newblock In \emph{Advances in Neural Information Processing Systems}, pages 18685--18697. Curran Associates, Inc., 2021{\natexlab{a}}.

\bibitem[Kothawade et~al.(2021{\natexlab{b}})Kothawade, Beck, Killamsetty, and Iyer]{kothawade2021Similar}
Suraj Kothawade, Nathan Beck, Krishnateja Killamsetty, and Rishabh Iyer.
\newblock Similar: Submodular information measures based active learning in realistic scenarios.
\newblock In \emph{Advances in Neural Information Processing Systems}, pages 18685--18697. Curran Associates, Inc., 2021{\natexlab{b}}.

\bibitem[Kothawade et~al.(2022)Kothawade, Kaushal, Ramakrishnan, Bilmes, and Iyer]{kothawade2022prism}
Suraj Kothawade, Vishal Kaushal, Ganesh Ramakrishnan, Jeff Bilmes, and Rishabh Iyer.
\newblock Prism: A rich class of parameterized submodular information measures for guided subset selection, 2022.

\bibitem[Kulesza et~al.(2012)Kulesza, Taskar, et~al.]{kulesza2012determinantal}
Alex Kulesza, Ben Taskar, et~al.
\newblock Determinantal point processes for machine learning.
\newblock \emph{Foundations and Trends{\textregistered} in Machine Learning}, 5\penalty0 (2--3):\penalty0 123--286, 2012.

\bibitem[Lewis et~al.(2020)Lewis, Perez, Piktus, Petroni, Karpukhin, Goyal, K{\"u}ttler, Lewis, Yih, Rockt{\"a}schel, et~al.]{lewis2020retrieval}
Patrick Lewis, Ethan Perez, Aleksandra Piktus, Fabio Petroni, Vladimir Karpukhin, Naman Goyal, Heinrich K{\"u}ttler, Mike Lewis, Wen-tau Yih, Tim Rockt{\"a}schel, et~al.
\newblock Retrieval-augmented generation for knowledge-intensive nlp tasks.
\newblock \emph{Advances in Neural Information Processing Systems}, 33:\penalty0 9459--9474, 2020.

\bibitem[Li and Roth(2002)]{trec}
Xin Li and Dan Roth.
\newblock Learning question classifiers.
\newblock In \emph{{COLING} 2002: The 19th International Conference on Computational Linguistics}, 2002.

\bibitem[Lin and Bilmes(2011)]{lin2011class}
Hui Lin and Jeff Bilmes.
\newblock A class of submodular functions for document summarization.
\newblock In \emph{Proceedings of the 49th Annual Meeting of the Association for Computational Linguistics: Human Language Technologies}, pages 510--520, Portland, Oregon, USA, 2011. Association for Computational Linguistics.

\bibitem[Lin and Bilmes(2012)]{lin2012learning}
Hui Lin and Jeff~A Bilmes.
\newblock Learning mixtures of submodular shells with application to document summarization.
\newblock \emph{arXiv preprint arXiv:1210.4871}, 2012.

\bibitem[Liu et~al.(2023)Liu, Son, Yang, Liu, Gao, Lee, and Li]{liu2023learning}
Haotian Liu, Kilho Son, Jianwei Yang, Ce Liu, Jianfeng Gao, Yong~Jae Lee, and Chunyuan Li.
\newblock Learning customized visual models with retrieval-augmented knowledge, 2023.

\bibitem[Liu et~al.(2022)Liu, Shen, Zhang, Dolan, Carin, and Chen]{liu-etal-2022-makes}
Jiachang Liu, Dinghan Shen, Yizhe Zhang, Bill Dolan, Lawrence Carin, and Weizhu Chen.
\newblock What makes good in-context examples for {GPT}-3?
\newblock In \emph{Proceedings of Deep Learning Inside Out (DeeLIO 2022): The 3rd Workshop on Knowledge Extraction and Integration for Deep Learning Architectures}, Dublin, Ireland and Online, 2022. Association for Computational Linguistics.

\bibitem[Liu et~al.(2013)Liu, Wei, Kirchhoff, Song, and Bilmes]{liu2013submodular}
Yuzong Liu, Kai Wei, Katrin Kirchhoff, Yisong Song, and Jeff Bilmes.
\newblock Submodular feature selection for high-dimensional acoustic score spaces.
\newblock In \emph{2013 IEEE International Conference on Acoustics, Speech and Signal Processing}, pages 7184--7188, 2013.

\bibitem[Maji et~al.(2013)Maji, Rahtu, Kannala, Blaschko, and Vedaldi]{fgvc_aircraft}
Subhransu Maji, Esa Rahtu, Juho Kannala, Matthew Blaschko, and Andrea Vedaldi.
\newblock Fine-grained visual classification of aircraft, 2013.

\bibitem[Margatina et~al.(2023)Margatina, Schick, Aletras, and Dwivedi-Yu]{margatina-etal-2023-active}
Katerina Margatina, Timo Schick, Nikolaos Aletras, and Jane Dwivedi-Yu.
\newblock Active learning principles for in-context learning with large language models.
\newblock In \emph{Findings of the Association for Computational Linguistics: EMNLP 2023}, pages 5011--5034, Singapore, 2023. Association for Computational Linguistics.

\bibitem[Minoux(2005)]{minoux2005accelerated}
Michel Minoux.
\newblock Accelerated greedy algorithms for maximizing submodular set functions.
\newblock In \emph{Optimization Techniques: Proceedings of the 8th IFIP Conference on Optimization Techniques W{\"u}rzburg, September 5--9, 1977}, pages 234--243. Springer, 2005.

\bibitem[Mirzasoleiman et~al.(2020)Mirzasoleiman, Bilmes, and Leskovec]{mirzasoleiman2020coresets}
Baharan Mirzasoleiman, Jeff Bilmes, and Jure Leskovec.
\newblock Coresets for data-efficient training of machine learning models.
\newblock In \emph{International Conference on Machine Learning}, pages 6950--6960. PMLR, 2020.

\bibitem[Nemhauser et~al.(1978)Nemhauser, Wolsey, and Fisher]{nemhauser1978analysis}
George~L Nemhauser, Laurence~A Wolsey, and Marshall~L Fisher.
\newblock An analysis of approximations for maximizing submodular set functions—i.
\newblock \emph{Mathematical programming}, 14:\penalty0 265--294, 1978.

\bibitem[Nichol et~al.(2018)Nichol, Achiam, and Schulman]{nichol2018reptile}
Alex Nichol, Joshua Achiam, and John Schulman.
\newblock On first-order meta-learning algorithms.
\newblock \emph{CoRR}, abs/1803.02999, 2018.

\bibitem[Nilsback and Zisserman(2008)]{flowers102}
Maria-Elena Nilsback and Andrew Zisserman.
\newblock Automated flower classification over a large number of classes.
\newblock In \emph{Indian Conference on Computer Vision, Graphics and Image Processing}, 2008.

\bibitem[Nuggehalli et~al.(2023)Nuggehalli, Zhang, Jain, and Nowak]{nuggehalli2023direct}
Shyam Nuggehalli, Jifan Zhang, Lalit Jain, and Robert Nowak.
\newblock Direct: Deep active learning under imbalance and label noise.
\newblock \emph{arXiv preprint arXiv:2312.09196}, 2023.

\bibitem[Oreshkin et~al.(2018)Oreshkin, L{\'{o}}pez, and Lacoste]{oreshkin2018TADAM}
Boris~N. Oreshkin, Pau~Rodr{\'{\i}}guez L{\'{o}}pez, and Alexandre Lacoste.
\newblock {TADAM:} task dependent adaptive metric for improved few-shot learning.
\newblock \emph{CoRR}, abs/1805.10123, 2018.

\bibitem[Peng et~al.(2023)Peng, Galley, He, Cheng, Xie, Hu, Huang, Liden, Yu, Chen, et~al.]{peng2023check}
Baolin Peng, Michel Galley, Pengcheng He, Hao Cheng, Yujia Xie, Yu Hu, Qiuyuan Huang, Lars Liden, Zhou Yu, Weizhu Chen, et~al.
\newblock Check your facts and try again: Improving large language models with external knowledge and automated feedback.
\newblock \emph{arXiv preprint arXiv:2302.12813}, 2023.

\bibitem[Podell et~al.(2023)Podell, English, Lacey, Blattmann, Dockhorn, M{\"u}ller, Penna, and Rombach]{podell2023sdxl}
Dustin Podell, Zion English, Kyle Lacey, Andreas Blattmann, Tim Dockhorn, Jonas M{\"u}ller, Joe Penna, and Robin Rombach.
\newblock Sdxl: Improving latent diffusion models for high-resolution image synthesis.
\newblock \emph{arXiv preprint arXiv:2307.01952}, 2023.

\bibitem[Pratt et~al.(2022)Pratt, Liu, and Farhadi]{pratt2022does}
Sarah Pratt, Rosanne Liu, and Ali Farhadi.
\newblock What does a platypus look like? generating customized prompts for zero-shot image classification.
\newblock \emph{arXiv preprint arXiv:2209.03320}, 2022.

\bibitem[Radford et~al.(2021)Radford, Kim, Hallacy, Ramesh, Goh, Agarwal, Sastry, Askell, Mishkin, Clark, Krueger, and Sutskever]{radford2021learning}
Alec Radford, Jong~Wook Kim, Chris Hallacy, Aditya Ramesh, Gabriel Goh, Sandhini Agarwal, Girish Sastry, Amanda Askell, Pamela Mishkin, Jack Clark, Gretchen Krueger, and Ilya Sutskever.
\newblock Learning transferable visual models from natural language supervision, 2021.

\bibitem[Rajeswaran et~al.(2019)Rajeswaran, Finn, Kakade, and Levine]{rajeswaran2019metalearning}
Aravind Rajeswaran, Chelsea Finn, Sham Kakade, and Sergey Levine.
\newblock Meta-learning with implicit gradients, 2019.

\bibitem[Ram et~al.(2023)Ram, Levine, Dalmedigos, Muhlgay, Shashua, Leyton-Brown, and Shoham]{ram2023context}
Ori Ram, Yoav Levine, Itay Dalmedigos, Dor Muhlgay, Amnon Shashua, Kevin Leyton-Brown, and Yoav Shoham.
\newblock In-context retrieval-augmented language models.
\newblock \emph{Transactions of the Association for Computational Linguistics}, 11:\penalty0 1316--1331, 2023.

\bibitem[Recht et~al.(2019)Recht, Roelofs, Schmidt, and Shankar]{imagenetv2}
Benjamin Recht, Rebecca Roelofs, Ludwig Schmidt, and Vaishaal Shankar.
\newblock Do imagenet classifiers generalize to imagenet?
\newblock \emph{CoRR}, abs/1902.10811, 2019.

\bibitem[Reimers and Gurevych(2019)]{sentencebert}
Nils Reimers and Iryna Gurevych.
\newblock Sentence-bert: Sentence embeddings using siamese bert-networks.
\newblock \emph{arXiv preprint arXiv:1908.10084}, 2019.

\bibitem[Rubin et~al.(2022)Rubin, Herzig, and Berant]{rubin2022learningretrievepromptsincontext}
Ohad Rubin, Jonathan Herzig, and Jonathan Berant.
\newblock Learning to retrieve prompts for in-context learning, 2022.

\bibitem[Saran et~al.(2023)Saran, Yousefi, Krishnamurthy, Langford, and Ash]{saran2023streaming}
Akanksha Saran, Safoora Yousefi, Akshay Krishnamurthy, John Langford, and Jordan~T. Ash.
\newblock Streaming active learning with deep neural networks, 2023.

\bibitem[Schuhmann et~al.(2022)Schuhmann, Beaumont, Vencu, Gordon, Wightman, Cherti, Coombes, Katta, Mullis, Wortsman, Schramowski, Kundurthy, Crowson, Schmidt, Kaczmarczyk, and Jitsev]{schuhmann2022laion5b}
Christoph Schuhmann, Romain Beaumont, Richard Vencu, Cade Gordon, Ross Wightman, Mehdi Cherti, Theo Coombes, Aarush Katta, Clayton Mullis, Mitchell Wortsman, Patrick Schramowski, Srivatsa Kundurthy, Katherine Crowson, Ludwig Schmidt, Robert Kaczmarczyk, and Jenia Jitsev.
\newblock Laion-5b: An open large-scale dataset for training next generation image-text models, 2022.

\bibitem[Sener and Savarese(2017)]{sener2017active}
Ozan Sener and Silvio Savarese.
\newblock Active learning for convolutional neural networks: A core-set approach.
\newblock \emph{arXiv preprint arXiv:1708.00489}, 2017.

\bibitem[Sener and Savarese(2018)]{k_center_coreset}
Ozan Sener and Silvio Savarese.
\newblock Active learning for convolutional neural networks: A core-set approach.
\newblock In \emph{International Conference on Learning Representations}, 2018.

\bibitem[Settles(2011)]{settles2010}
Burr Settles.
\newblock From theories to queries: Active learning in practice.
\newblock In \emph{Active Learning and Experimental Design workshop In conjunction with AISTATS 2010}, pages 1--18, Sardinia, Italy, 2011. PMLR.

\bibitem[Sharma et~al.(2015)Sharma, Kapoor, and Deshpande]{pmlr-v37-sharma15}
Dravyansh Sharma, Ashish Kapoor, and Amit Deshpande.
\newblock On greedy maximization of entropy.
\newblock In \emph{Proceedings of the 32nd International Conference on Machine Learning}, pages 1330--1338, Lille, France, 2015. PMLR.

\bibitem[Shi et~al.(2023)Shi, Min, Yasunaga, Seo, James, Lewis, Zettlemoyer, and Yih]{shi2023replug}
Weijia Shi, Sewon Min, Michihiro Yasunaga, Minjoon Seo, Rich James, Mike Lewis, Luke Zettlemoyer, and Wen-tau Yih.
\newblock Replug: Retrieval-augmented black-box language models.
\newblock \emph{arXiv preprint arXiv:2301.12652}, 2023.

\bibitem[Shuster et~al.(2021)Shuster, Poff, Chen, Kiela, and Weston]{shuster2021retrieval}
Kurt Shuster, Spencer Poff, Moya Chen, Douwe Kiela, and Jason Weston.
\newblock Retrieval augmentation reduces hallucination in conversation.
\newblock In \emph{Findings of the Association for Computational Linguistics: EMNLP 2021}, pages 3784--3803, 2021.

\bibitem[Snell et~al.(2017)Snell, Swersky, and Zemel]{snell2017prototypical}
Jake Snell, Kevin Swersky, and Richard~S. Zemel.
\newblock Prototypical networks for few-shot learning, 2017.

\bibitem[Socher et~al.(2013)Socher, Perelygin, Wu, Chuang, Manning, Ng, and Potts]{sst2}
Richard Socher, Alex Perelygin, Jean Wu, Jason Chuang, Christopher~D. Manning, Andrew Ng, and Christopher Potts.
\newblock Recursive deep models for semantic compositionality over a sentiment treebank.
\newblock In \emph{Proceedings of the 2013 Conference on Empirical Methods in Natural Language Processing}, pages 1631--1642, Seattle, Washington, USA, 2013. Association for Computational Linguistics.

\bibitem[Su et~al.(2023)Su, Kasai, Wu, Shi, Wang, Xin, Zhang, Ostendorf, Zettlemoyer, Smith, and Yu]{su2023selective}
Hongjin Su, Jungo Kasai, Chen~Henry Wu, Weijia Shi, Tianlu Wang, Jiayi Xin, Rui Zhang, Mari Ostendorf, Luke Zettlemoyer, Noah~A. Smith, and Tao Yu.
\newblock Selective annotation makes language models better few-shot learners.
\newblock In \emph{The Eleventh International Conference on Learning Representations}, 2023.

\bibitem[Sung et~al.(2018)Sung, Yang, Zhang, Xiang, Torr, and Hospedales]{sung2018learning}
Flood Sung, Yongxin Yang, Li Zhang, Tao Xiang, Philip H.~S. Torr, and Timothy~M. Hospedales.
\newblock Learning to compare: Relation network for few-shot learning, 2018.

\bibitem[Tollari et~al.(2009)Tollari, Mulhem, Ferecatu, Glotin, Detyniecki, Gallinari, Sahbi, and Zhao]{tollari2009comparative}
Sabrina Tollari, Philippe Mulhem, Marin Ferecatu, Herv{\'e} Glotin, Marcin Detyniecki, Patrick Gallinari, Hichem Sahbi, and Zhong-Qiu Zhao.
\newblock A comparative study of diversity methods for hybrid text and image retrieval approaches.
\newblock In \emph{Evaluating Systems for Multilingual and Multimodal Information Access}, pages 585--592, Berlin, Heidelberg, 2009. Springer Berlin Heidelberg.

\bibitem[Udandarao et~al.(2023)Udandarao, Gupta, and Albanie]{udandarao2023susx}
Vishaal Udandarao, Ankush Gupta, and Samuel Albanie.
\newblock Sus-x: Training-free name-only transfer of vision-language models, 2023.

\bibitem[Vinyals et~al.(2016)Vinyals, Blundell, Lillicrap, kavukcuoglu, and Wierstra]{vinyals2016Matching}
Oriol Vinyals, Charles Blundell, Timothy Lillicrap, koray kavukcuoglu, and Daan Wierstra.
\newblock Matching networks for one shot learning.
\newblock In \emph{Advances in Neural Information Processing Systems}. Curran Associates, Inc., 2016.

\bibitem[Wallingford et~al.(2023)Wallingford, Ramanujan, Fang, Kusupati, Mottaghi, Kembhavi, Schmidt, and Farhadi]{wallingford2023neural}
Matthew Wallingford, Vivek Ramanujan, Alex Fang, Aditya Kusupati, Roozbeh Mottaghi, Aniruddha Kembhavi, Ludwig Schmidt, and Ali Farhadi.
\newblock Neural priming for sample-efficient adaptation.
\newblock In \emph{Thirty-seventh Conference on Neural Information Processing Systems}, 2023.

\bibitem[Wang and Komatsuzaki(2021)]{gpt-j}
Ben Wang and Aran Komatsuzaki.
\newblock {GPT-J-6B: A 6 Billion Parameter Autoregressive Language Model}.
\newblock \url{https://github.com/kingoflolz/mesh-transformer-jax}, 2021.

\bibitem[Wang et~al.(2024)Wang, Shen, Guo, Stallone, Kim, Golland, and Panda]{wang2024diversity}
Peiqi Wang, Yikang Shen, Zhen Guo, Matthew Stallone, Yoon Kim, Polina Golland, and Rameswar Panda.
\newblock Diversity measurement and subset selection for instruction tuning datasets.
\newblock \emph{arXiv preprint arXiv:2402.02318}, 2024.

\bibitem[Wang et~al.(2023)Wang, Ma, and Chen]{wang2023augmenting}
Yubo Wang, Xueguang Ma, and Wenhu Chen.
\newblock Augmenting black-box llms with medical textbooks for clinical question answering.
\newblock \emph{arXiv preprint arXiv:2309.02233}, 2023.

\bibitem[Wei et~al.(2015)Wei, Iyer, and Bilmes]{wei2015submodularity}
Kai Wei, Rishabh Iyer, and Jeff Bilmes.
\newblock Submodularity in data subset selection and active learning.
\newblock In \emph{International conference on machine learning}, pages 1954--1963. PMLR, 2015.

\bibitem[Zameshina et~al.(2023)Zameshina, Teytaud, and Najman]{zameshina2023diverse}
Mariia Zameshina, Olivier Teytaud, and Laurent Najman.
\newblock Diverse diffusion: Enhancing image diversity in text-to-image generation.
\newblock \emph{arXiv preprint arXiv:2310.12583}, 2023.

\bibitem[Zancato et~al.(2023)Zancato, Achille, Liu, Trager, Perera, and Soatto]{zancato2023train}
L. Zancato, A. Achille, T. Liu, M. Trager, P. Perera, and S. Soatto.
\newblock Train/test-time adaptation with retrieval.
\newblock In \emph{2023 IEEE/CVF Conference on Computer Vision and Pattern Recognition (CVPR)}, pages 15911--15921, Los Alamitos, CA, USA, 2023. IEEE Computer Society.

\bibitem[Zeng et~al.(2023)Zeng, Wang, Liao, Li, Huang, Xu, Cao, and Man]{vmig2023}
Yawen Zeng, Yiru Wang, Dongliang Liao, Gongfu Li, Weijie Huang, Jin Xu, Da Cao, and Hong Man.
\newblock Keyword-based diverse image retrieval with variational multiple instance graph.
\newblock \emph{IEEE Transactions on Neural Networks and Learning Systems}, 34\penalty0 (12):\penalty0 10528--10537, 2023.

\bibitem[Zhang et~al.(2023{\natexlab{a}})Zhang, Chen, Zhang, Keung, Liu, Zan, Mao, Lou, and Chen]{zhang2023repocoder}
Fengji Zhang, Bei Chen, Yue Zhang, Jacky Keung, Jin Liu, Daoguang Zan, Yi Mao, Jian-Guang Lou, and Weizhu Chen.
\newblock Repocoder: Repository-level code completion through iterative retrieval and generation.
\newblock In \emph{Proceedings of the 2023 Conference on Empirical Methods in Natural Language Processing}, pages 2471--2484, 2023{\natexlab{a}}.

\bibitem[Zhang et~al.(2022)Zhang, Katz-Samuels, and Nowak]{zhang2022galaxy}
Jifan Zhang, Julian Katz-Samuels, and Robert Nowak.
\newblock Galaxy: Graph-based active learning at the extreme.
\newblock In \emph{International Conference on Machine Learning}, pages 26223--26238. PMLR, 2022.

\bibitem[Zhang et~al.(2023{\natexlab{b}})Zhang, Canal, Zhu, Nowak, Chen, Das, Bhatt, Mussmann, Bilmes, Du, et~al.]{zhanglabelbench}
Jifan Zhang, Gregory Canal, Yinglun Zhu, Robert~D Nowak, Yifang Chen, Arnav~M Das, Gantavya Bhatt, Stephen Mussmann, Jeffrey Bilmes, Simon~S Du, et~al.
\newblock Labelbench: A comprehensive framework for benchmarking adaptive label-efficient learning.
\newblock \emph{arXiv preprint arXiv:2306.09910}, 2023{\natexlab{b}}.

\bibitem[Zhang et~al.(2023{\natexlab{c}})Zhang, Shao, Verma, and Nowak]{zhang2023algorithm}
Jifan Zhang, Shuai Shao, Saurabh Verma, and Robert Nowak.
\newblock Algorithm selection for deep active learning with imbalanced datasets, 2023{\natexlab{c}}.

\bibitem[Zhang et~al.(2021)Zhang, Fang, Zhang, Gao, Li, Dai, Qiao, and Li]{tipadapter}
Renrui Zhang, Rongyao Fang, Wei Zhang, Peng Gao, Kunchang Li, Jifeng Dai, Yu Qiao, and Hongsheng Li.
\newblock Tip-adapter: Training-free clip-adapter for better vision-language modeling, 2021.

\bibitem[Zhang et~al.(2023{\natexlab{d}})Zhang, Hu, Li, Huang, Deng, Li, Qiao, and Gao]{cafo}
Renrui Zhang, Xiangfei Hu, Bohao Li, Siyuan Huang, Hanqiu Deng, Hongsheng Li, Yu Qiao, and Peng Gao.
\newblock Prompt, generate, then cache: Cascade of foundation models makes strong few-shot learners.
\newblock \emph{arXiv preprint arXiv:2303.02151}, 2023{\natexlab{d}}.

\bibitem[Zhao et~al.(2023)Zhao, Wang, Liao, Wang, Duan, and Zhou]{colt2023}
Minyi Zhao, Jinpeng Wang, Dongliang Liao, Yiru Wang, Huanzhong Duan, and Shuigeng Zhou.
\newblock Keyword-based diverse image retrieval by semantics-aware contrastive learning and transformer, 2023.

\bibitem[Zhou et~al.(2022{\natexlab{a}})Zhou, Yang, Loy, and Liu]{cocoop}
Kaiyang Zhou, Jingkang Yang, Chen~Change Loy, and Ziwei Liu.
\newblock Conditional prompt learning for vision-language models.
\newblock In \emph{IEEE/CVF Conference on Computer Vision and Pattern Recognition (CVPR)}, 2022{\natexlab{a}}.

\bibitem[Zhou et~al.(2022{\natexlab{b}})Zhou, Yang, Loy, and Liu]{coop}
Kaiyang Zhou, Jingkang Yang, Chen~Change Loy, and Ziwei Liu.
\newblock Learning to prompt for vision-language models.
\newblock \emph{International Journal of Computer Vision (IJCV)}, 2022{\natexlab{b}}.

\bibitem[Zhou et~al.(2022{\natexlab{c}})Zhou, Alon, Xu, Jiang, and Neubig]{zhou2022docprompting}
Shuyan Zhou, Uri Alon, Frank~F Xu, Zhengbao Jiang, and Graham Neubig.
\newblock Docprompting: Generating code by retrieving the docs.
\newblock In \emph{The Eleventh International Conference on Learning Representations}, 2022{\natexlab{c}}.

\end{thebibliography}
}

% % WARNING: do not forget to delete the supplementary pages from your submission 
% \clearpage
% \setcounter{page}{0}
% \documentclass[12pt]{article}
% \newpage
% \maketitlesupplementary
\newpage
\onecolumn
\begin{center}
    {\Large
    \textbf{\thetitle}\\
    \vspace{0.5em}Supplementary Material \\
    \vspace{1.0em}}
\end{center}
 
% \tableofcontents % Generates the table of contents
\section{Summary of Notations}
We provide a list of notations in the paper in the \cref{tab:summary}.
\begin{table}[htp]
    \small
    \centering
        \begin{tabular}{c|c}
        \toprule
        Notation                   & Meaning                     
        \\
        \midrule 
        $\mathcal{D}^{\text{aux}}$ & Auxiliary dataset                    \\
        $\mathcal{D}^{\text{tar}}$ & Target dataset                       \\
        $V^{\text{aux}}$           & $\Daux$ indices      \\
        $V^{\text{tar}}$           & $\Dtar$ indices         \\
        $\mathcal{X}$              & Domain of images  \\
        $\mathcal{Y}$              & Domain of labels     \\
        $\mathcal{Z}$              & Domain of $\Daux$ elements             \\
        $\Wmat$                    & Pairwise similarity matrix           \\
        $w_{ij}$                   & Element $i,j$ of $\Wmat$             \\
        $f$                        & Submodular function                  \\
        $I_f$                      & Submodular mutual information      \\
        \bottomrule
        \end{tabular}
        \caption{\textbf{Summary of Notations.}}
        \label{tab:summary}
    \vspace{-.2in}
\end{table}
\section{Application to In-Context Learning}
\label{app:nlp}
While our work primarily focuses on vision-related applications, COBRA is a versatile retrieval strategy that can be applied across a wide range of settings. In this section, we explore its use within the framework of in-context learning (ICL) with large language models (LLMs). ICL refers to the process by which an LLM leverages information presented in the input prompt or context to perform tasks or make predictions without requiring additional training or parameter updates~\citep{brown2020languagemodelsfewshotlearners}. For a given test query $x$, a prediction $\hat{y}$ can be obtained by constructing a prompt that incorporates a set of labeled samples $\{(x_1, y_1), ..., (x_n, y_n)\}$. However, due to the limited context window of LLMs, only a subset of labeled samples can be used in the prompt. This subset selection problem can be interpreted as a retrieval task. Using the previously defined notation, the labeled pool corresponds to $\Vaux$, while the query to be labeled is represented as $\Vtar$ where typically $|\Vtar| = 1$. The strategy commonly employed to retrieve labeled samples for prompting is \emph{Sim-Score}~\citep{brown2020languagemodelsfewshotlearners, margatina-etal-2023-active, su2023selective, liu-etal-2022-makes, rubin2022learningretrievepromptsincontext}. However, COBRA can easily be used for this task as well to introduce diversity into the set of selected in-context exemplars.
% examples.

\begin{wraptable}{r}{0.5\textwidth}
    \small
    \centering
    \begin{tabular}{l|c|c|c|c}
        \toprule[1.5pt]
        \textbf{Algorithm} & \textbf{MRPC} & \textbf{SST-2} & \textbf{RTE} & \textbf{TREC} \\
        \midrule
        \emph{Sim-Score} & 67.79 & 90.25 & 54.15 & 86.80 \\
        COBRA & \textbf{68.62} & \textbf{90.71} & \textbf{56.68} & \textbf{87.90} \\
        \bottomrule
    \end{tabular}
    \caption{\textbf{ICL performance}. We consider four classification tasks and evaluate how effective various retrieval appoaches are at selecting samples for prompting the GPT-J-6B model. Test accuracy (in \%) is reported for all settings. Since ICL takes place during test time and we use a pretrained LLM, all numbers are deterministic so error bars are not included.}
    \label{tab:icl}
    \vspace{-.05in}
\end{wraptable}

We consider several text classification datasets including MRPC~\citep{mrpc}, SST-2~\citep{sst2}, RTE~\citep{rte}, and TREC~\citep{trec}. We first use Sentence-BERT~\citep{sentencebert} to create embeddings for the full dataset, which are then used to instantiate the similarity matrix. For each sample in the evaluation set, we use the retrieval strategy of choice to select as many samples as we can fit in the context window from the training set.
These samples are then used to construct a prompt following the template and ordering schemes used in ~\cite{su2023selective}.
% to create a prompt using the template and ordering schemes used in~\cite{su2023selective}. 
Finally, we feed the prompt into the GPT-J-6B model~\citep{gpt-j} and use the verbalizer from~\cite{su2023selective} to recover a prediction. In this set of experiments, we find that COBRA consistently outperforms Sim-Score as shown in Table~\ref{tab:icl}. We include qualitative examples of the types of examples COBRA retrieves in \Cref{app:text_qual}.

% \appendix
\section{Additional Related Work}
\label{app:more_related_work}
\paragraph{Few-shot Learning} Few-shot learning techniques are designed to enhance the data efficiency of neural networks, enabling them to generalize effectively from a limited amount of data. Older lines of research investigate using learned metrics to quantify the distances between images at test time and examplar images belonging to novel categories for classification tasks~\citep{vinyals2016Matching, snell2017prototypical, sung2018learning, oreshkin2018TADAM}, while others have leveraged meta-learning based approaches to learn a set of initial weights that can be quickly adapted towards novel downstream tasks~\citep{finn17maml, Jamal2019task, nichol2018reptile, rajeswaran2019metalearning}. Upon the emergence of foundation models, research in this area has shifted towards leveraging the transferability of large-scale vision-language models that have been trained on web-scale datasets such as CLIP~\citep{radford2021learning} or ALIGN~\citep{jia2021scaling} to learn from limited data. This area is typically referred to as \emph{few-shot adaptation}. Modern techniques in few-shot adaptation typically add a limited number of learnable parameters, while keeping the main image and/or text encoders frozen when adapting to a new task~\citep{coop, cocoop, clipadapter, tipadapter, cafo}. These recent methods are \emph{model-centric} approaches towards improving adaptation, while retrieval methods that focus on acquiring new data points to train on are \emph{data-centric} methods, which are complementary. 

\paragraph{Active Learning} Active learning methods seek to minimize the amount of data that needs to be labeled by selectively querying the most informative data points from a pool of unlabeled samples. Such techniques typically employ model-dependent measures of uncertainty~\citep{atlas1989training, settles2010,gal2017deep,ducoffe2018adversarial,beluch2018power}, diversity~\citep{k_center_coreset,geifman2017deep,citovsky2021batch}, or some combination of  both~\citep{wei2015submodularity,ash2019deep,ash2021gone,zhang2022galaxy,nuggehalli2023direct} for the selection criteria. While most prior active learning works consider settings where the unlabeled and initial labeled pool have the same distribution~\citep{k_center_coreset,geifman2017deep,citovsky2021batch,zhanglabelbench}, some recent work considers class-imbalanced active learning~\citep{kothawade2021Similar, coleman2021similarity, zhang2022galaxy, zhang2023algorithm} where the unlabeled pool is class-imbalanced and may even include out-of-distribution samples. However, active learning assumes that task-specific human annotations are provided for the selected set of samples. This is distinct from the retrieval setting where human annotations are free-form captions that may not be specific to the downstream task.

\paragraph{Distinction from Neural Priming \citep{wallingford2023neural}}
Neural Priming is a recent approach that seeks to improve the zero-shot accuracy of pretrained VLMs by retrieving from samples from LAION-2B. However, this approach differs from the setting we consider in the following ways. \cite{wallingford2023neural} considers a transductive setting where the samples are retrieved based on the test data. This is distinct from our work, where samples are retrieved based only on the available training dataset. Moreover, \cite{wallingford2023neural} only uses a single retrieval strategy which is a combination of \emph{CLIP-score} and \emph{Sim-Score}. In contrast, our work seeks to comprehensively evaluate existing retrieval strategies and demonstrate the superiority of COBRA. 

\paragraph{Distinction from REACT \citep{liu2023learning}} REACT~\citep{liu2023learning} proposes an end-to-end scheme for improving the performance of VLMs. REACT retrieves around 6-10 million based on either \emph{CLIP-Score} or \emph{Sim-Score} based on a classification task, and updates a significant number of the original parameters by training contrastively on the retrieved image/caption pairs.  REACT fixes the retrieval strategy and focuses on which architectural and procedural design choices are important for improving zero-shot performance. Unlike REACT, we explore different retrieval strategies while keeping the few-shot learning strategy fixed. These two directions are complementary to one another, and REACT could benefit from a diversity-aware retrieval strategy such as COBRA. 

\section{Limitations}
\label{app_sec: limitations}
A key limitation of COBRA is that it assumes that the auxiliary dataset has some samples that are relevant to the target task. However, this assumption may not hold for highly novel tasks or specialized domains. Furthermore, diverse retrieval may not offer significant improvements in performance if the train set and test set of the target are highly homogeneous (see~\Cref{appen sec: qualitative flowers}). 

\section{Background and Lemmas from the Main Paper}
\label{sec: appen background and lemma}
In this section, we re-write the lemmas from the main paper (\Cref{sec: background} and \Cref{sec: methods} to be precise). 

\begin{lemma} [\textit{Graph Cut Mutual Information}]
\label{app_thm: kNN_as_GCMI}
Let $G = (V, E)$ be a graph with edge weights defined with symmetric $\Wmat = [w]_{i, j} \in \R_{+}\cup \{0\} $. For any set $A \subseteq V$ of vertices, let $f(A) = \sum_{i \in A} \sum_{j \in V \setminus A} w_{ij}$ be the graph cut function. Now given any two sets A and B, such that, $ A \cap B = \emptyset $, then, 
\begin{equation}
    I_{f}(A;B) = 2 \sum_{i \in A} \sum_{j \in B} w_{ij}
\end{equation}
\end{lemma}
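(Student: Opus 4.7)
The plan is a direct computation that decomposes each graph cut term along the partition of $V$ induced by $A$ and $B$. Since $A \cap B = \emptyset$, the ground set $V$ splits disjointly as $A \sqcup B \sqcup R$, where $R = V \setminus (A \cup B)$. I would write each of $f(A)$, $f(B)$, and $f(A \cup B)$ as sums over edges crossing the appropriate boundary, organized according to which block (among $A$, $B$, $R$) each endpoint lies in.

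Concretely, I would first observe that $V \setminus A = B \sqcup R$ and $V \setminus B = A \sqcup R$, which yields the decompositions
\begin{align*}
f(A) &= \sum_{i\in A}\sum_{j\in B} w_{ij} + \sum_{i\in A}\sum_{j\in R} w_{ij}, \\
f(B) &= \sum_{i\in B}\sum_{j\in A} w_{ij} + \sum_{i\in B}\sum_{j\in R} w_{ij}, \\
f(A\cup B) &= \sum_{i\in A}\sum_{j\in R} w_{ij} + \sum_{i\in B}\sum_{j\in R} w_{ij}.
\end{align*}
Substituting these into $I_f(A;B) = f(A) + f(B) - f(A\cup B)$, the two ``crossing to $R$'' terms cancel, leaving only the edges between $A$ and $B$. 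Invoking symmetry of $\Wmat$ (so that $\sum_{i\in B}\sum_{j\in A} w_{ij} = \sum_{i\in A}\sum_{j\in B} w_{ij}$) collapses the result to $2\sum_{i\in A}\sum_{j\in B} w_{ij}$, as claimed.

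The argument requires no inequalities or optimization machinery; the only subtlety is keeping the three blocks $A$, $B$, $R$ straight and using disjointness of $A$ and $B$ to ensure the decompositions are clean partitions rather than overlapping sums. I do not anticipate any obstacle beyond this bookkeeping. A natural corollary I would mention (though not prove, since it is not asked) is that taking $B = \Vtar$ and restricting $A \subseteq \Vaux$ immediately makes the disjointness hypothesis automatic, so Corollaries \ref{thm: knn} and \ref{thm: knn_clip} follow without further effort from the identity above.
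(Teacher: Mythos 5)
Your proposal is correct and is essentially identical to the paper's own proof: both decompose $f(A)$, $f(B)$, and $f(A\cup B)$ over the partition $A$, $B$, $V\setminus(A\cup B)$, cancel the terms crossing into the remainder, and invoke symmetry of $\Wmat$ to merge the two $A$--$B$ sums into the factor of $2$. No differences worth noting.
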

\begin{proof}
We first expand the value of $f(A)$ for given any $A$ and $B$ such that $A \cap B = \emptyset$
\begin{align}
  f(A) & = \sum_{i \in A}\sum_{j \in V \setminus A} w_{i, j}\\
  & = \sum_{i \in A}\sum_{j \in B} w_{i, j} + \sum_{i \in A}\sum_{j \in V \setminus (A \cup B)} w_{i, j} 
\end{align}
Similarly, 
\begin{align}
  f(B) & = \sum_{i \in B}\sum_{j \in V \setminus B} w_{i, j}\\
  & = \sum_{i \in B}\sum_{j \in A} w_{i, j} + \sum_{i \in B}\sum_{j \in V \setminus (A \cup B)} w_{i, j} 
\end{align}
and,
\begin{align}
  f(A \cup B) & = \sum_{i \in A \cup B}\sum_{j \in V \setminus (A \cup B)} w_{i, j}\\
  & =  \sum_{i \in A}\sum_{j \in V \setminus (A \cup B)} w_{i, j}  +  \sum_{i \in B}\sum_{j \in V \setminus (A \cup B)} w_{i, j} 
\end{align}
Therefore, 
\begin{align}
  f(A) + f(B) - f(A \cup B) & = \sum_{i \in A}\sum_{j \in B} w_{i, j} + \sum_{i \in A}\sum_{j \in B} w_{j, i}\\
  & =  2 \sum_{i \in A} \sum_{j \in B} w_{i, j} \\
  &= I_{f}(A;B)
\end{align}
Where the last step is due to the symmetric nature of the matrix.
\end{proof}

\begin{lemma}[\textit{Soft Class Balancing}]
\label{app_lemma: class_balancing}
Let $h : V \to [C]$ map any image (indexed using V) to the corresponding (pseudo)label among C classes. For any subset $A \subseteq V$, define the count for \textit{u}-th class in set A as $m_u(A) \triangleq \sum_{a \in A} \I[h(a) = u]$ and normalized count as $\hat{p}_u(A) \triangleq m_u(A)/|A|$. Further denote $\hat{\p}(A) = (\hat{p}_1(A), \hat{p}_2(A), \ldots, \hat{p}_C(A))$ the empirical probability based on normalized counts. For any given probability distribution $\p$ defined over C classes, and $k \in \Natural$, we have the following -

\begin{equation}
    \argmin_{\substack{A \subseteq V \\ |A| = k}}\mathbb{D}_{\text{KL}} \left( \p \mid \mid \hat{\p}(A)\right) = \argmax_{\substack{A \subseteq V \\ |A| = k}} \sum_{u = 1}^C p_u \log{(m_u(A))}
\end{equation}
In fact for $\p = \left(\frac{1}{C}, \ldots, \frac{1}{C}\right)$, that is, uniform distribution over each class label, maximizing $\sum_{u = 1}^C \frac{\log{(m_u(A))}}{C}$ is equivalent to finding an $A \subseteq V$, $|A|=k$ that is class balanced. 
\end{lemma}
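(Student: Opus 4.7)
The plan is to verify the equivalence by a direct algebraic manipulation of the KL divergence, and then to argue the uniform-case claim via a symmetry/concavity argument.

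First, I would expand
\begin{align*}
\mathbb{D}_{\text{KL}}(\p \,\|\, \hat{\p}(A)) &= \sum_{u=1}^C p_u \log p_u - \sum_{u=1}^C p_u \log \hat{p}_u(A) \\
&= \sum_{u=1}^C p_u \log p_u - \sum_{u=1}^C p_u \log m_u(A) + \log k,
\end{align*}
using $\hat{p}_u(A) = m_u(A)/|A|$, $|A|=k$, and $\sum_u p_u = 1$. The first and third terms on the right are constants (independent of the choice of $A$ once $k$ and $\p$ are fixed), so minimizing the left side over $A$ with $|A|=k$ is equivalent to maximizing $\sum_{u=1}^C p_u \log m_u(A)$, which is the claimed identity. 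I would note the convention that any $A$ with $m_u(A)=0$ for some $u$ with $p_u>0$ is excluded on both sides simultaneously (KL is $+\infty$ and $\log m_u(A) = -\infty$), so the equivalence is preserved.

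For the uniform specialization $\p = (1/C, \ldots, 1/C)$, the objective becomes $\tfrac{1}{C}\sum_{u=1}^C \log m_u(A)$, maximized subject to the integer constraints $m_u(A) \in \mathbb{Z}_{\geq 0}$ and $\sum_u m_u(A) = k$ (plus feasibility imposed by the class composition of $V$). I would then observe that $t \mapsto \log t$ is strictly concave, so by Jensen's inequality (or directly by AM--GM on the product $\prod_u m_u(A)$ with fixed sum), the maximum over the real relaxation is attained uniquely at $m_u = k/C$ for every $u$; hence any integer-feasible maximizer has counts as equal as possible across classes, which is precisely class balance. If $V$ contains at least $k/C$ samples per class and $C \mid k$, this maximum is exactly achieved; otherwise it is achieved in the ``closest feasible'' sense.

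The proof is essentially bookkeeping, so I do not anticipate a substantive obstacle. The only point that deserves care is the boundary behavior when $m_u(A)=0$ for some class, where both sides are formally undefined/infinite; treating these $A$ as infeasible (which the $\argmin$ and $\argmax$ do automatically) keeps the equivalence clean. Everything else follows from the standard decomposition of KL divergence and the concavity of $\log$.
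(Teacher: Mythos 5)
Your proof is correct and follows essentially the same route as the paper's: expand the KL divergence using $\hat{p}_u(A)=m_u(A)/|A|$ with $|A|=k$ fixed, drop the $A$-independent terms ($\sum_u p_u\log p_u$ and $\log k$), and flip the $\argmin$ to an $\argmax$ of $\sum_u p_u\log m_u(A)$. Your added remarks on the $m_u(A)=0$ boundary case and the concavity/AM--GM justification of the uniform specialization are careful touches the paper omits, but they do not change the underlying argument.
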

\begin{proof}
    The proof step follows from expanding the definition of KL-Divergence. 

\begin{align}
     \mathbb{D}_{\text{KL}} \left( \p \mid \mid \hat{\p}(A)\right) &= \sum_{u=1}^C  p_u \log{\left(\frac{p_u}{\hat{wp}(A)_u}\right)}    \\
     &= \sum_{u=1}^C  p_u \log{p_u} - \sum_{u=1}^C  p_u \log{\hat{p}(A)_u} \\
     &= - H(\p) - \sum_{u=1}^C  p_u \log{\left(\frac{m_u(A)}{|A|}\right)} \\
     &= - H(\p) + \log{|A|} - \sum_{u=1}^C  p_u \log{m_u(A)} \\
     &= - \left(H(\p) - \log{|A|} + \sum_{u=1}^C  p_u \log{m_u(A)}\right)
\end{align}
Therefore, 
\begin{align}
    \argmin_{\substack{A \subseteq V \\ |A| = k}}\mathbb{D}_{\text{KL}} \left( \p \mid \mid \hat{\p}(A)\right) &= \argmin_{\substack{A \subseteq V \\ |A| = k}} - \left(H(\p) - \log{|A|} + \sum_{u=1}^C  p_u \log{m_u(A)}\right)\\
    &= \argmax_{\substack{A \subseteq V \\ |A| = k}}  \left(H(\p) - \log{|A|} + \sum_{u=1}^C  p_u \log{m_u(A)}\right) \\ 
    &= \argmax_{\substack{A \subseteq V \\ |A| = k}} \sum_{u = 1}^C p_u \log{(m_u(A))}
\end{align}

\end{proof}

\section{Complexity Analysis}
\label{appen sec: complexity}

For simplicity, let $M$ ($|\Vtar|$ according to the main paper) be the total number of training samples provided and $N$ ($|\Vaux|$ according to the main paper) be the size of an auxiliary dataset. For our experiments, we use sparse matrices constructed using FAISS with $r$ nearest neighbors in each row, where $r << M+N$; this makes our space complexity $\mathcal{O}(r(M+N))$. For the time complexity, assuming we select $k$ samples, given the greedy procedure (further accelerated by the priority queue \citep{minoux2005accelerated}), it is  $\mathcal{O}(Nk)$, which is of a similar order to Sim-Score. For the time complexity of constructing a similarity matrix, while a brute force method will be $\mathcal{O}(rN^2)$, using FAISS it can be brought down to $\mathcal{O}(rN \text{poly}(\log N))$ with the help of HNSW for approximate nearest neighbor search. For empirical run time, it takes roughly 45 min to construct the similarity matrix in the order of millions, which is not significant compared to the cost of obtaining the features for the full retrieval pool (necessary for all retrieval strategies).

\section{Submodular Maximization}
\label{appen sec: submodular max}
In this section, we provide the pseudocode for maximizing submodular function. Since COBRA uses a monotone non-decreasing and normalized ($f(\emptyset)=0$) submodular function (a.k.a polymatroidal functions) for objective, we outline the greedy algorithm in \Cref{alg: greedy} that offers $1-e^{-1}$ approximation to the true maximum under cardinality constraints \citep{nemhauser1978analysis}.  

\begin{algorithm}
\caption{Greedy Algorithm for Maximizing polymatroidal function under Cardinality Constraint}
\label{alg: greedy}
\begin{algorithmic}[1]
\Function{NEMHAUSER-GREEDY}{$f$, $V$, $k$}
    \State \textbf{Input:} Submodular function $f$, ground set $V$, cardinality constraint  $k$
    \State \textbf{Output:} $S$ that maximizes $f$ under cardinality constraint
    \State Initialize $S_0 \gets \emptyset$
    \State $t \gets 0$
    \While{$|S_t|< k$}
        \State $v \gets \arg\max_{v \in V \setminus S_t} f(v \mid S_t)$ \Comment{$f(x \mid S_t) = f(S_t \cup \{v\}) - f(S_t)$}
        \State $S_{t+1} \gets S_t \cup \{v\}$
        \State $t \gets t + 1$
    \EndWhile
    \State \Return $S_k$
\EndFunction
\end{algorithmic}
\end{algorithm}

\section{Additional Baseline Details}
\label{app:baseline}
In this section, we compare to other retrieval baselines. Below, we describe each additional approach. Note that, similar to the main paper, every additional baseline is conducted on Imagenet-1K. 

% \paragraph{Sus-X Style Retrieval} Sus-X~\citep{udandarao2023susx} is an end-to-end  

\paragraph{Maximum Marginal Relevance (MMR)} Maximal Marginal Relevance is a commonly used approach for information retrieval~\citep{carbonell1998} that balances diversity and relevance. This approach iteratively adds $v \in \Vaux$ to $A$, until a cardinality limit is attained, by solving the following optimization problem:

\begin{equation}
    v \triangleq \operatorname*{arg\,max}_{i \in \Vaux \setminus A} \left[ \lambda_{MMR} \left( \text{sim}_1(i, \Vtar) \right) - (1 - \lambda_{MMR}) \max_{j \in A} \text{sim}_2(i, j) \right]
\end{equation}

Intuitively, the first term encourages selecting elements that are similar to $\Vtar$ while the second term ensures that the new samples are not too close to samples already in $A$. In the implementation we use for experiments, we use $\text{sim}_1(i, \Vtar) = \max_{j \in \Vtar} w_{ij}$ and $\text{sim}_2(i, j) = w_{ij}$.

\paragraph{CLIP-Score w/ Diverse LLM Prompts} It is possible to use LLM-generated prompts with CLIP-Score, as proposed in Sus-X~\citep{udandarao2023susx}. Specifically, we generate several prompts for each class using the templates generated by GPT-3 provided by~\cite{pratt2022does}, retrieve a few samples with high similarity to each prompt, and aggregate all samples to create the final set of retrieval samples. While this may improve the diversity of samples selected when compared to CLIP-Score with a single text prompt, this approach still does not model interactions between the selected samples. Therefore, duplicates or near duplicates may still be selected if there are text prompts that are similar to one another. 

\paragraph{SDXL-Aug w/ Diverse LLM Prompts}
Similar to the previous section, we use LLM-generated prompts to sample synthetic data from Stable Diffusion XL (SDXL). Specifically, we generate several prompts for each class using the templates generated by GPT-3 provided by~\cite{pratt2022does}, and then use them to prompt SDXL. For the same reason as above, this does not guarantee diversity since GPT-3 outputs may still be redundant. 

\paragraph{Sample Efficiency}
We now study if retrieving more samples from the baselines, Sim-score and CLIP-score respectively can outperform COBRA. Specifically, fixing the retrieval budget for COBRA to 16K, i.e.,  16 images per class (IPC), and varying the baseline retrieval budget to 32K, 64K, and 128K, i.e., 32, 64, and 128 IPC, respectively. As expected, retrieving more samples does not mean it would not have redundancy, and even though baselines retrieve 8x more samples than COBRA, they continue to underperform as shown in \Cref{fig:sample_eff}. 

\begin{figure}[htp]
    \centering
    \includegraphics[width=0.5\linewidth]{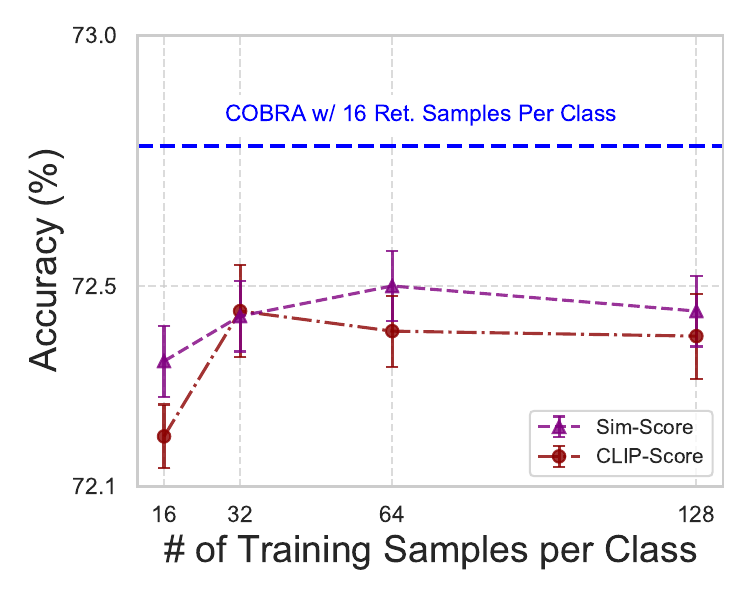}
    \caption{\textbf{Sample Efficiency} We demonstrate that retrieving more samples does not necessarily improve the performance of Sim-Score and CLIP-Score and that COBRA maintains its edge even when retrieving 8x fewer samples.}
    \label{fig:sample_eff}
\end{figure}

\section{Additional Few-Shot Learning Strategies}

In this section, we include results that test various retrieval strategies for other, more recent, few-shot learning strategies~\cite{APE2023, AMU2024}. We demonstrate in Figure~\ref{fig:ape_amu} that COBRA continues to outperform the next best baseline. 

\begin{figure}[htp!]
    \centering
    % First subfigure
    \begin{subfigure}[t]{0.37\columnwidth}
        \includegraphics[width=\textwidth, height=\textwidth]{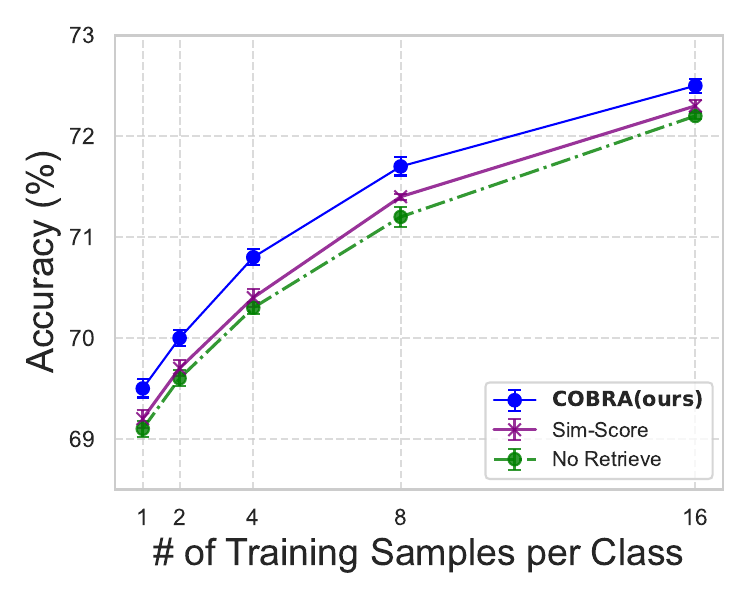}
        \vspace{-.2in}
        \caption{APE~\cite{APE2023}}
        \label{fig:ape}
        \vspace{-.1in}
    \end{subfigure}
    \hspace{0.1\linewidth}
    % Second subfigure
    \begin{subfigure}[t]{0.37\columnwidth}
        \includegraphics[width=\textwidth, height=\textwidth]{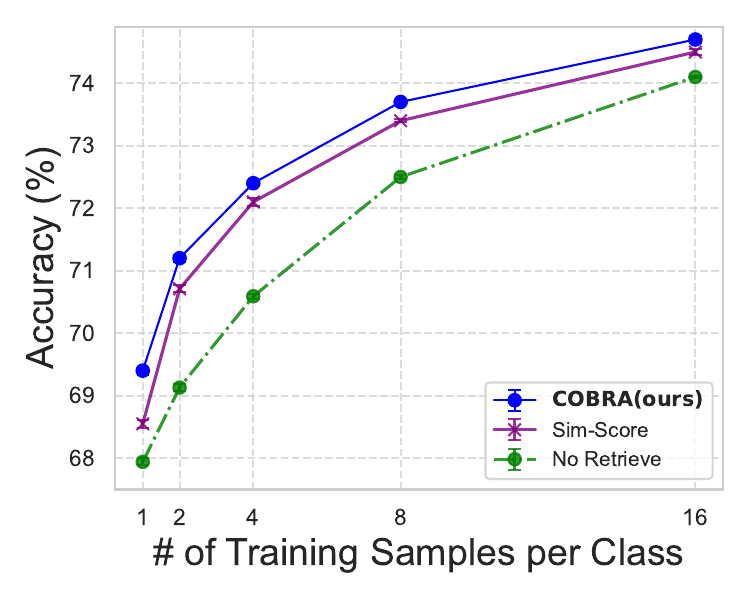}
        \vspace{-.2in}
        \caption{AMU~\cite{AMU2024}}
        \label{fig:amu}
        \vspace{-.1in}
    \end{subfigure}

    % \caption{\emph{\textbf{Correlation with Diversity}}}
    \caption{Results of using COBRA with additional few-shot adaptation strategies}
    % \vspace{-0.3in}
    \label{fig:ape_amu}
\end{figure}
\newpage

\section{Toy Example Setup}
\label{appen sec: toy example setup}

Here we describe the toy example setup from the main paper in greater detail. For $\Dtar$ we use 16 samples from each of 4 different Gaussian, making it 64 samples per class. For the $\Daux$ we sample 25000 points from a mixture of Gaussian such that the $\Daux$ is heavily imbalanced between its modes and reflects how auxiliary data may be quite skewed
in real-world settings. For similarity between any two points, $i$, and $j$, we use a simple $\ell_2$ distance-based kernel, that is, $w_{i, j} = \exp{\left(-\|x_i-x_j\|^2_2\right)}$. Using this we instantiate $\Wmat$ described in the main paper, for both the COBRA as well as \emph{Sim-score}. To provide class information, we need to associate every sample in this $\Daux$ with one of the 4 classes in $\Dtar$. To do so, we fit a Linear SVM \citep{cortes1995support} on 64 samples from $\Dtar$ and use its predicted labels on $\Daux$ to associate each of its samples with one of the 4 labels. Lastly, we do not use any optional quality function.

With the setup above, retrieval is performed to fetch 128 samples from $\Daux$, and is shown in Fig.\ref{fig:toy_appendix}.

\begin{figure}[]
    \centering
    \includegraphics[scale=0.75]{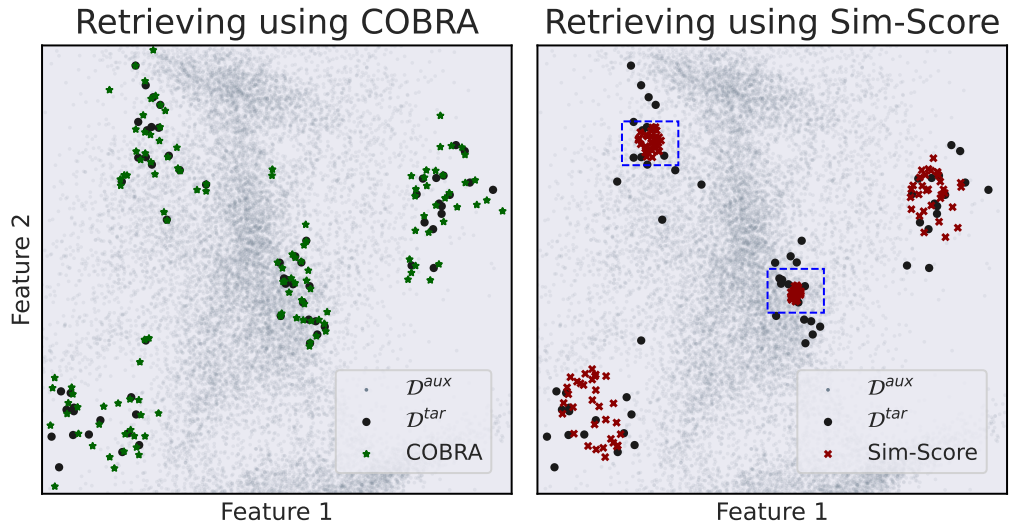}
    \caption{\textbf{2D Example} We consider a simple example where $|\Dtar| = 64$ and $|\Daux| = 25000$. From $\Daux$, we retrieve a subset of size 128 based on $\Dtar$. COBRA (left) effectively covers the target $\Dtar$, on the other hand, \emph{Sim-Score} (right) selects clumpy samples, as highlighted by the bounding boxes.}
    \label{fig:toy_appendix}
\end{figure}

\section{Implementation Details}
\label{appen sec: hyperparameters}
For convenience, we again describe the auxiliary data collection procedure here.

\subsection{Auxiliary Data creation}

We follow the following procedure that gives us $\Daux$ and associated class labels for each sample in $\Daux$ when provided with target data $\Dtar$.

\begin{enumerate}[leftmargin=*,labelindent=0em,partopsep=-2pt,topsep=1pt,itemsep=2pt]
    \item \textbf{Collect Target Dataset} $\Dtar$: We collect a small target dataset by sampling a standard image classification dataset uniformly at random, retaining 1-16 images per class.
    \item \textbf{Collect Auxilary Dataset} $\Daux$: We follow the prefiltering step proposed by~\cite{wallingford2023neural} and use string matching to discard images with captions that do not contain the name of any class name in the target dataset. This stage circumvents the need to compute features for the full auxiliary pool while filtering out images that are unlikely to contain any semantically relevant information. \textbf{\textit{We then use the class name contained in the caption as the label for the image}}. We will refer to this \textit{(pseudo)labels} with \textit{y} from now onwards. 
\end{enumerate}

\subsection{Similarity Construction, COBRA and CLIP-Score}
Both COBRA and Sim-score use a similarity matrix to model similarity between and among the samples in $\Dtar$ and $\Daux$. To this end, we model similarity between any two points, $i, j \in V$ (recall that $V = \Vtar \cup \Vaux$, and indexing defined jointly over $\Dtar$ and $\Daux$), we use a simple dot product based similarity. Since the model we are fine-tuning is a pretrained CLIP model, we use its vision encoder (call it $h_{\text{vision}}$) to provide a mapping of every $i, j \in V$ in some $d$-dimensional space. Therefore, $\Wmat = [w_{i, j}]$ is such that $w_{i, j} = 1 + \frac{\langle h_{\text{vision}}(x_i), h_{\text{vision}}(x_j) \rangle}{\|h_{\text{vision}}(x_i)\| \cdot \|h_{\text{vision}}(x_j)\|}$. Note that adding 1 makes sure that $w_{i, j} \geq 0$ for every pair. We use this directly for \emph{Sim-score}, such that $\text{Sim-score}(i) = \sum_{j \in \Vtar} w_{i, j}$, however, for COBRA we perform further sparsification. 

We sparsify the $\Wmat$ in two stages. First, we set $w_{i, j} = 0$ such that $y_i \neq y_j$ (that is, if $i$ and $j$ do not have the same (pseudo)label). We then further sparsify the remaining matrix, by only retaining the top-$k$ (hyperparameter) entries in each row and setting the rest of them to be 0. This corresponds to finding the top-$k$ nearest neighbors according to cosine distance. Since COBRA can have an optional quality function $q$, we use this to be the Sim-score itself. That is for any $i \in \Vaux$, $q(i) = \sum_{j \in \Vtar} w_{i, j}$, and therefore for any $A \subseteq \Vaux$, $q(A) = \sum_{i \in A} \sum_{j \in \Vtar} w_{i, j}$. 

For CLIP-score, we use the text templates provided by \cite{radford2021learning} for every class in the target dataset. Then based on the (pseudo)label for every sample in $\Daux$, we compute the \emph{CLIP-score} for every sample $i \in \Vaux$ as $\text{CLIP-score}(i) = \frac{\langle h_{\text{vision}}(x_i), h_{\text{text}}(\mathcal{T}(y_i))) \rangle}{\|h_{\text{vision}}(x_i)\| \cdot \|h_{\text{text}}(\mathcal{T}(y_i))) \|}$, where $\mathcal{T}(y)$ is text template for any label $y$ and $\text{text}$ is text encoder for the CLIP model. 

\subsection{Hyperparameters}
We mention the basic learning hyperparameters in the table~\ref{tab:hyperameters_learning}. We determine the hyperparameter configuration to use by running a sweep while training on only the samples from the target dataset in the 16-shot setting. We then fix these hyperparameters across all retrieval algorithms. We provide the COBRA-specific hyperparameters in - \Cref{tab: hyperameters_cobra_in1k_vitb_16}, \Cref{tab: hyperameters_cobra_fgvc_vitb_16}, \Cref{tab: hyperameters_cobra_flowers_vitb_16}, \Cref{tab: hyperameters_cobra_dtd_vitb_16} and \Cref{tab: hyperameters_cobra_food_vitb_16} for each dataset. Lastly, for ablation on different backbones, we keep $\mu$ and $\lambda$ fixed to the original values, while only tuning the k (neighbor) parameter, which we discuss in \Cref{tab: hyperameters_backbone_k}. Note that for reporting numbers we run three trials. 

\begin{table}[]
\small
\centering
\caption{Basic Hyperparameters used for learning algorithm}
\vspace{0.1in}
\resizebox{0.67\textwidth}{!}{%
\begin{tabular}{cccc}
\toprule
\textbf{Dataset}                        & \textbf{Name}          & \textbf{Final Value} & \textbf{Tuning Range}               \\
\midrule
\multirow{3}{*}{Imagenet}      & Learning Rate & 0.01    & \{0.001, 0.01, 0.02, 0.1\} \Tstrut \\
                               & Epochs        & 20          & \{20, 30, 40\}             \\
                               & Optimizer     & SGD         & \{SGD, AdamW\}             \\
\midrule
\multirow{3}{*}{FGVC-Aircraft} & Learning Rate & 0.001    & \{0.001, 0.01, 0.02, 0.1\} \\
                               & Epochs        & 30          & \{20, 30, 40\}             \\
                               & Optimizer     & SGD         & \{SGD, AdamW\}             \\
\midrule
\multirow{3}{*}{Flowers-102}   & Learning Rate &       0.01      & \{0.001, 0.01, 0.02, 0.1\} \\
                               & Epochs        &   20         & \{20, 30, 40\}             \\
                               & Optimizer     &        AdamW     & \{SGD, AdamW\}             \\
\midrule
\multirow{3}{*}{DTD}           & Learning Rate &      0.1       & \{0.001, 0.01, 0.02, 0.1\} \\
                               & Epochs        & 20 & \{20, 30, 40\}             \\
                               & Optimizer     &      SGD       & \{SGD, AdamW\}    \Bstrut  \\      
\midrule
\multirow{3}{*}{Food101}           & Learning Rate &      0.001   & \{0.001, 0.01, 0.02, 0.1\} \\
                  & Epochs        & 20  & \{20, 30, 40\}             \\
                               & Optimizer     & AdamW      & \{SGD, AdamW\}    \Bstrut  \\      

\bottomrule
\end{tabular}}
\label{tab:hyperameters_learning}
\end{table}

\begin{table}[]
\centering
\caption{Hyperparameters used for COBRA in Imagenet-1K}
\vspace{0.1in}
\resizebox{0.5\textwidth}{!}{%
\begin{tabular}{cccc}
\toprule
\textbf{Shots}               & \textbf{Name}         & \textbf{Value} & \textbf{Tuning Range}                        \\
\midrule
\multirow{3}{*}{1}  & k (neighbor) & 5     & \{5, 8, 16, 32, 64, 128, 256, 512\}  \Tstrut\\
                    & $\lambda$    & 1     & Fixed to 1                          \\
                    & $\mu$        & 0     & Fixed to 0                          \\
\midrule
\multirow{3}{*}{2}  & k (neighbor) & 512   & \{5, 8, 16, 32, 64, 128, 256, 512\} \\
                    & $\lambda$    & 1     & Fixed to 1                          \\
                    & $\mu$        & 0     & Fixed to 0                          \\
\midrule
\multirow{3}{*}{4}  & k (neighbor) & 32    & \{5, 8, 16, 32, 64, 128, 256, 512\} \\
                    & $\lambda$    & 1     & Fixed to 1                          \\
                    & $\mu$        & 0     & Fixed to 0                          \\
\midrule
\multirow{3}{*}{8}  & k (neighbor) & 512   & \{5, 8, 16, 32, 64, 128, 256, 512\} \\
                    & $\lambda$    & 1     & Fixed to 1                          \\
                    & $\mu$        & 0     & Fixed to 0                          \\
\midrule
\multirow{3}{*}{16} & k (neighbor) & 512   & \{5, 8, 16, 32, 64, 128, 256, 512\} \\
                    & $\lambda$    & 1     & Fixed to 1                          \\
                    & $\mu$        & 0     & Fixed to 0                          \\
\bottomrule
\end{tabular}}
\label{tab: hyperameters_cobra_in1k_vitb_16}
\end{table}

\begin{table}[h!]
\centering
\caption{Hyperparameters used for COBRA in FGVC-Aircrafts}
\vspace{0.1in}
\resizebox{0.5\textwidth}{!}{%
\begin{tabular}{cccc}
\toprule
\textbf{Shots}               & \textbf{Name}         & \textbf{Value} & \textbf{Tuning Range}                        \\
\midrule
\multirow{3}{*}{1}  & k (neighbor) & 256     & \{5, 8, 16, 32, 64, 128, 256, 512\}  \Tstrut\\
                    & $\lambda$    & 1     & Fixed to 1                          \\
                    & $\mu$        & 0     & Fixed to 0                          \\
\midrule
\multirow{3}{*}{2}  & k (neighbor) & 128   & \{5, 8, 16, 32, 64, 128, 256, 512\} \\
                    & $\lambda$    & 1     & Fixed to 1                          \\
                    & $\mu$        & 0     & Fixed to 0                          \\
\midrule
\multirow{3}{*}{4}  & k (neighbor) & 128    & \{5, 8, 16, 32, 64, 128, 256, 512\} \\
                    & $\lambda$    & 1     & Fixed to 1                          \\
                    & $\mu$        & 0     & Fixed to 0                          \\
\midrule
\multirow{3}{*}{8}  & k (neighbor) & 512   & \{5, 8, 16, 32, 64, 128, 256, 512\} \\
                    & $\lambda$    & 1     & Fixed to 1                          \\
                    & $\mu$        & 0     & Fixed to 0                          \\
\midrule
\multirow{3}{*}{16} & k (neighbor) & 256   & \{5, 8, 16, 32, 64, 128, 256, 512\} \\
                    & $\lambda$    & 1     & Fixed to 1                          \\
                    & $\mu$        & 0     & Fixed to 0                          \\
\bottomrule
\end{tabular}}
\label{tab: hyperameters_cobra_fgvc_vitb_16}
\end{table}

\begin{table}[h!]
\centering
\caption{Hyperparameters used for COBRA in Flowers-102.}
\vspace{0.1in}
\resizebox{0.5\textwidth}{!}{%
\begin{tabular}{cccc}
\toprule
\textbf{Shots}               & \textbf{Name}         & \textbf{Value} & \textbf{Tuning Range}                        \\
\midrule
\multirow{3}{*}{1}  & k (neighbor) &    128  & \{5, 8, 16, 32, 64, 128, 256, 512\}  \Tstrut\\
                    & $\lambda$    & 1     & Fixed to 1                          \\
                    & $\mu$        &  0.2    & \{0.1, 0.2, 0.5\}                           \\
\midrule
\multirow{3}{*}{2}  & k (neighbor) &  128 & \{5, 8, 16, 32, 64, 128, 256, 512\} \\
                    & $\lambda$    & 1     & Fixed to 1                          \\
                    & $\mu$        &    0.2  &   \{0.1, 0.2, 0.5\}                          \\
\midrule
\multirow{3}{*}{4}  & k (neighbor) &   16  & \{5, 8, 16, 32, 64, 128, 256, 512\} \\
                    & $\lambda$    & 1     & Fixed to 1                          \\
                    & $\mu$        &    0.5  &    \{0.1, 0.2, 0.5\}                         \\
\midrule
\multirow{3}{*}{8}  & k (neighbor) &   8 & \{5, 8, 16, 32, 64, 128, 256, 512\} \\
                    & $\lambda$    & 1     & Fixed to 1                          \\
                    & $\mu$        &   0.5   &    \{0.1, 0.2, 0.5\}                         \\
\midrule
\multirow{3}{*}{16} & k (neighbor) &   16 & \{5, 8, 16, 32, 64, 128, 256, 512\} \\
                    & $\lambda$    & 1     & Fixed to 1                          \\
                    & $\mu$        &    0.5  &  \{0.1, 0.2, 0.5\}                           \\
\bottomrule
\end{tabular}}
\label{tab: hyperameters_cobra_flowers_vitb_16}
\vspace{0.1in}
\end{table}

\begin{table}[h!]
\centering
\caption{Hyperparameters used for COBRA in DTD.}
\vspace{0.1in}
\resizebox{0.5\textwidth}{!}{%
\begin{tabular}{cccc}
\toprule
\textbf{Shots}               & \textbf{Name}         & \textbf{Value} & \textbf{Tuning Range}                        \\
\midrule
\multirow{3}{*}{1}  & k (neighbor) &    128  & \{5, 8, 16, 32, 64, 128, 256, 512\}  \Tstrut\\
                    & $\lambda$    & 1     & Fixed to 1                          \\
                    & $\mu$        &   0.5   &    \{0.1, 0.2, 0.5\}                        \\
\midrule
\multirow{3}{*}{2}  & k (neighbor) &  64  & \{5, 8, 16, 32, 64, 128, 256, 512\} \\
                    & $\lambda$    & 1     & Fixed to 1                          \\
                    & $\mu$        &   0.5   & \{0.1, 0.2, 0.5\}                            \\
\midrule
\multirow{3}{*}{4}  & k (neighbor) &   64  & \{5, 8, 16, 32, 64, 128, 256, 512\} \\
                    & $\lambda$    & 1     & Fixed to 1                          \\
                    & $\mu$        &   0.5   & \{0.1, 0.2, 0.5\}                           \\
\midrule
\multirow{3}{*}{8}  & k (neighbor) &  128  & \{5, 8, 16, 32, 64, 128, 256, 512\} \\
                    & $\lambda$    & 1     & Fixed to 1                          \\
                    & $\mu$        &  0.5    &  \{0.1, 0.2, 0.5\}                           \\
\midrule
\multirow{3}{*}{16} & k (neighbor) &   16 & \{5, 8, 16, 32, 64, 128, 256, 512\} \\
                    & $\lambda$    & 1     & Fixed to 1                          \\
                    & $\mu$        &  0.5    &  \{0.1, 0.2, 0.5\}                          \\
\bottomrule
\end{tabular}}
\label{tab: hyperameters_cobra_dtd_vitb_16}
\end{table}

\begin{table}[h!]
\centering
\caption{Hyperparameters used for COBRA in Food101.}
\vspace{0.1in}
\resizebox{0.5\textwidth}{!}{%
\begin{tabular}{cccc}
\toprule
\textbf{Shots}               & \textbf{Name}         & \textbf{Value} & \textbf{Tuning Range}                        \\
\midrule
\multirow{3}{*}{1}  & k (neighbor) &    256  & \{5, 8, 16, 32, 64, 128, 256, 512\}  \Tstrut\\
                    & $\lambda$    & 1     & Fixed to 1                          \\
                    & $\mu$        &   0   &   \{0, 0.2, 0.5\}                      \\
\midrule
\multirow{3}{*}{2}  & k (neighbor) &  512  & \{5, 8, 16, 32, 64, 128, 256, 512\} \\
                    & $\lambda$    & 1     & Fixed to 1                          \\
                    & $\mu$        &   0   & \{0, 0.2, 0.5\}                              \\
\midrule
\multirow{3}{*}{4}  & k (neighbor) &   32  & \{5, 8, 16, 32, 64, 128, 256, 512\} \\
                    & $\lambda$    & 1     & Fixed to 1                          \\
                    & $\mu$        &   0   & \{0, 0.2, 0.5\}                                \\
\midrule
\multirow{3}{*}{8}  & k (neighbor) &  8  & \{5, 8, 16, 32, 64, 128, 256, 512\} \\
                    & $\lambda$    & 1     & Fixed to 1                          \\
                    & $\mu$        &  0.5    &  \{0, 0.2, 0.5\}                                  \\
\midrule
\multirow{3}{*}{16} & k (neighbor) &  8 & \{5, 8, 16, 32, 64, 128, 256, 512\} \\
                    & $\lambda$    & 1     & Fixed to 1                          \\
                    & $\mu$        &  0.5    &  \{0, 0.2, 0.5\}                            \\
\bottomrule
\end{tabular}}
\label{tab: hyperameters_cobra_food_vitb_16}
\end{table}

\begin{table}[h!]
\centering
\caption{Hyperparameters used for COBRA for backbone ablation. All the remaining ones are kept the same as ViT-B/16}
\vspace{-0.1in}
\resizebox{0.5\textwidth}{!}{%
\begin{tabular}{cccc}
\toprule
\textbf{Shots}               & \textbf{Backbone} & $k$ \textbf{(neighbors)} & \textbf{Tuning Range}                        \\
\midrule
\multirow{3}{*}{16} & ViT-B/32 & 8             & \{5, 8, 16, 32, 64, 128, 256, 512\} \Tstrut \\
                    & ViT-B/16 & 5             & \{5, 8, 16, 32, 64, 128, 256, 512\} \\
                    & ViT-L/14 & 16            & \{5, 8, 16, 32, 64, 128, 256, 512\} \\
\bottomrule
\end{tabular}}
\label{tab: hyperameters_backbone_k}
\vspace{0.1in}
\end{table}

\begin{table}[h!]
\small
\centering
\caption{Hyperparameters used for ICL Experiments}
\vspace{-0.1in}
\resizebox{0.5\textwidth}{!}{%
\begin{tabular}{cccc}
\toprule
\textbf{Dataset}               & \textbf{Name} & \textbf{Value} & \textbf{Tuning Range}                        \\
\midrule
\multirow{3}{*}{MRPC} & k (neighbor)  & 256 & \{256 512 1024 2047\} \Tstrut \\
                    & $\mu$   & 1e-4  & \{0, 1e-4, 0.1\} \\
                    &  $\lambda$  & 0.1 & \{0, 1e-2, 0.1\} \\
\midrule
\multirow{3}{*}{SST-2} & k (neighbor) & 2047  & \{256 512 1024 2047\} \Tstrut \\
                    & $\mu$   & 0    & \{0, 1e-4, 0.1\} \\
                    &  $\lambda$ & 0  & \{0, 1e-2, 0.1\} \\
\midrule
\multirow{3}{*}{RTE} & k (neighbor)  & 512 & \{256 512 1024 2047\} \Tstrut \\
                    & $\mu$    & 0   & \{0, 1e-4, 0.1\} \\
                    &  $\lambda$  & 1e-2 & \{0, 1e-2, 0.1\} \\
\midrule
\multirow{3}{*}{TREC} & k (neighbor) & 2047 & \{256 512 1024 2047\} \Tstrut \\
                    & $\mu$  & 0     & \{0, 1e-4, 0.1\} \\
                    &  $\lambda$ & 0 & \{0, 1e-2, 0.1\} \\
\bottomrule
\end{tabular}}
\label{tab: hyperameters_icl}
\vspace{0.1in}
\end{table}

\section{Sensitivity Analysis}
\label{appen sec: sensitivity}
The hyperparameter associated with COBRA is the sparsity of the similarity matrix, which is governed by the top-$k$ entries in each row. To this end, for our first sensitivity analysis, we study the downstream accuracy for the 16-shot case with Imagenet as $\Dtar$ and CLIP-adapter (ViT-L/14 backbone) being the few-shot learning method. We vary $k \in \{5, 8, 16, 32, 64, 128, 256, 512\}$, the range we perform tuning and report its accuracy in \Cref{fig:sensitivity_k_vit}. We observe that the COBRA works well when $k$ is neither too small nor too large, which may be linked to the saturation of the submodular function, critical for its use. However, even in the extreme ranges, it doesn't underperform the baselines. We leave an in-depth investigation of saturation and its impact on FLMI/COBRA as a future work.  

As we outlined in the main paper, $\mu$ that provides a quality score function is optional, and indeed for datasets such as Imagenet that has a diverse set of images in the test set, a quality score is not needed. Moreover, on extreme ends $\mu=1$ just recovers \emph{Sim-score}. Therefore, for the 16-shot case with Imagenet as $\Dtar$ and CLIP-adapter (ViT-B/16 backbone), we consider varying $\mu \in \{0, 0.2, 0.4, 0.6, 0.8, 1.0\}$, and study its performance in~\cref{fig:sensitivity_mu_vit}.

\begin{figure}[h]
    \centering
    \begin{subfigure}{0.45\textwidth}
        \centering
        \includegraphics[scale=0.5]{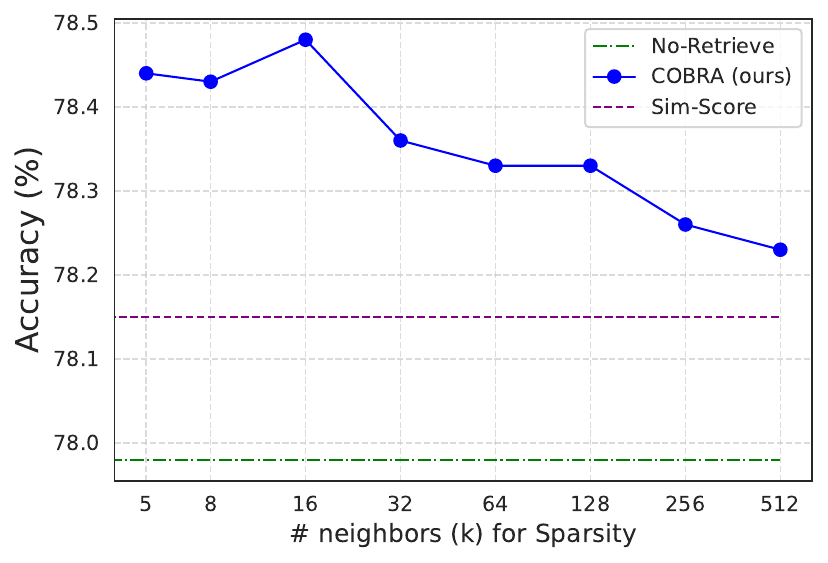}
        \caption{\textbf{Hyperparameter Sensitivity $k$:} \small We see that even with varying the $k$ in a broad range, COBRA never underperforms the baselines, making it robust to hyperparameter changes.}
        \label{fig:sensitivity_k_vit}
    \end{subfigure}
    \hfill % adds horizontal space between the figures
    \begin{subfigure}{0.45\textwidth}
        \centering
        \includegraphics[scale=0.45]{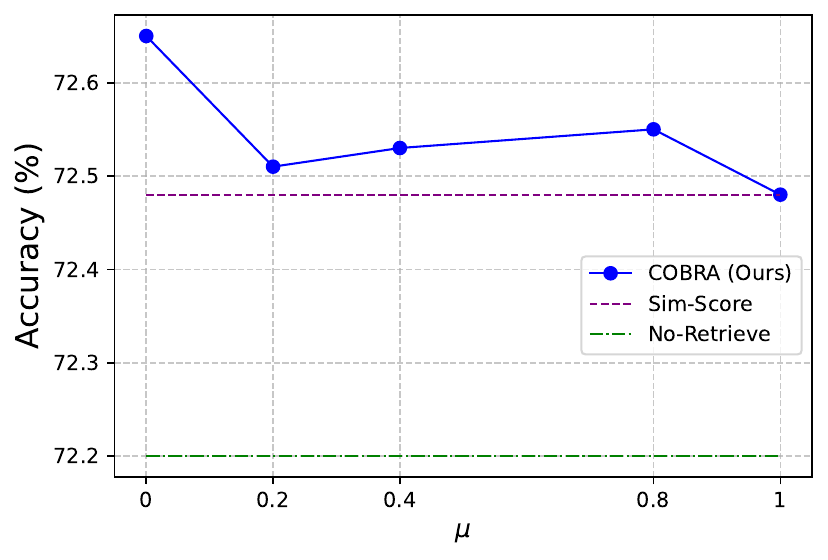}
        \caption{\textbf{Hyperparameter Sensitivity $\mu$:} \small We see that increasing weightage on quality term (which is done using Sim-Score) interpolates between COBRA without quality to Sim-score. For Imagenet where there is diversity in both training and test sets, having additional quality terms hampers the performance. However, for datasets such as Flowers-102, where there is a lack of diversity intrinsically, having this quality term makes sure we are not picking noisy samples (albeit optional).}
        \label{fig:sensitivity_mu_vit}
    \end{subfigure}
    \caption{Hyperparameter sensitivity analysis of COBRA for $k$ (sparsity) and $\mu$ (optional quality weightage).}
    \label{fig:sensitivity}
\end{figure}

\clearpage
\newpage
\section{More Qualitative Results}
\label{appen sec: more qualitative}
\vspace{-.1in}
\subsection{Imagenet Results}
\vspace{-.1in}
\begin{figure*}[htp!]
    \centering
    \begin{subfigure}{.28\textwidth}
        \includegraphics[width=\textwidth]{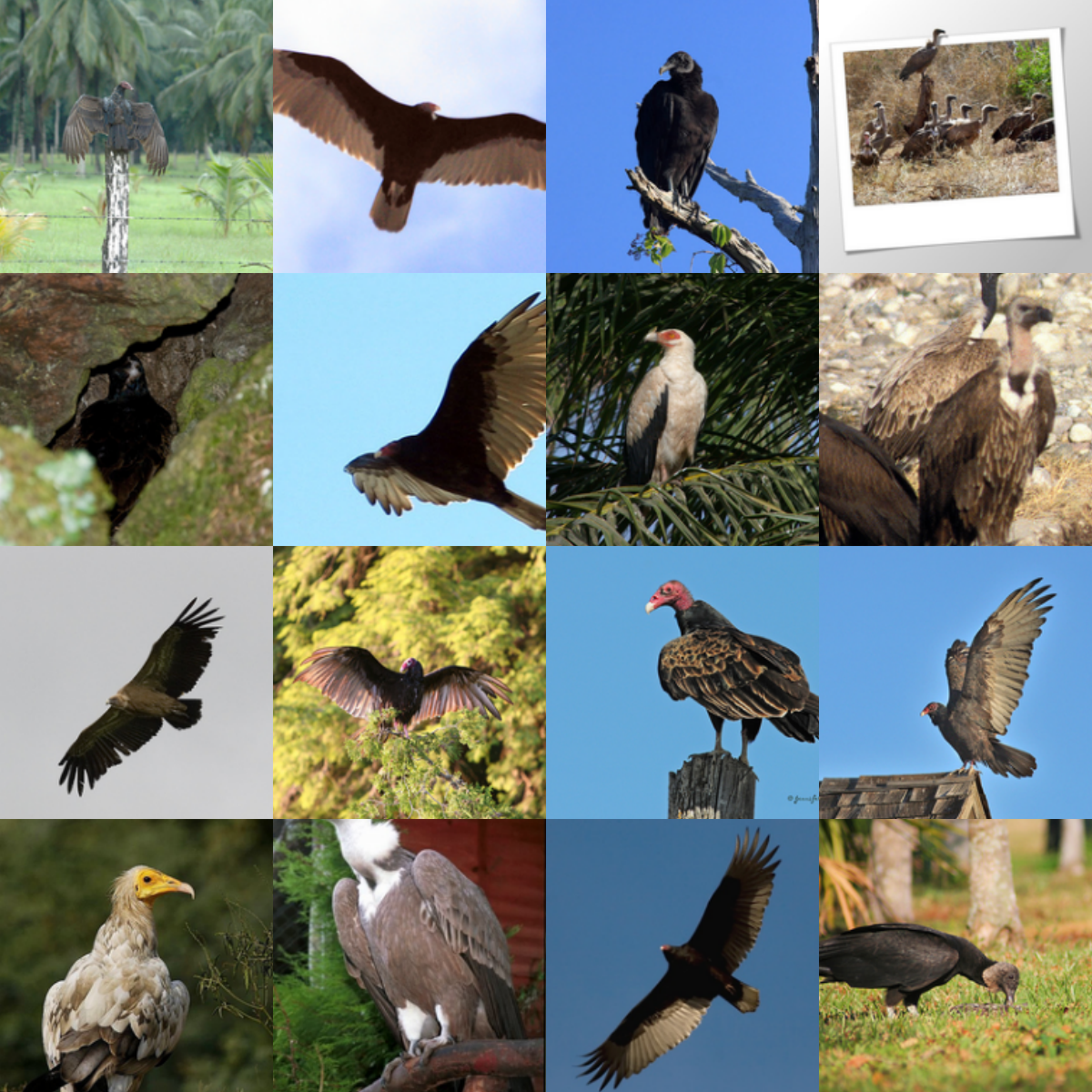}
        \caption{Target Images}
    \end{subfigure}\quad
    \begin{subfigure}{.28\textwidth}
        \includegraphics[width=\textwidth]{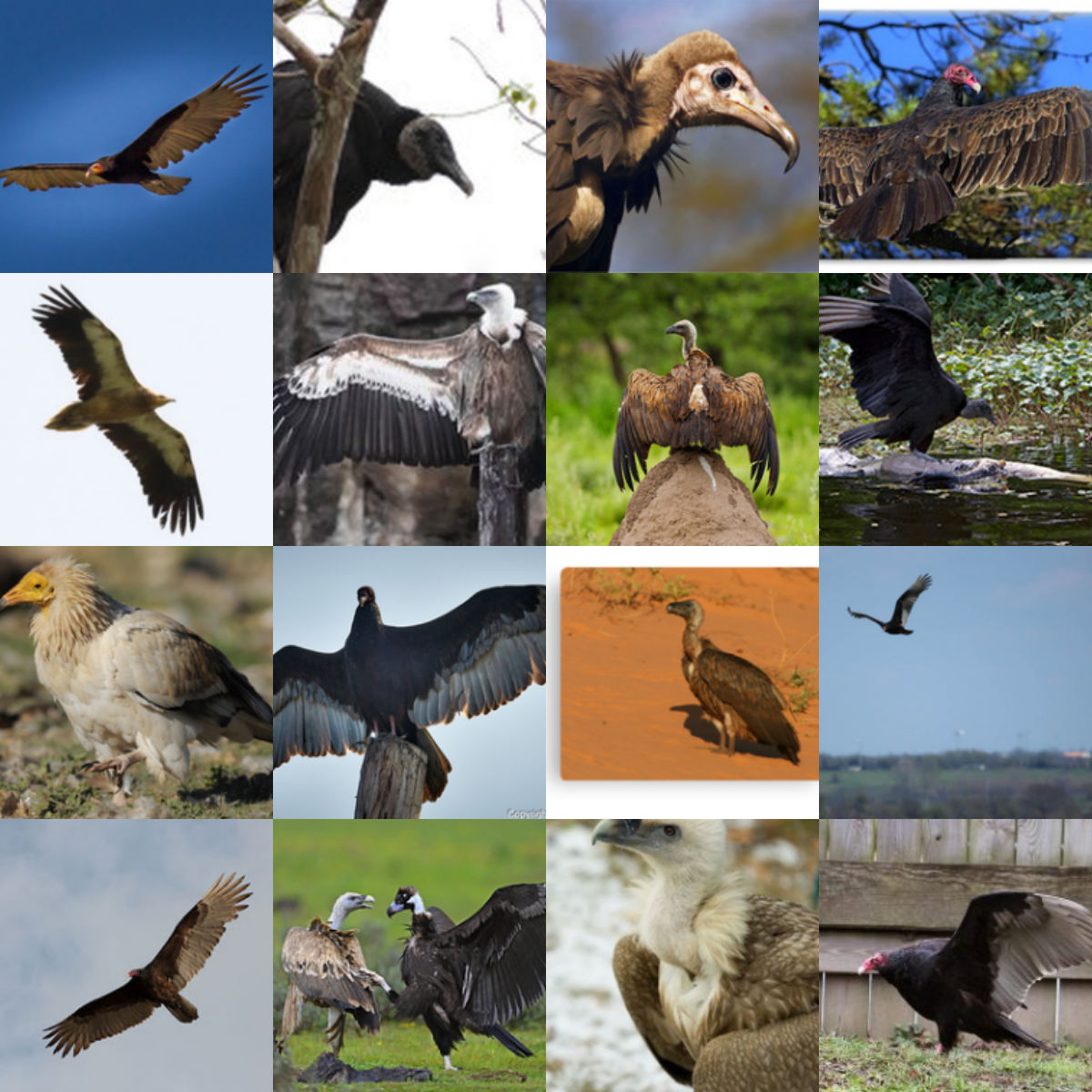}
        \caption{COBRA Images}
    \end{subfigure}\quad
    \begin{subfigure}{.28\textwidth}
        \includegraphics[width=\textwidth]{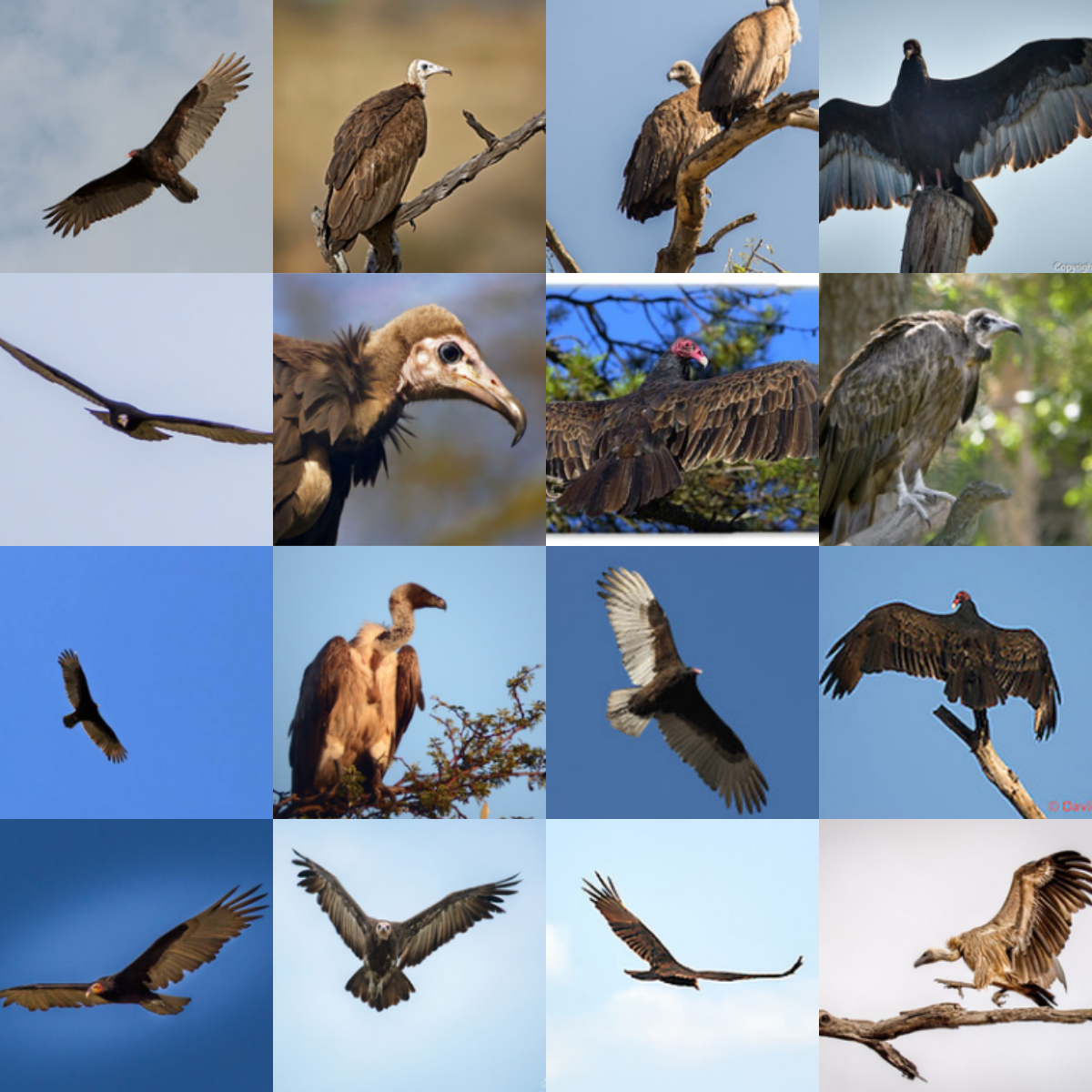}
        \caption{Sim-Score Images}
    \end{subfigure}
    
    % \vspace{0.1cm}
    
    \begin{subfigure}{.28\textwidth}
        \includegraphics[width=\textwidth]{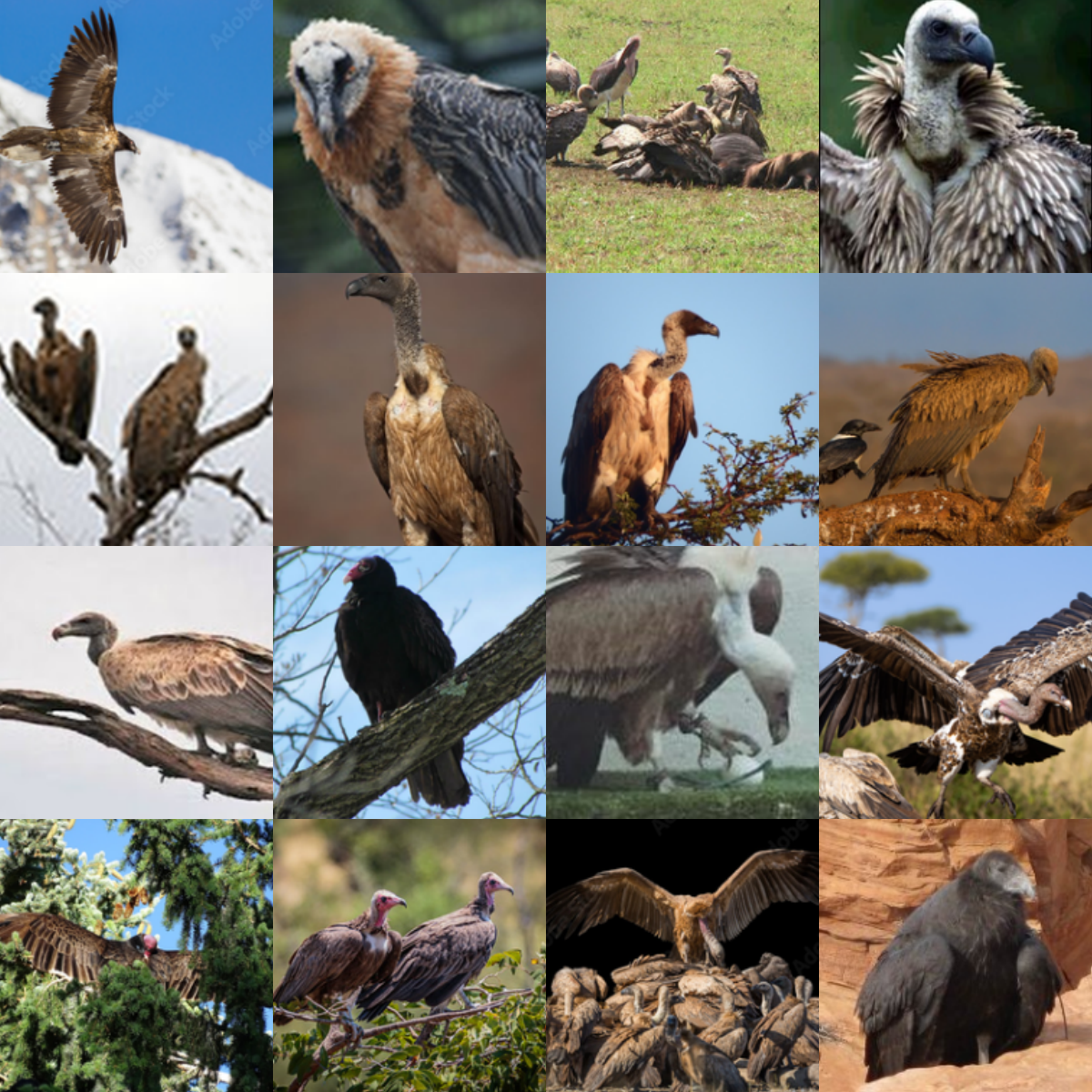}
        \caption{CLIP-Score Images}
    \end{subfigure}\quad
    \begin{subfigure}{.28\textwidth}
        \includegraphics[width=\textwidth]{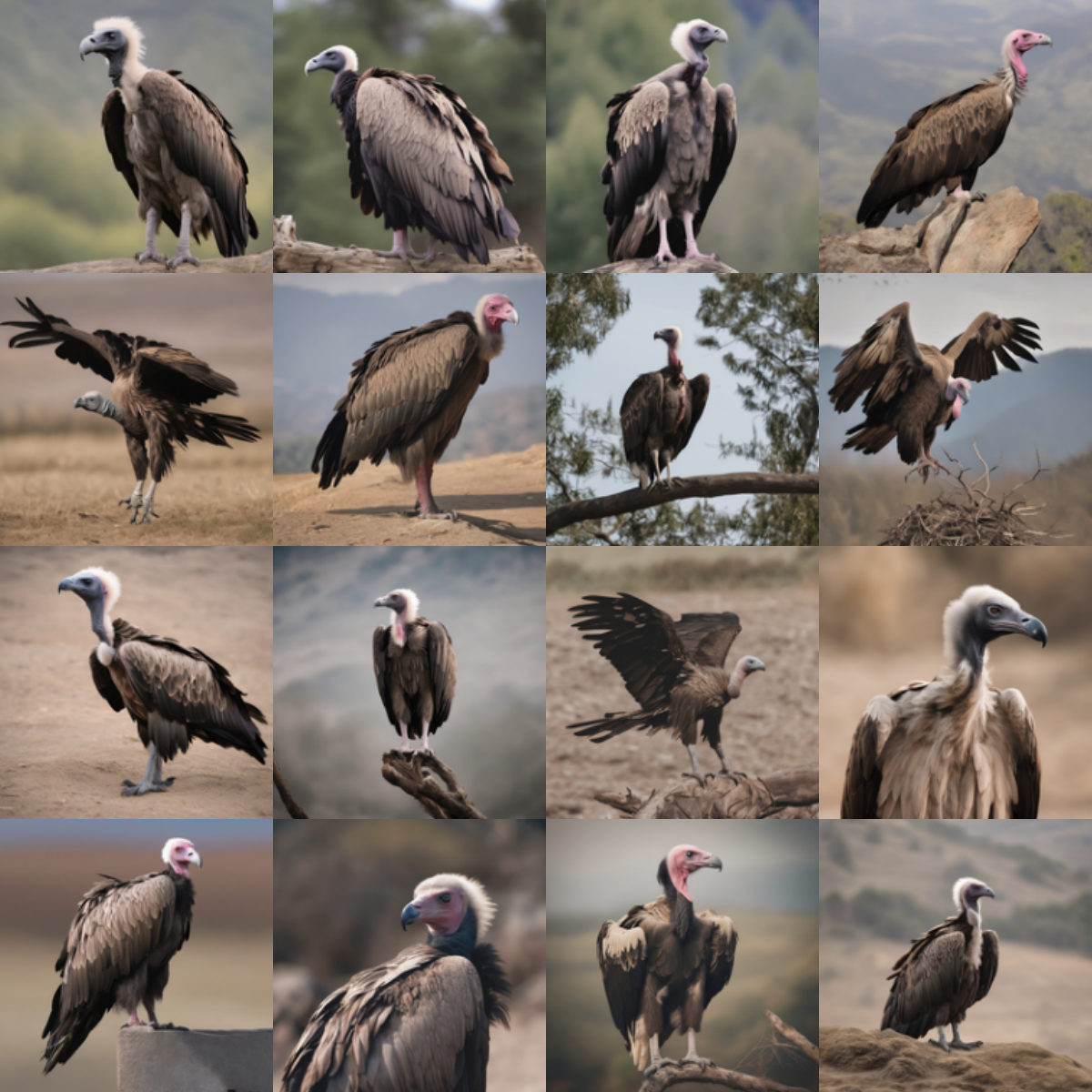}
        \caption{SDXL Images}
    \end{subfigure}\quad
    \begin{subfigure}{.28\textwidth}
        \includegraphics[width=\textwidth]{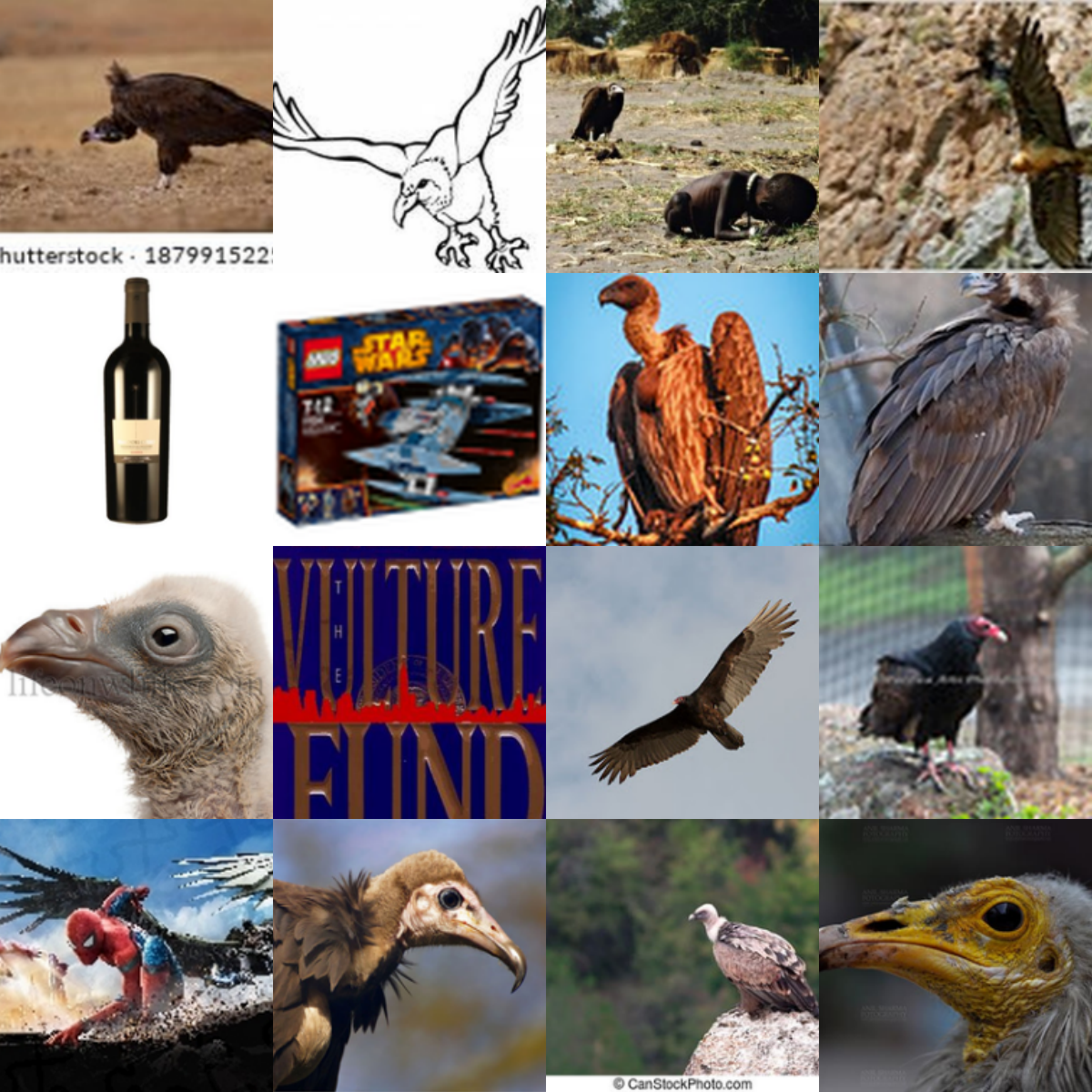}
        \caption{Random Images}
    \end{subfigure}
    \vspace{-.1in}
    \caption{\small \textbf{Visual Comparison of Retrieval Methods for Vultures (Imagenet)}.} 
    \label{fig:qualitative_results_vultures}
    \vspace{-.5in}
\end{figure*}
\begin{figure*}[htp!]
    \centering
    \begin{subfigure}{.3\textwidth}
        \includegraphics[width=\textwidth]{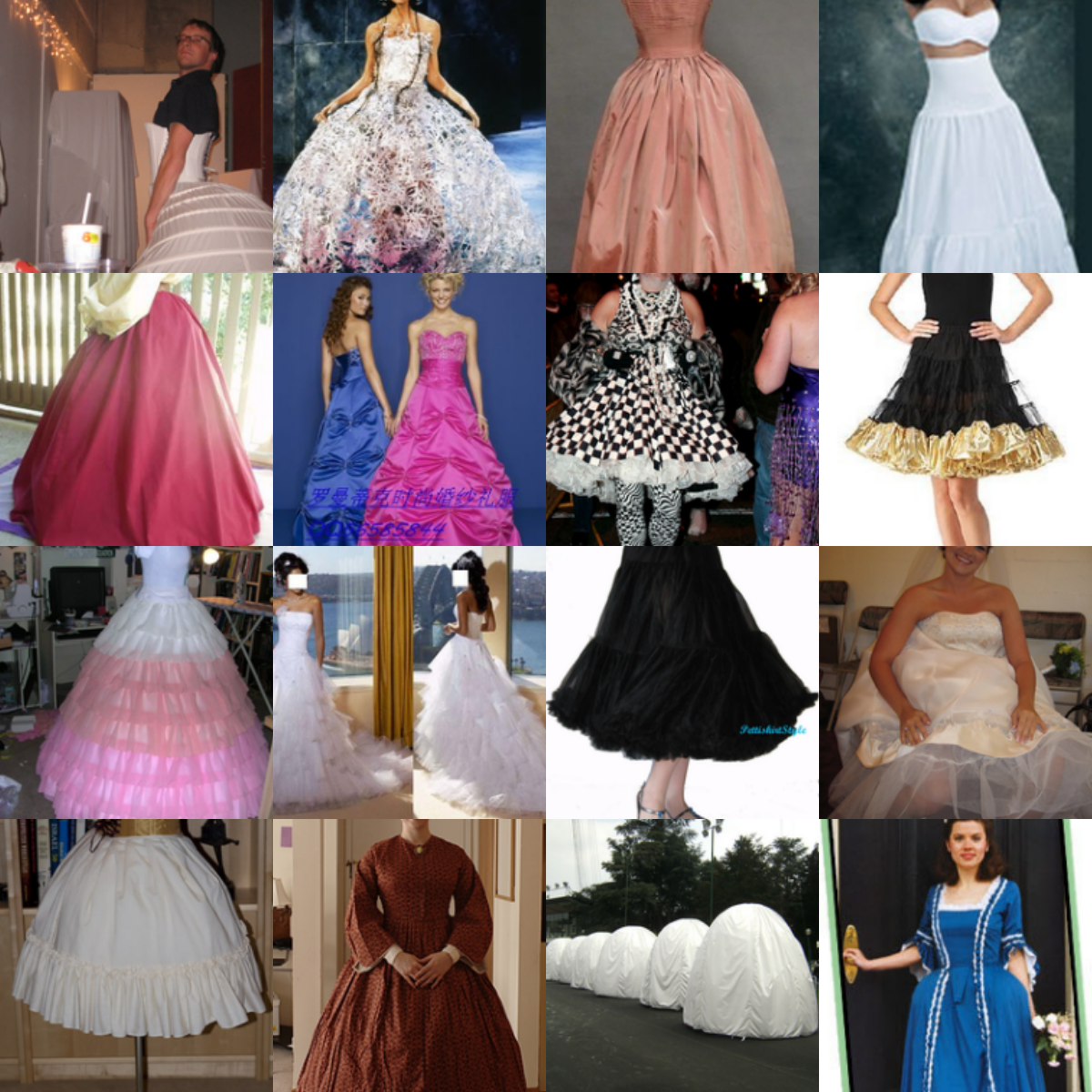}
        \caption{Target Images}
    \end{subfigure}\quad
    \begin{subfigure}{.3\textwidth}
        \includegraphics[width=\textwidth]{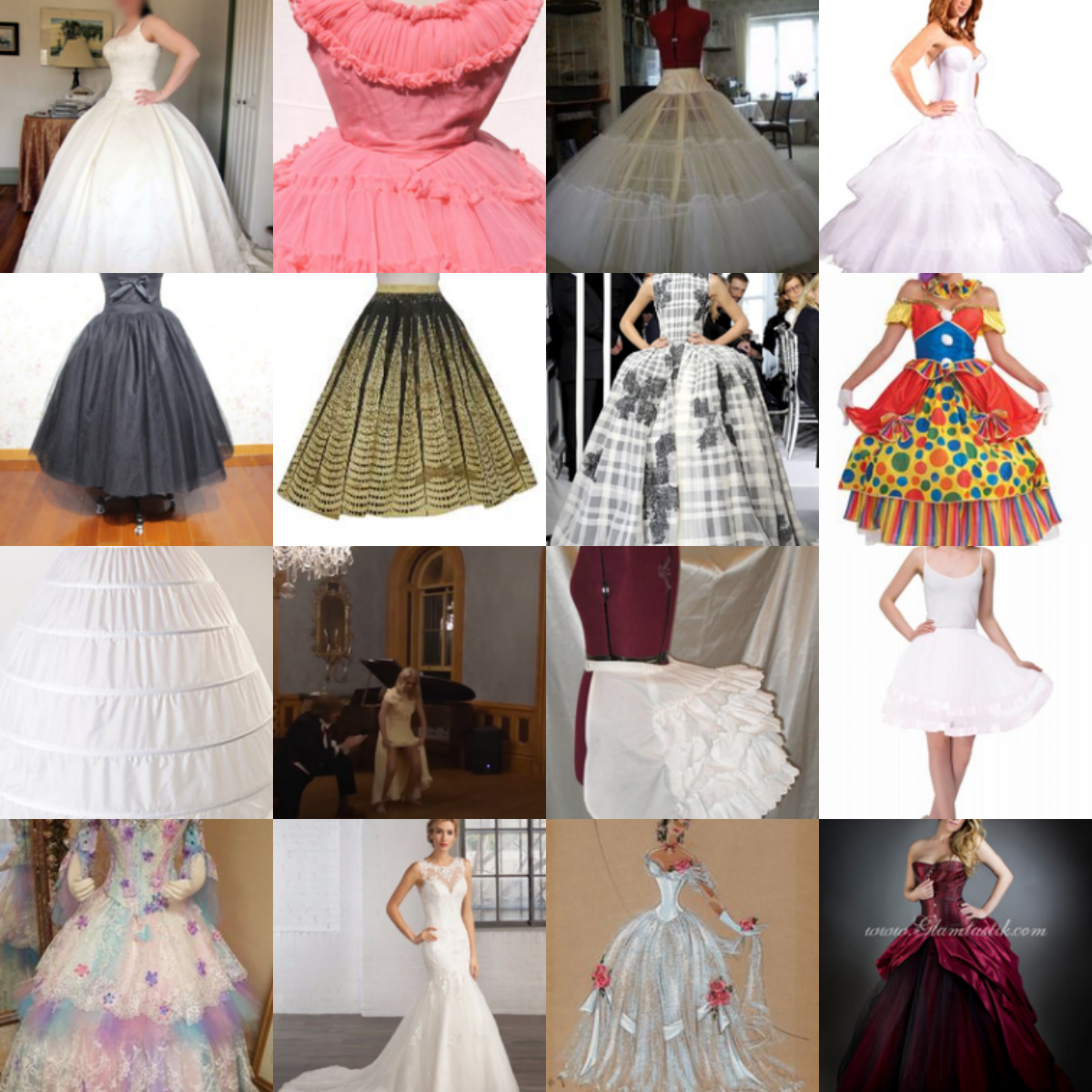}
        \caption{COBRA Images}
    \end{subfigure}\quad
    \begin{subfigure}{.3\textwidth}
        \includegraphics[width=\textwidth]{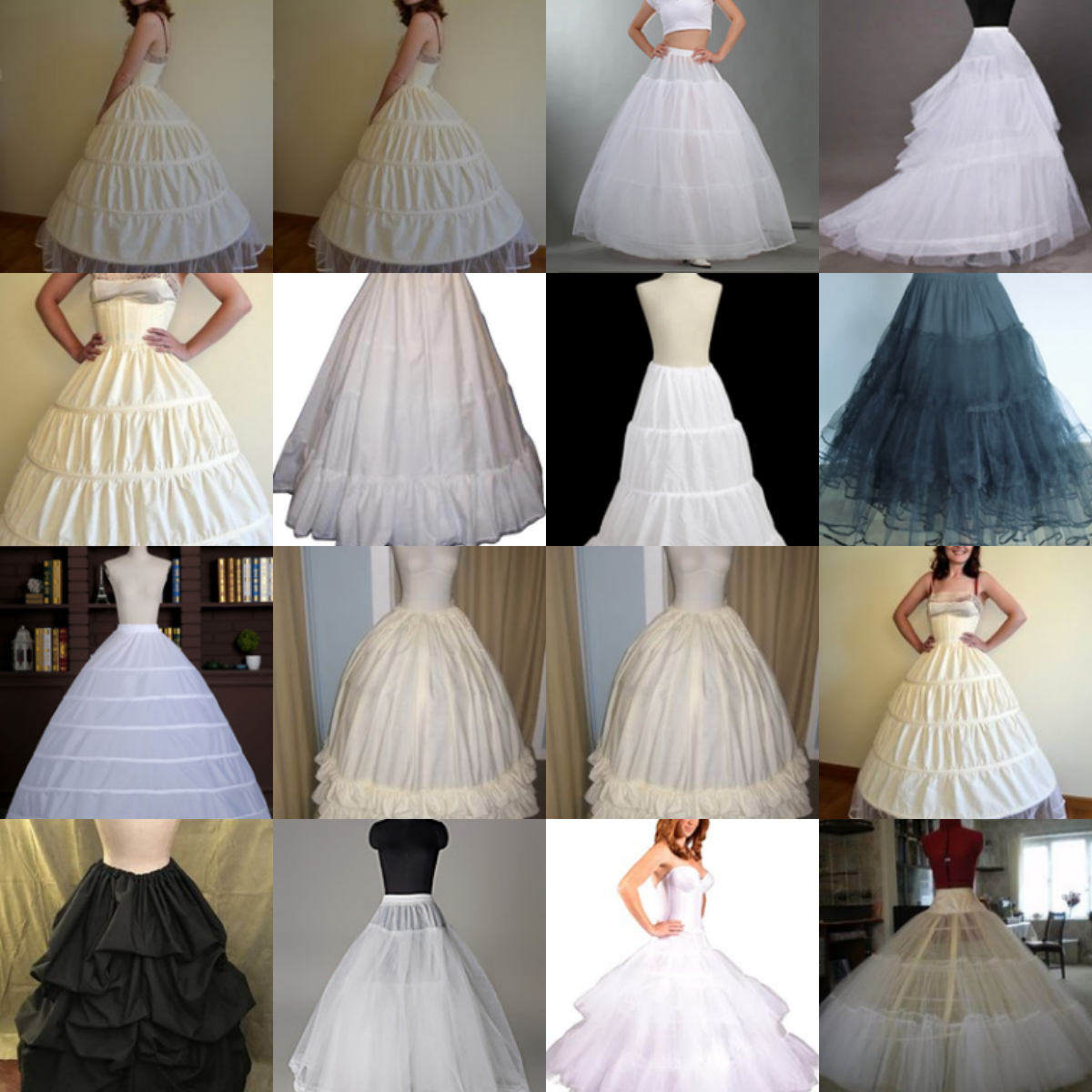}
        \caption{Sim-Score Images}
    \end{subfigure}
    
    \vspace{0.1cm}
    
    \begin{subfigure}{.3\textwidth}
        \includegraphics[width=\textwidth]{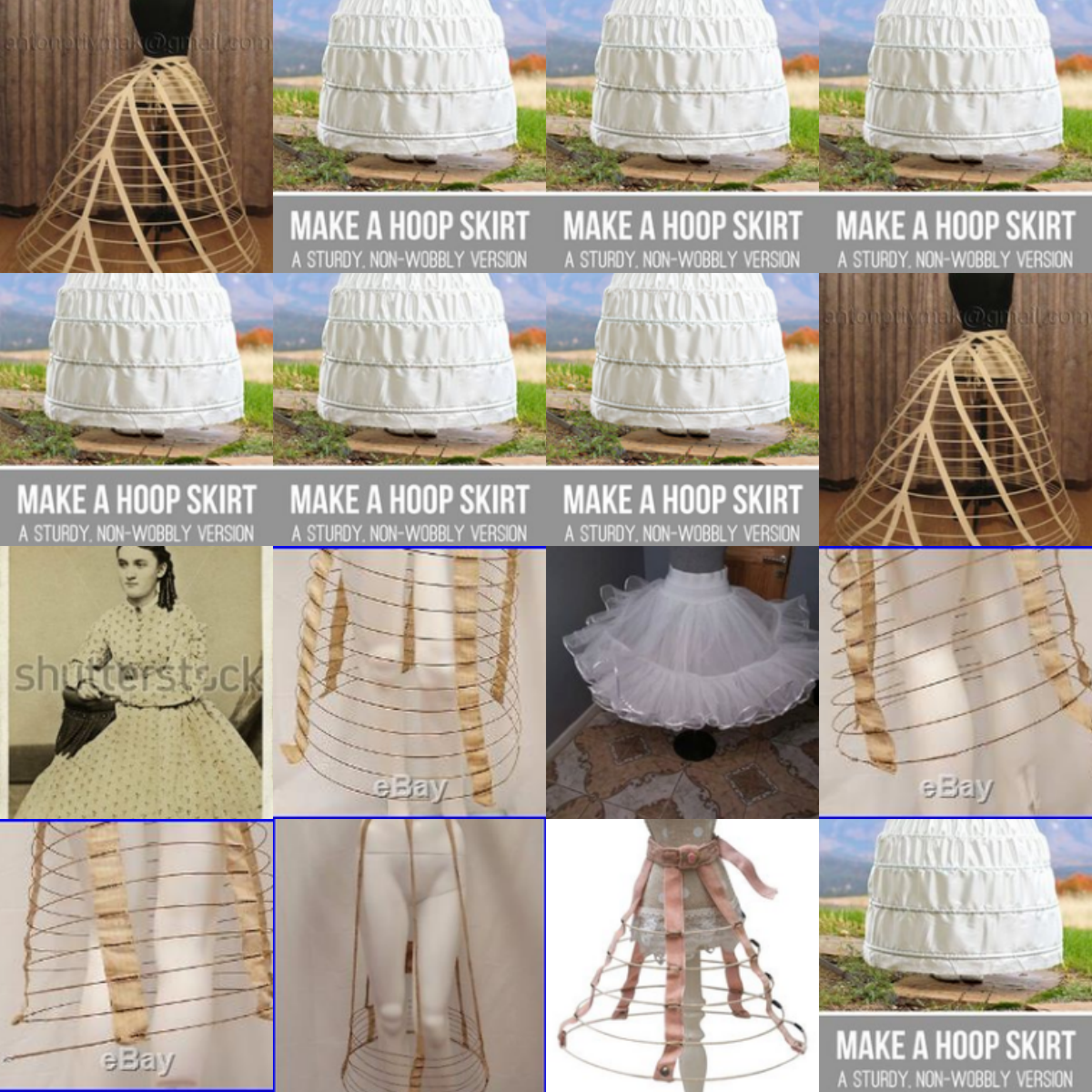}
        \caption{CLIP-Score Images}
    \end{subfigure}\quad
    \begin{subfigure}{.3\textwidth}
        \includegraphics[width=\textwidth]{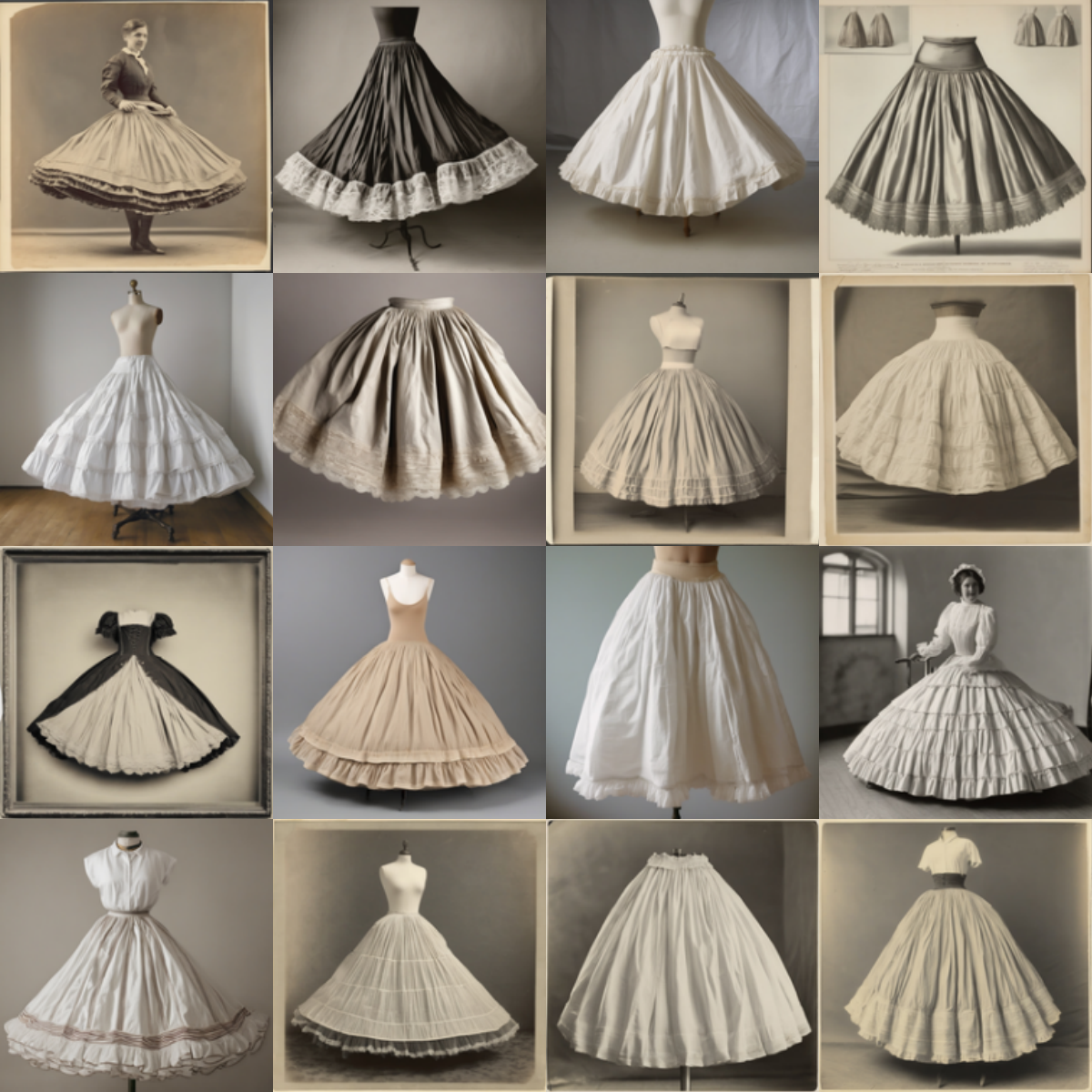}
        \caption{SDXL Images}
    \end{subfigure}\quad
    \begin{subfigure}{.3\textwidth}
        \includegraphics[width=\textwidth]{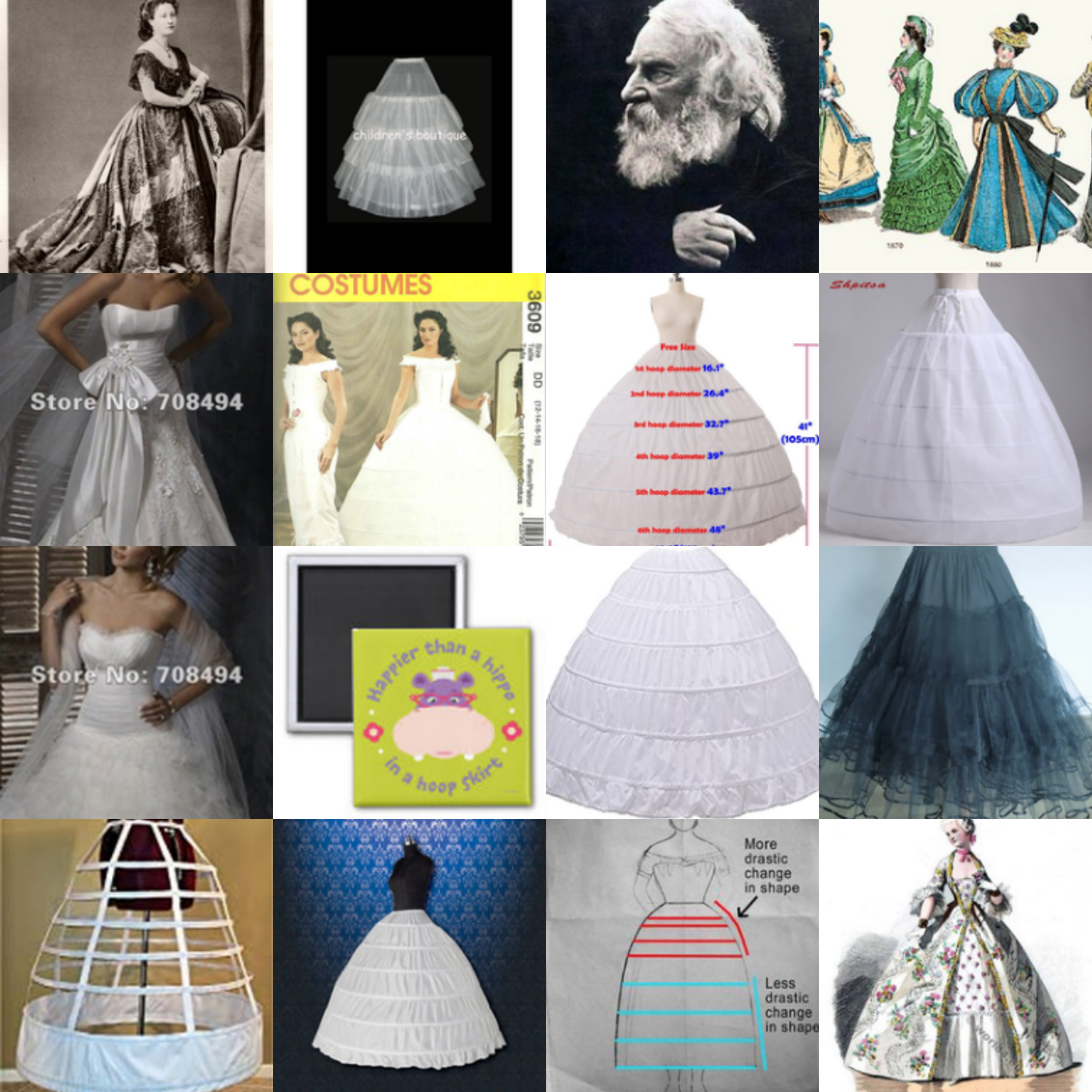}
        \caption{Random Images}
    \end{subfigure}
    
    \caption{\small \textbf{Visual Comparison of Retrieval Methods for Hoop Skirt (Imagenet)}.} 
    \label{fig:qualitative_results_hoopskirt}
    \vspace{-.1in}
\end{figure*}

\begin{figure*}[htp!]
    \centering
    \begin{subfigure}{.3\textwidth}
        \includegraphics[width=\textwidth]{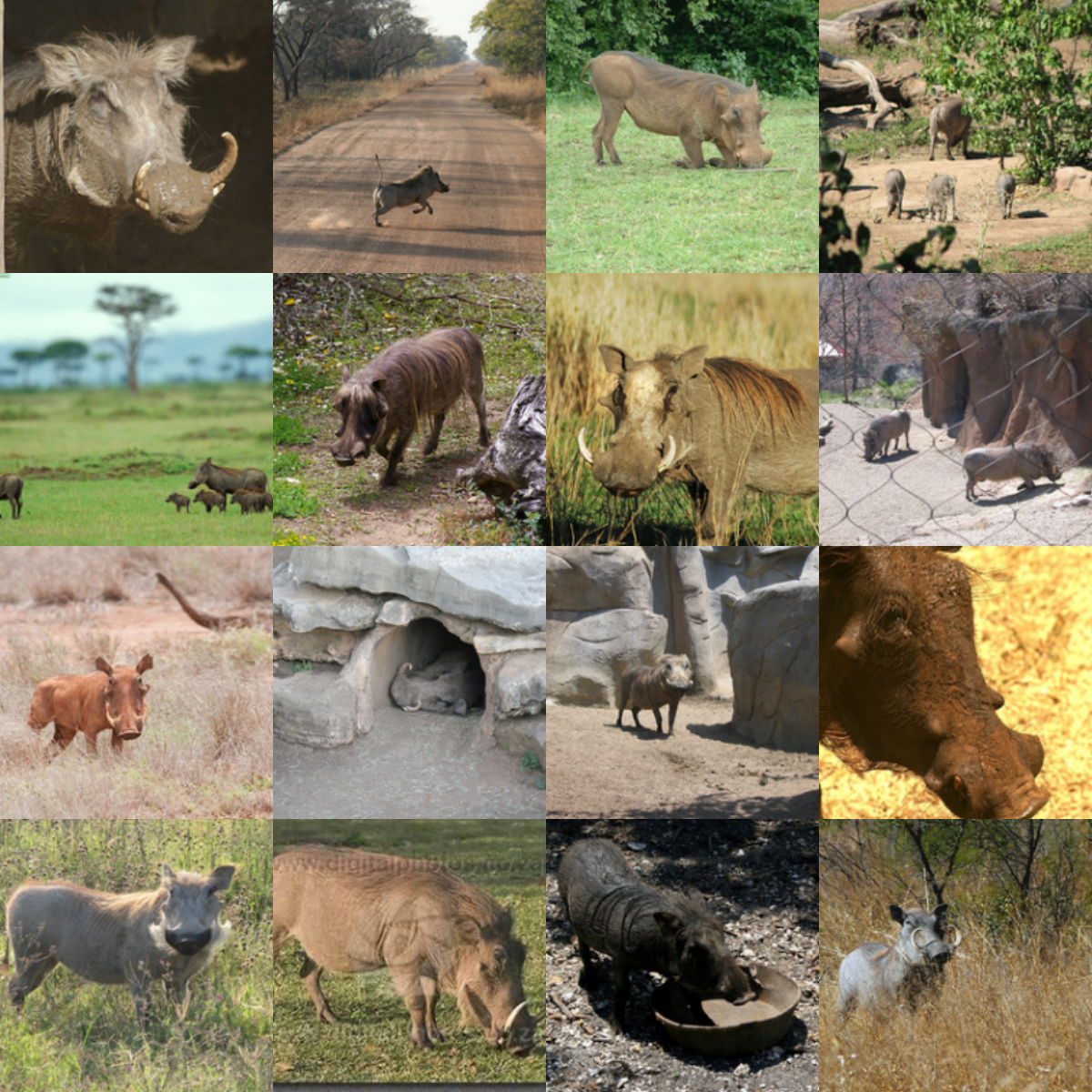}
        \caption{Target Images}
    \end{subfigure}\quad
    \begin{subfigure}{.3\textwidth}
        \includegraphics[width=\textwidth]{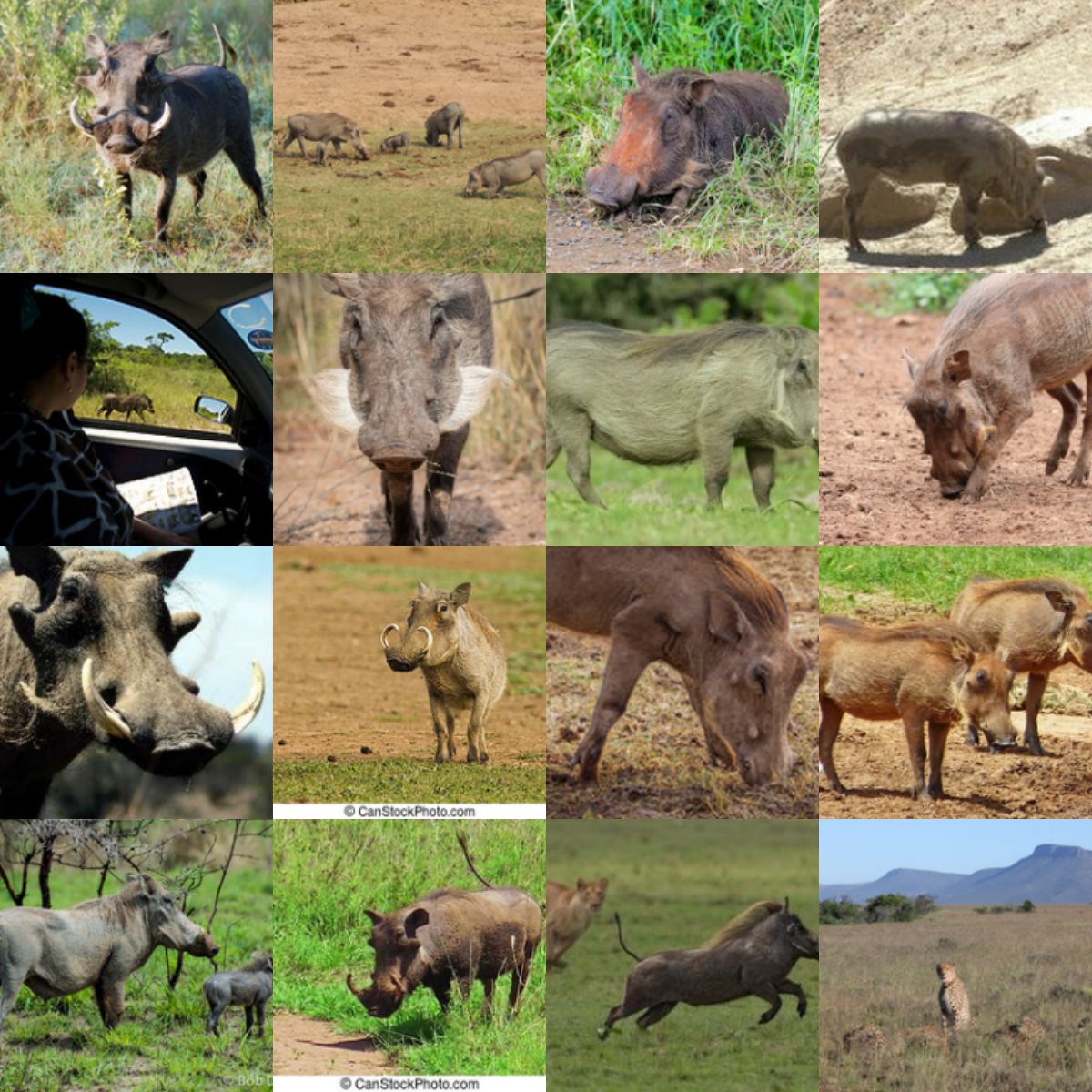}
        \caption{COBRA Images}
    \end{subfigure}\quad
    \begin{subfigure}{.3\textwidth}
        \includegraphics[width=\textwidth]{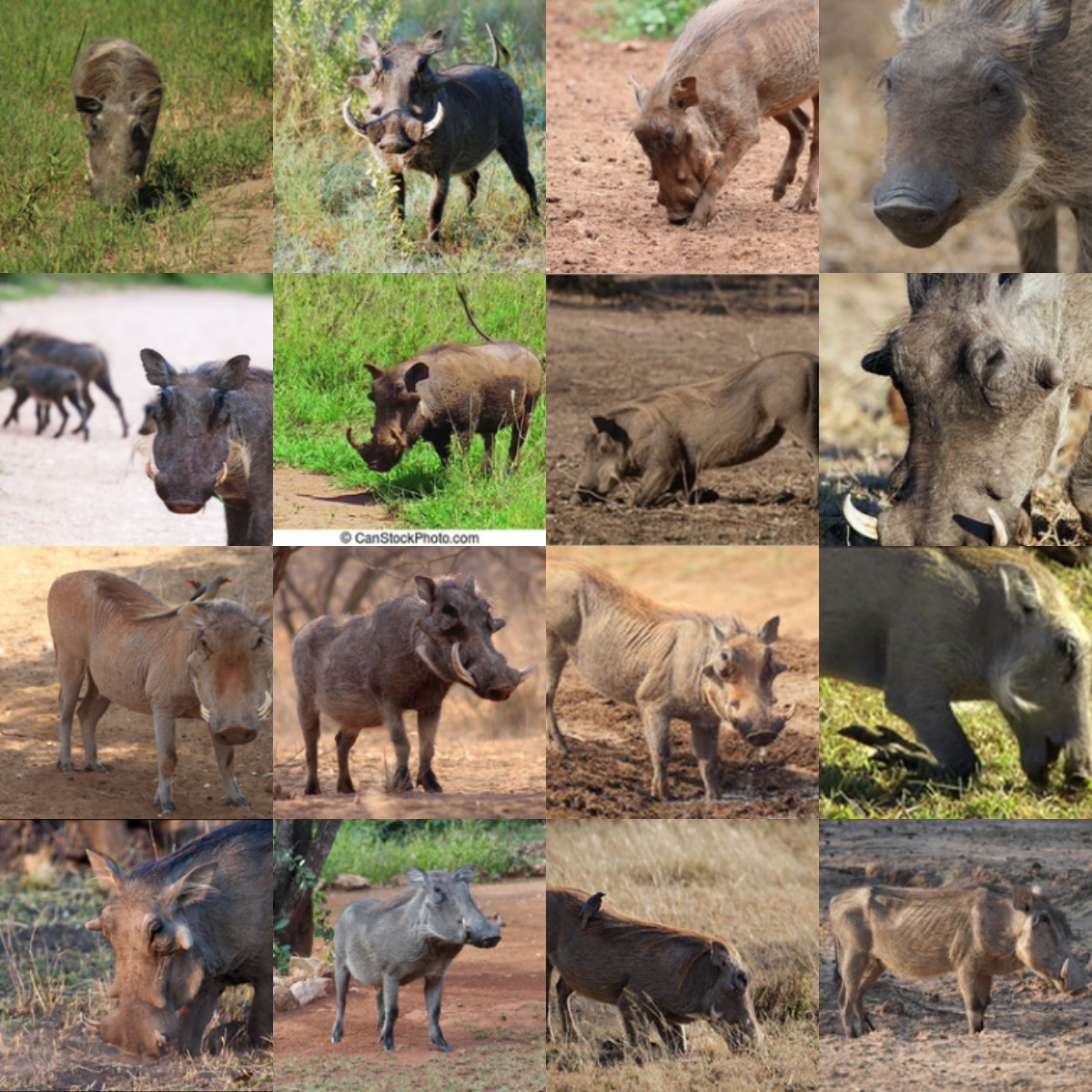}
        \caption{Sim-Score Images}
    \end{subfigure}
    
    \vspace{0.1cm}
    
    \begin{subfigure}{.3\textwidth}
        \includegraphics[width=\textwidth]{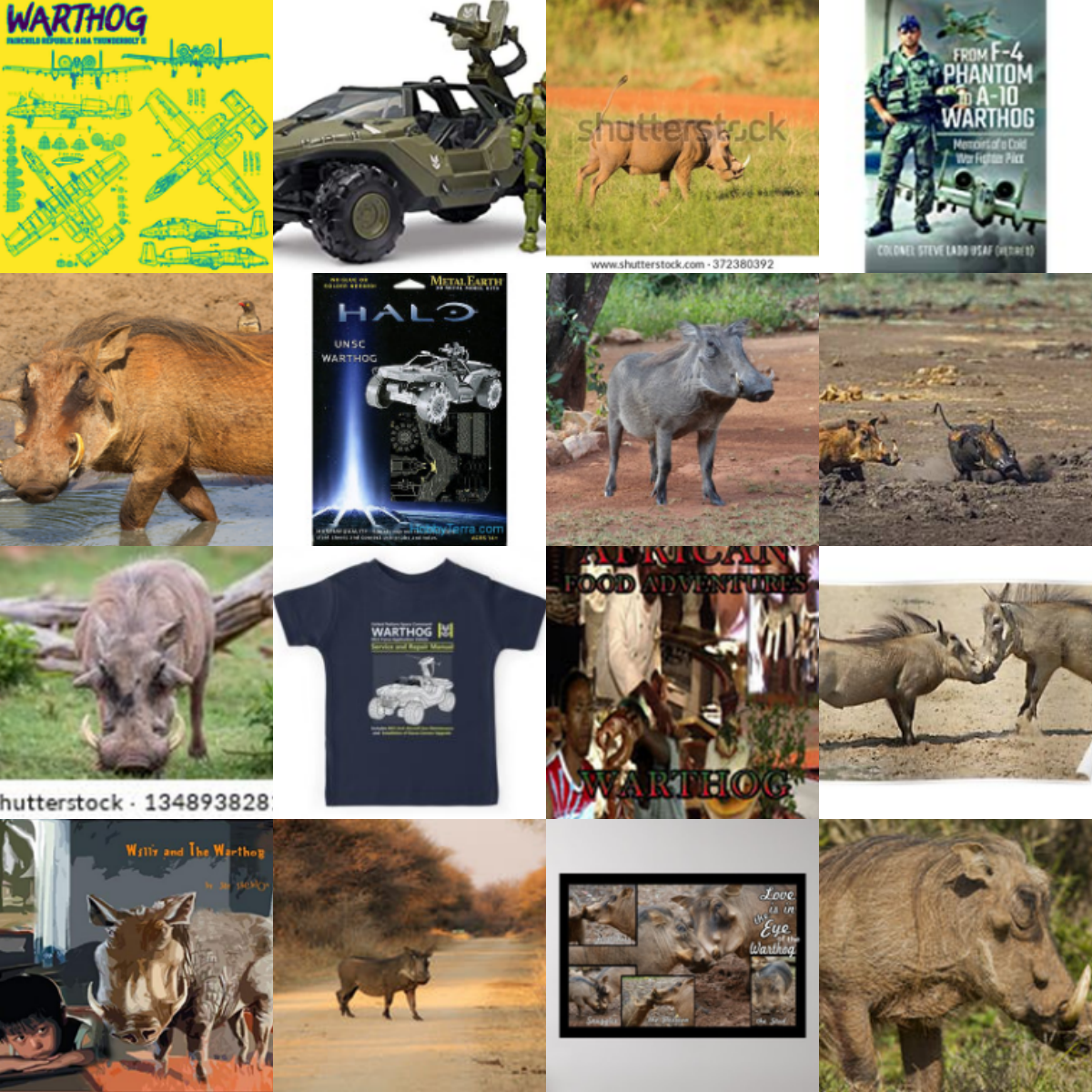}
        \caption{CLIP-Score Images}
    \end{subfigure}\quad
    \begin{subfigure}{.3\textwidth}
        \includegraphics[width=\textwidth]{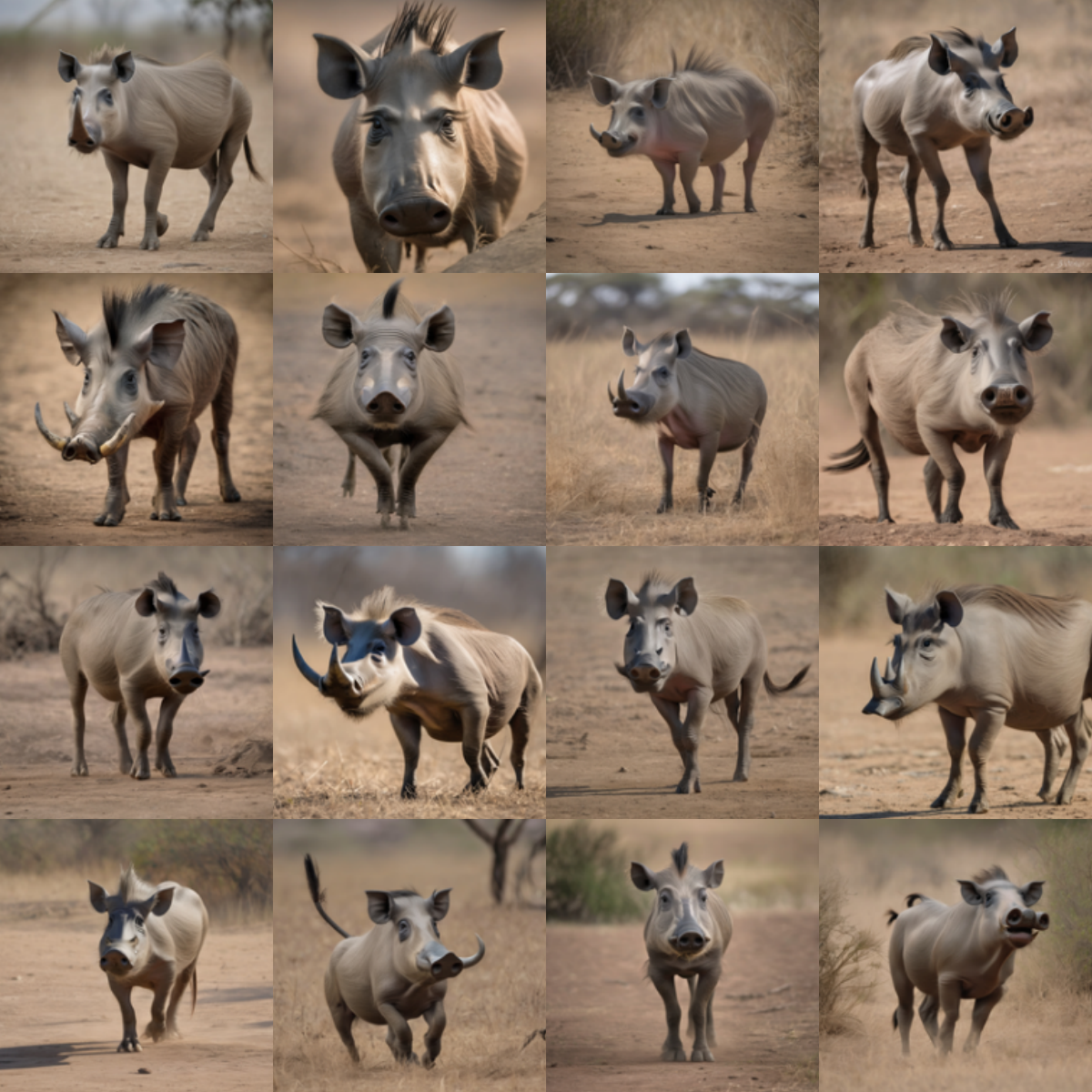}
        \caption{SDXL Images}
    \end{subfigure}\quad
    \begin{subfigure}{.3\textwidth}
        \includegraphics[width=\textwidth]{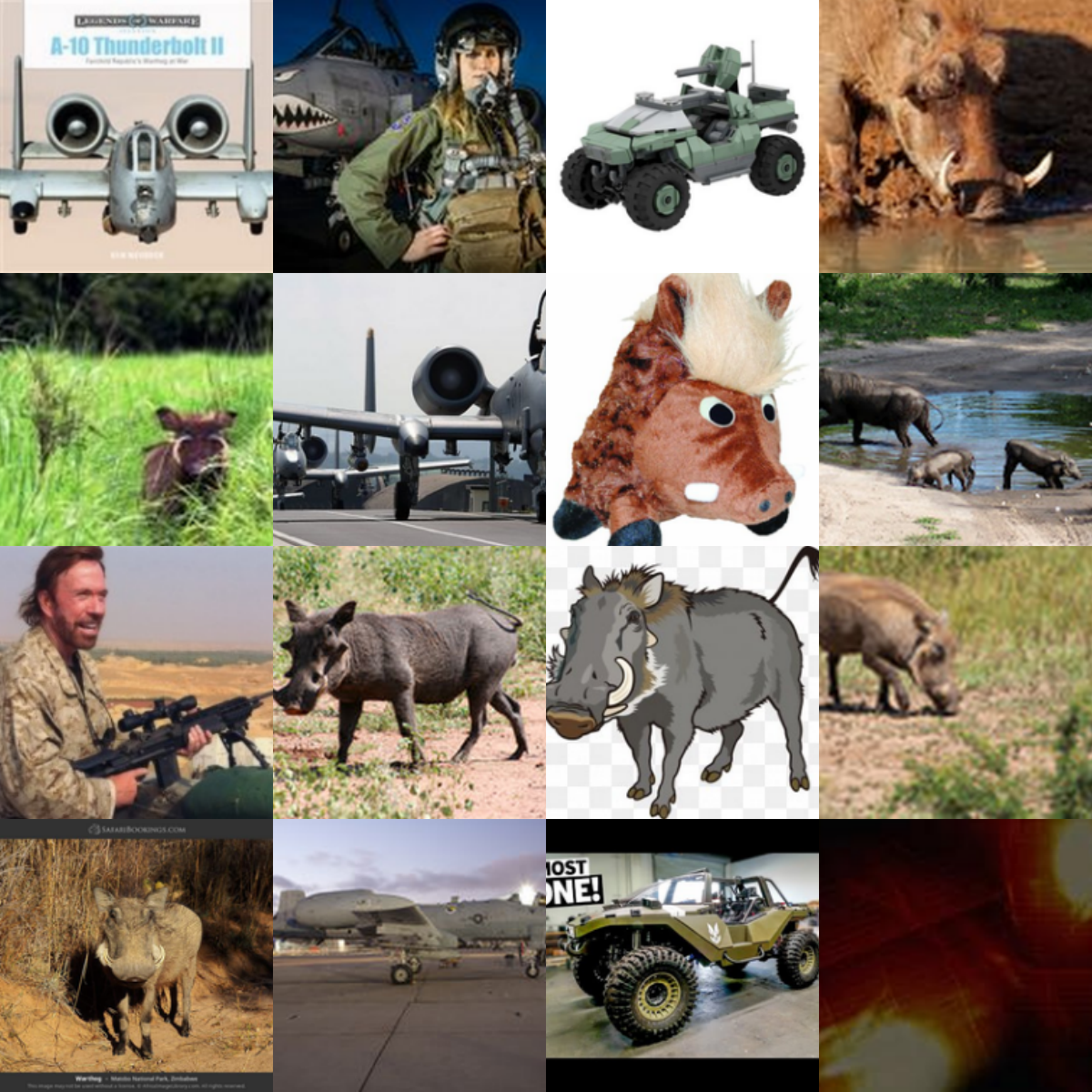}
        \caption{Random Images}
    \end{subfigure}
    
    \caption{\small \textbf{Visual Comparison of Retrieval Methods for Warthog (Imagenet)}.} 
    \label{fig:qualitative_results_Warthog}
    \vspace{-.1in}
\end{figure*}

\begin{figure*}[htp!]
    \centering
    \begin{subfigure}{.3\textwidth}
        \includegraphics[width=\textwidth]{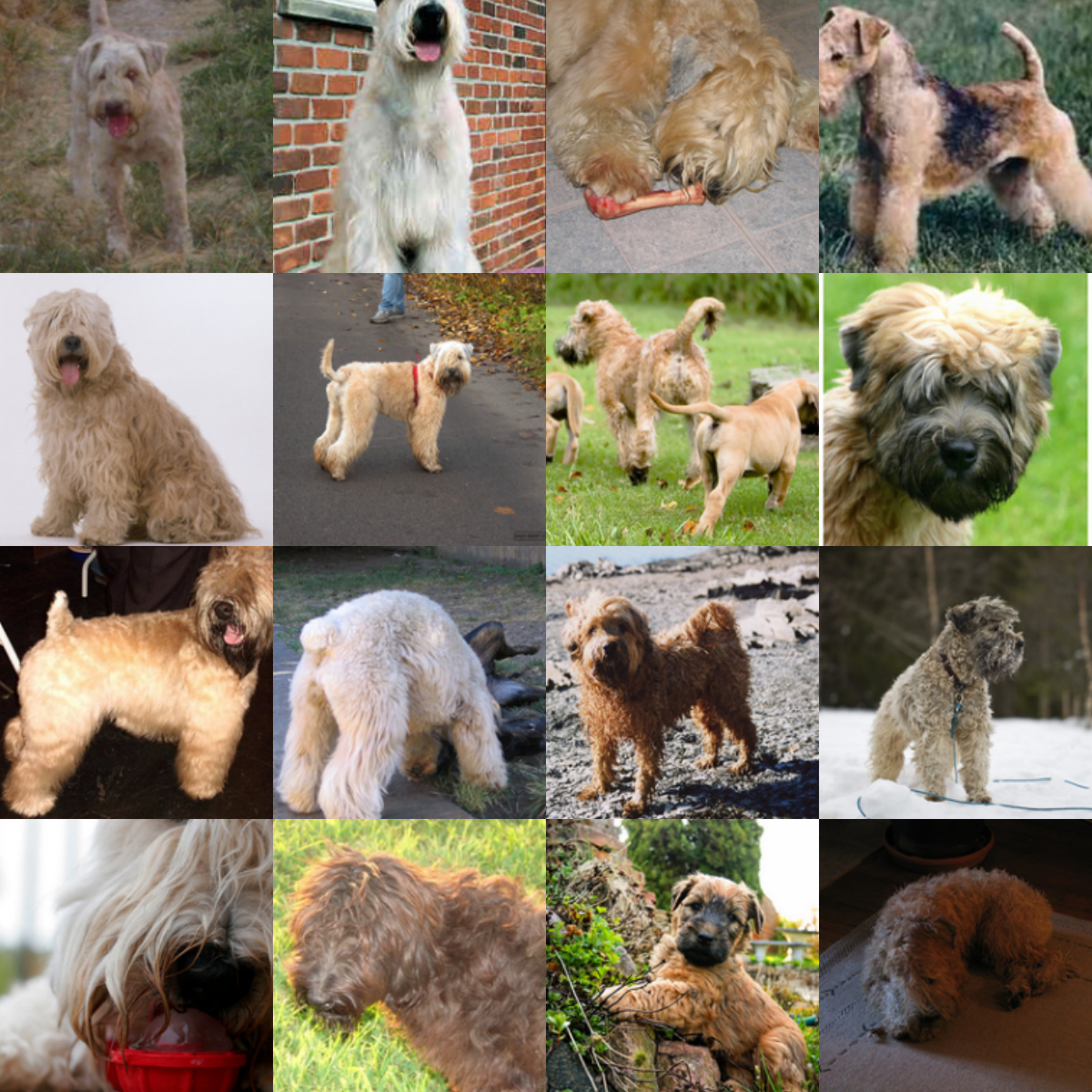}
        \caption{Target Images}
    \end{subfigure}\quad
    \begin{subfigure}{.3\textwidth}
        \includegraphics[width=\textwidth]{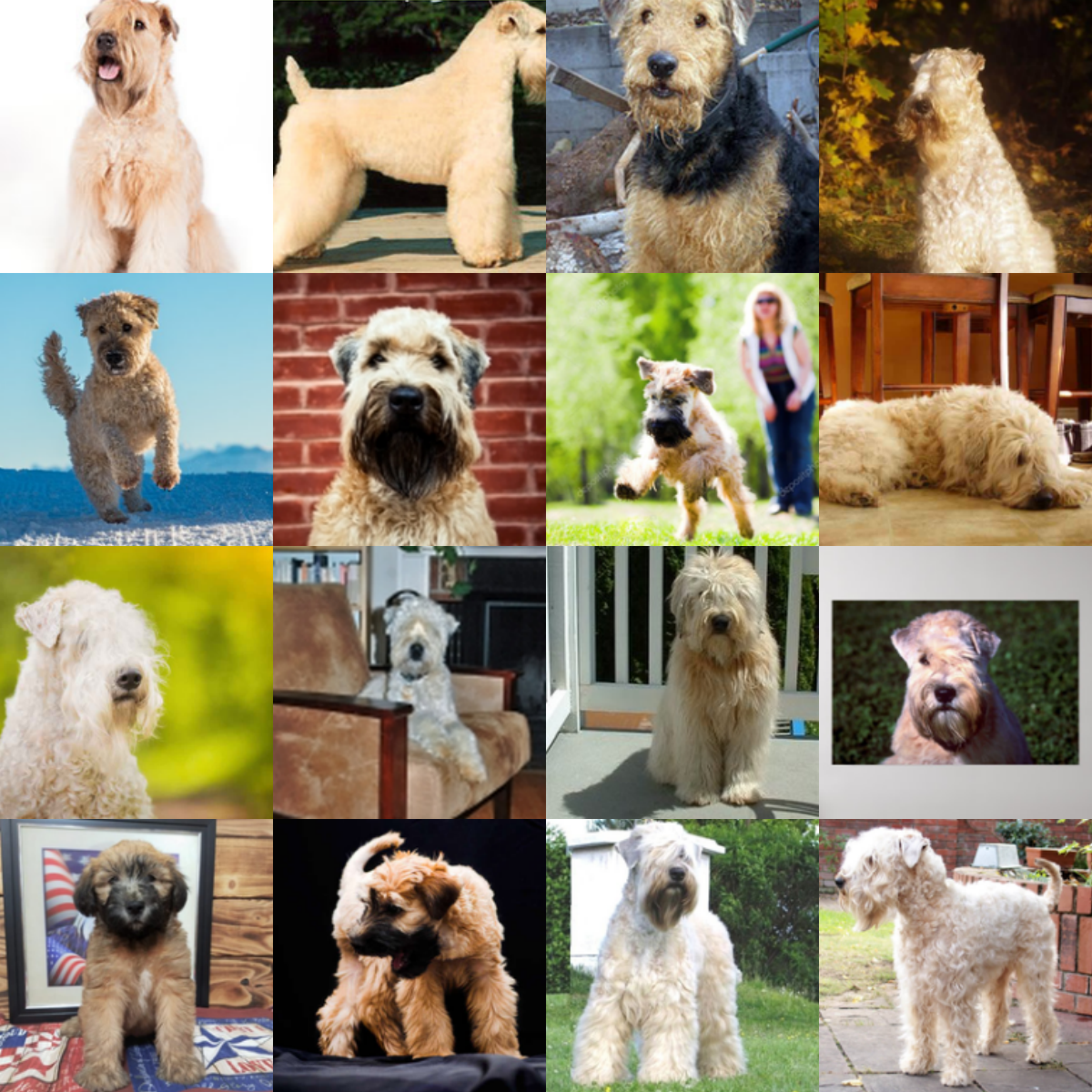}
        \caption{COBRA Images}
    \end{subfigure}\quad
    \begin{subfigure}{.3\textwidth}
        \includegraphics[width=\textwidth]{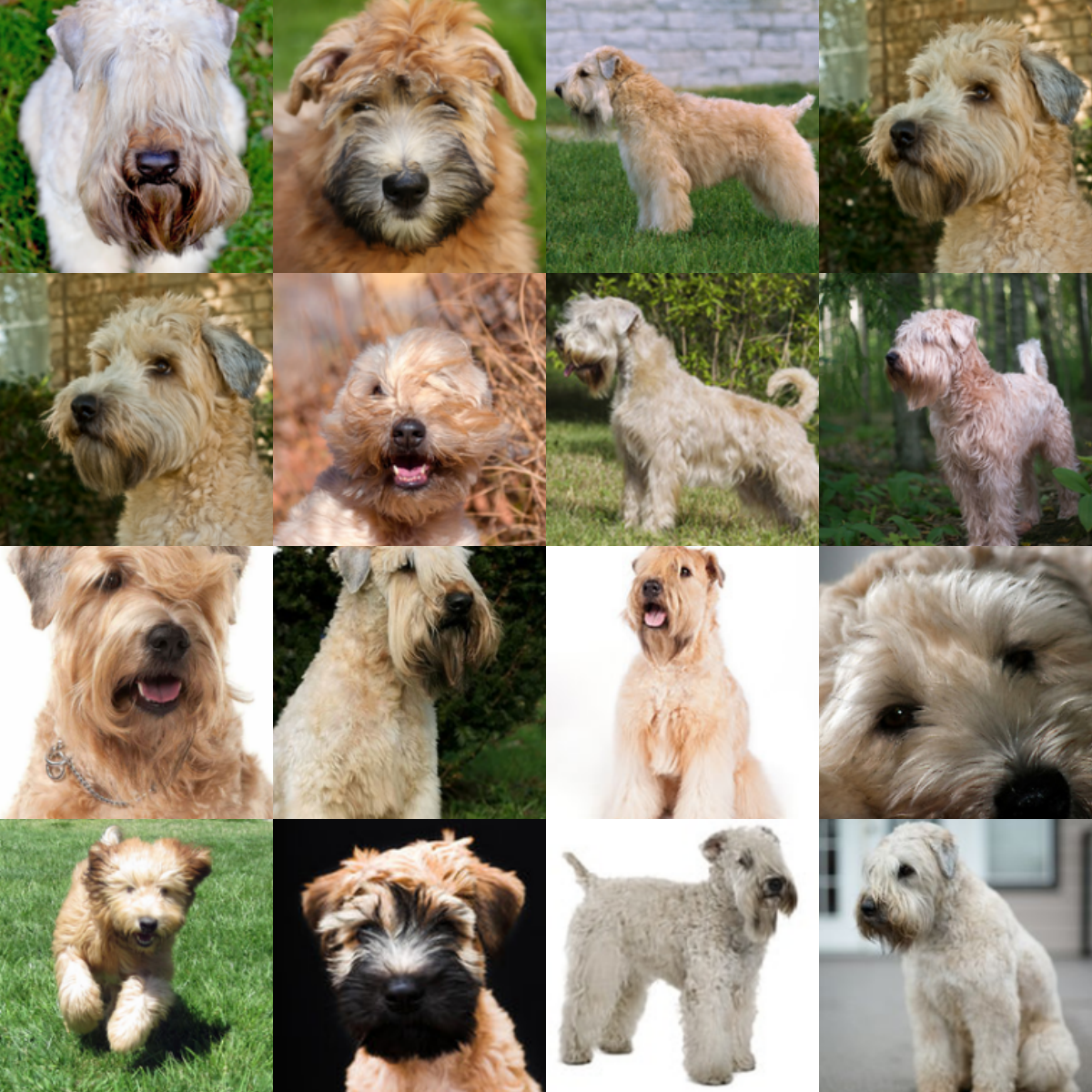}
        \caption{Sim-Score Images}
    \end{subfigure}
    
    \vspace{0.1cm}
    
    \begin{subfigure}{.3\textwidth}
        \includegraphics[width=\textwidth]{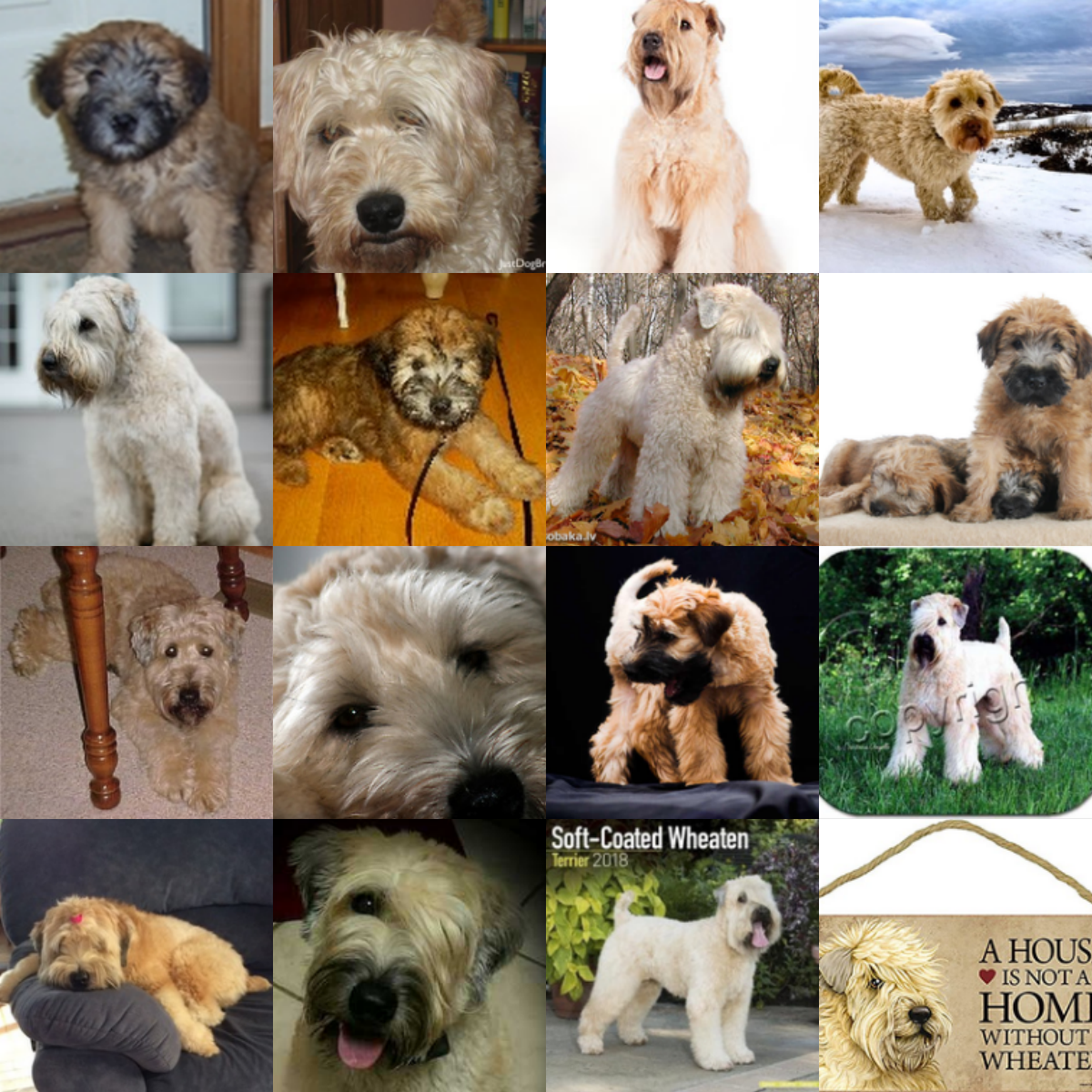}
        \caption{CLIP-Score Images}
    \end{subfigure}\quad
    \begin{subfigure}{.3\textwidth}
        \includegraphics[width=\textwidth]{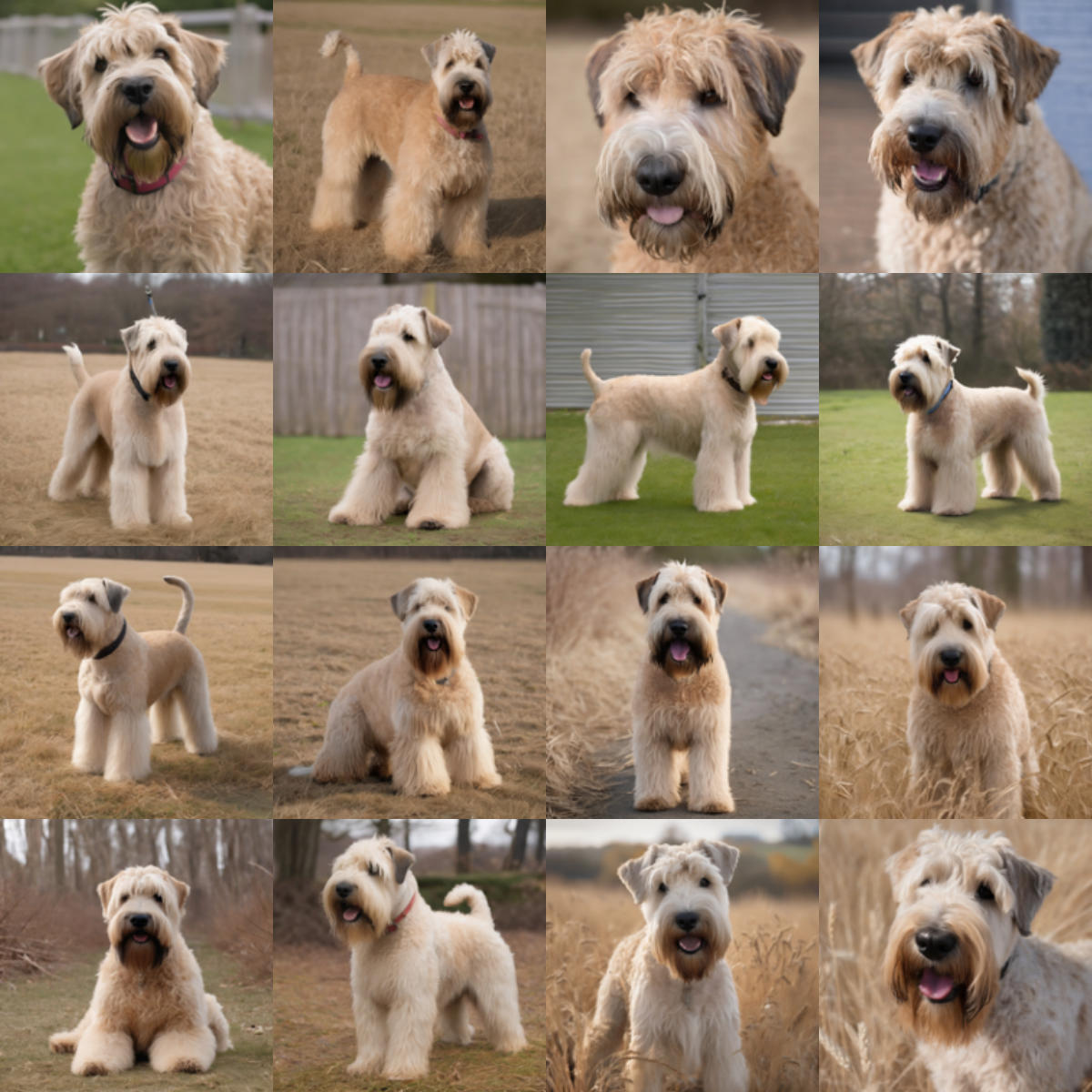}
        \caption{SDXL Images}
    \end{subfigure}\quad
    \begin{subfigure}{.3\textwidth}
        \includegraphics[width=\textwidth]{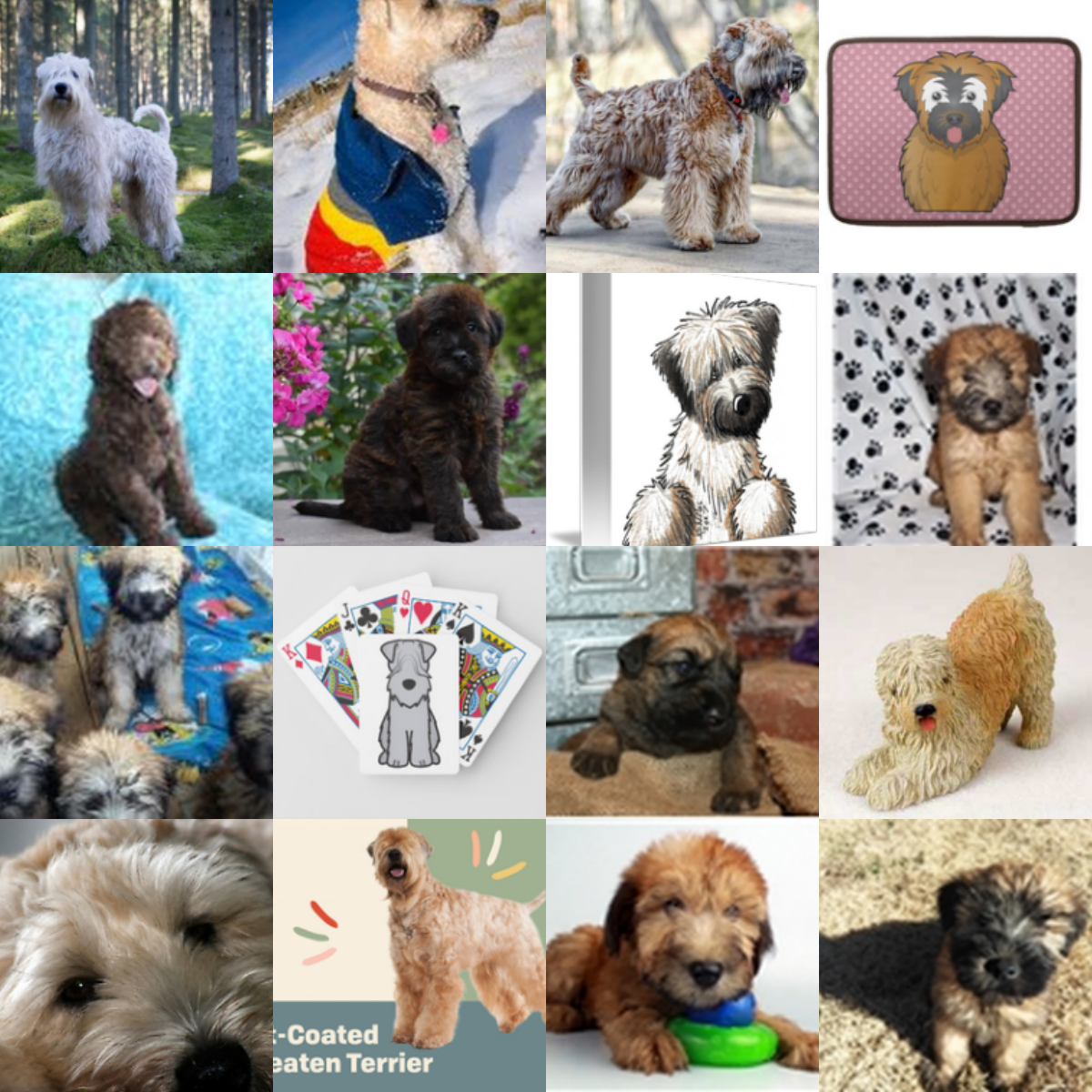}
        \caption{Random Images}
    \end{subfigure}
    
    \caption{\small \textbf{Visual Comparison of Retrieval Methods for Wheaton Terrier (Imagenet)}.} 
    \label{fig:qualitative_results_Wheaton}
    \vspace{-.1in}
\end{figure*}

\newpage
\subsection{Text Example}
See ~\Cref{fig:text_qual} for a qualitative example of COBRA-based text retrieval. 
\label{app:text_qual}
\begin{figure*}[h]
    \includegraphics[width=\textwidth]{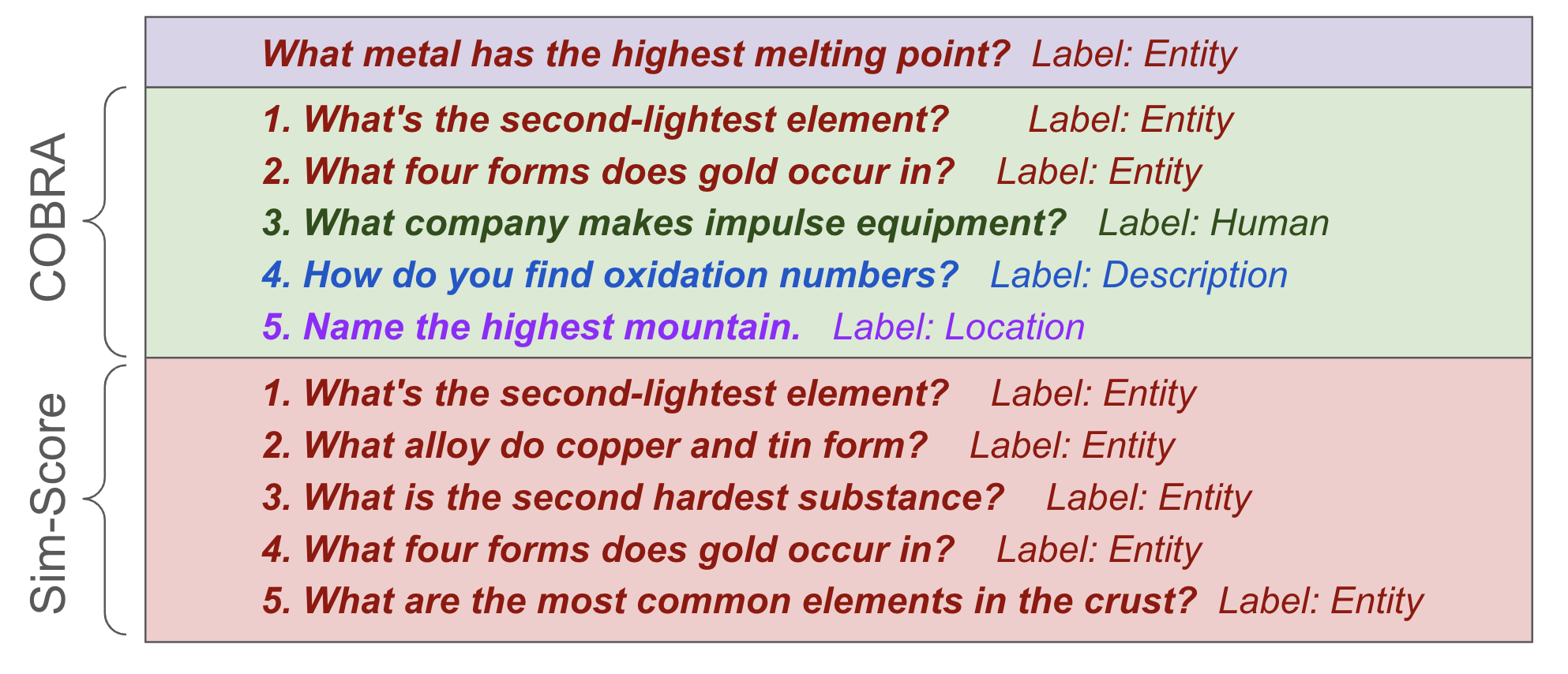}
    \caption{\small \textbf{ICL Qualitative Example} An example of COBRA vs Sim-Score on the TREC dataset~\cite{trec}}
    \label{fig:text_qual}
\end{figure*}

\newpage
\clearpage
\subsection{Flowers102 Qualitative Discussion}
\label{appen sec: qualitative flowers}
When using Flowers102 as the target dataset, we do not observe any statistically significant difference between using Sim-Score vs COBRA for retrieving samples from LAION-2B. We speculate that this is due the fact that there is very little difference in distribution between the train set and test set. Qualitatively in Fig. 12c, COBRA retrieves a very diverse set of images that do contain "Air Plant" in many different contexts and captures the naturally occurring variance in this semantic class. However since the true train and test set for Flowers-102 are very similar, the utility of diverse retrieval diminishes. 

\begin{figure*}[h!]
    \centering
    \begin{subfigure}{.45\textwidth}
        \includegraphics[width=\textwidth]{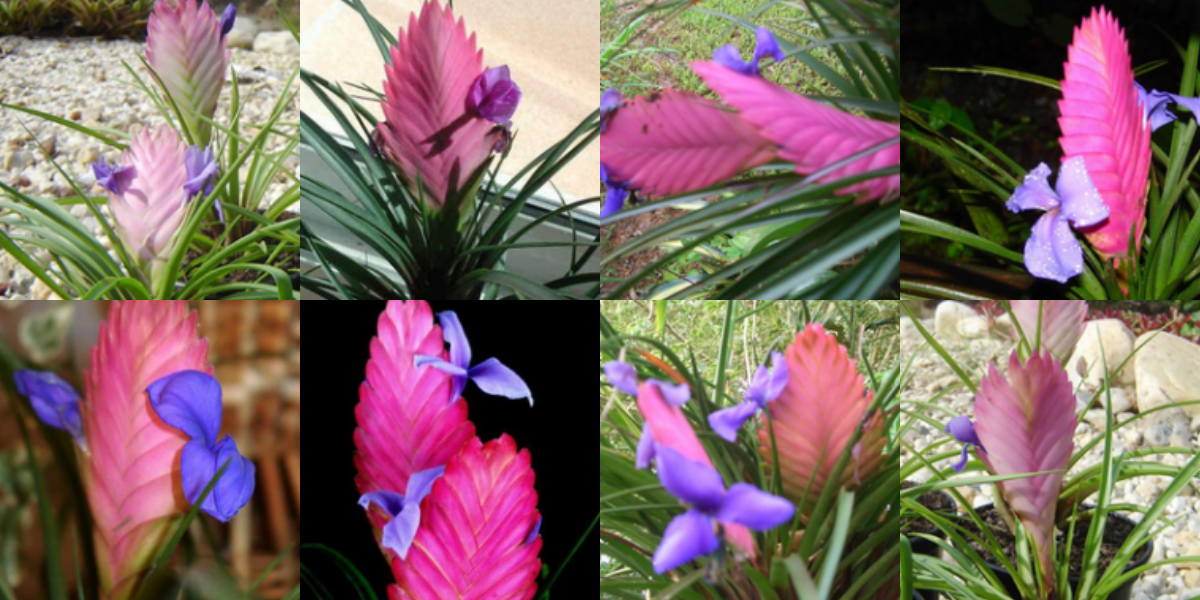}
        \caption{\textbf{Train Images}}
    \end{subfigure}\quad
    \begin{subfigure}{.45\textwidth}
        \includegraphics[width=\textwidth]{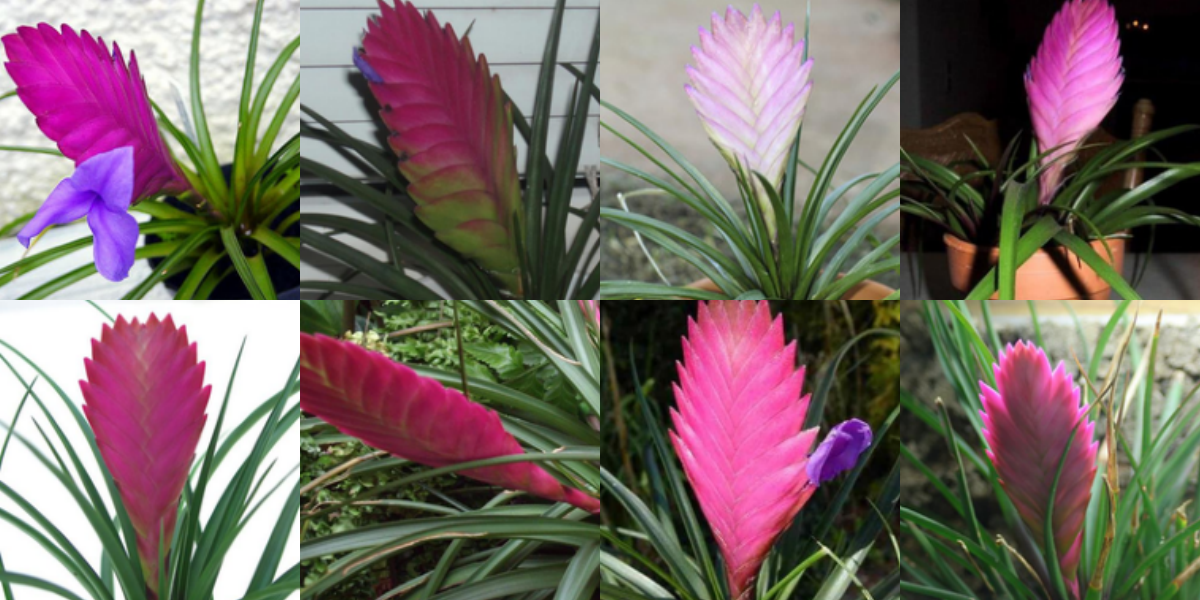}
        \caption{\textbf{Test Images}}
    \end{subfigure}\quad
    
    \vspace{0.1cm}

    \begin{subfigure}{.45\textwidth}
        \includegraphics[width=\textwidth]{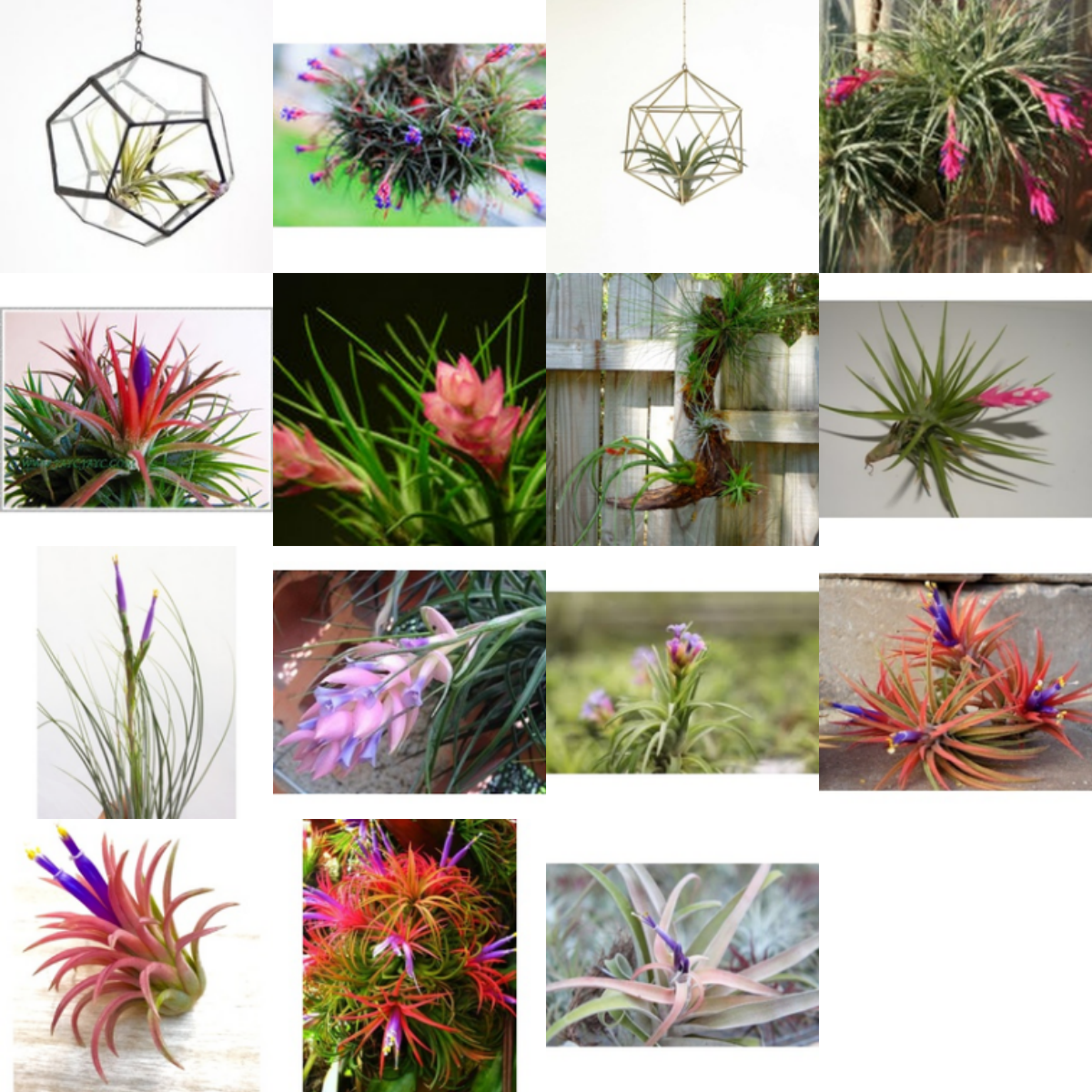}
        \caption{\textbf{COBRA Images}}
    \end{subfigure}\quad
    \begin{subfigure}{.45\textwidth}
        \includegraphics[width=\textwidth]{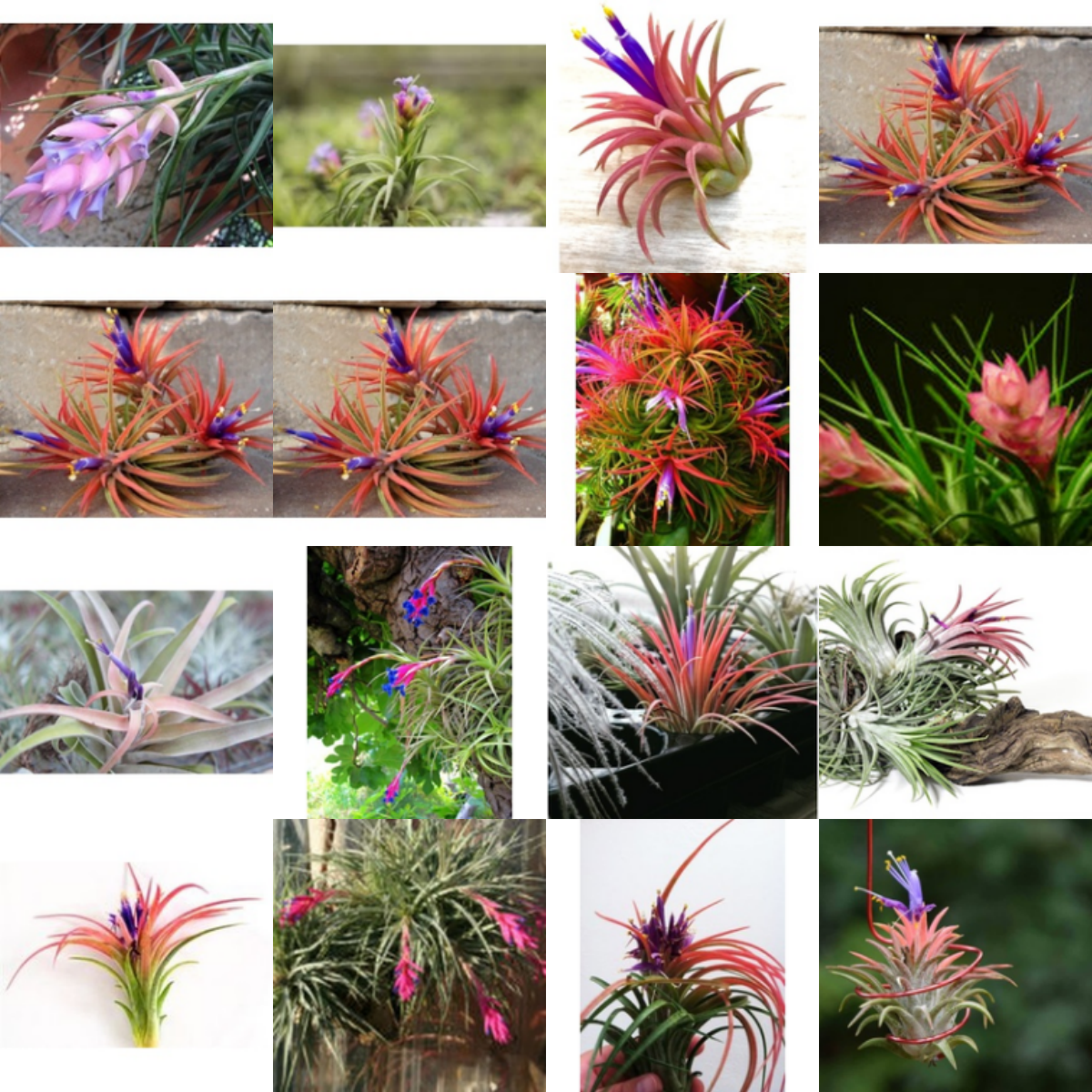}
        \caption{\textbf{Sim-Score Images}}
    \end{subfigure}\quad
    
    \caption{\small \textbf{Visual Comparison of Retrieval Methods for Class Air Plant (Flowers-102)}. We observe there is very little difference in the train and test distributions in Flowers-102. Therefore, the utility of diversity in retrieval is diminished so we do not see statistically significant differences between COBRA and Sim-Score in terms of performance. } 
    \label{fig:qualitative_results_flowers}
    \vspace{-.1in}
\end{figure*}

\newpage
\clearpage
\subsection{Other Measures of Diversity}
% \vspace{-.2in}
\paragraph{Measuring Diversity} In~\cref{fig:vendi}, we present analysis using the Vendi Score~\citep{friedman2022vendi} which creates a similarity matrix ($\Wmat$) between the datapoints using a kernel function. It then computes the score as the exponential of the entropy of eigen values of the normalized similarity matrix ($\Wmat/n$, where $n$ refers to the number of datapoints).  We find that nearest-neighbor methods (CLIP/Sim-Score) yield low diversity and that overly diverse methods (log-Det-MI, MMR)  underperform by including noisy images. We note performance is a function of both diversity and relevance, but the Vendi score only considers the former. 

\begin{figure}[h]
    \vspace{-0.05in} % adjust vertical space above as needed
    \centering
    \includegraphics[width=.4\columnwidth, height=.33\columnwidth]{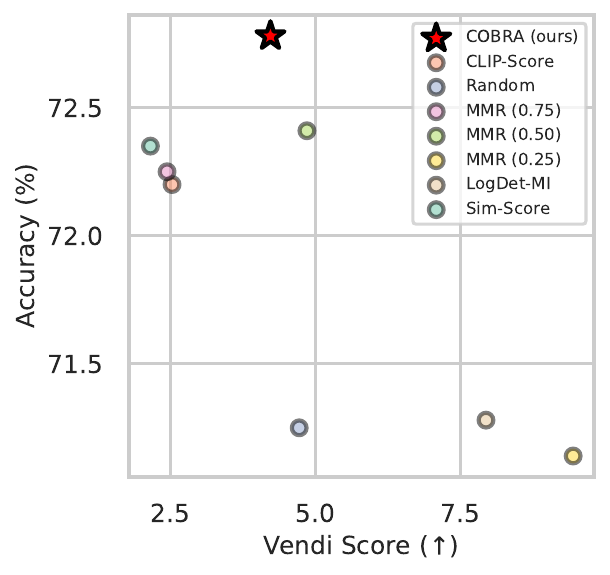}
    \caption{{Analysis using ~\cite{friedman2022vendi} w/ Tip-Adapter-F}}
    \label{fig:vendi}
\end{figure}

% \begin{figure}
%     \centering
%     \includegraphics[width=\textwidth]{sec/figures/flowers_visuals/test.pdf}
%     \caption{Test}
%     \label{fig:test-flowers}
% \end{figure}

% \begin{figure}
%     \centering
%     \includegraphics[width=\textwidth]{sec/figures/flowers_visuals/real.pdf}
%     \caption{Test}
%     \label{fig:test-flowers}
% \end{figure}

% \begin{figure}
%     \centering
%     \includegraphics[width=\textwidth]{sec/figures/flowers_visuals/cobra.pdf}
%     \caption{Test}
%     \label{fig:test-flowers}
% \end{figure}

% \begin{figure}
%     \centering
%     \includegraphics[width=\textwidth]{sec/figures/flowers_visuals/simscore.pdf}
%     \caption{Test}
%     \label{fig:test-flowers}
% \end{figure}

\end{document}